\newcolumntype{Y}{>{\centering\arraybackslash}X}
\DeclarePairedDelimiterX{\norm}[1]{\lVert}{\rVert}{#1}
\newcommand{\new}[1]{#1}
\newif\ifcomments
    \newcommand{\amits}[1]{\textcolor{red}{#1 --amitS}}
    \newcommand{\amitd}[1]{\textcolor{magenta}{#1 --amitD}}
    \newcommand{\abhinav}[1]{\textcolor{blue}{#1 --abhinav}}
    \newcommand{\amits}[1]{}
    \newcommand{\amitd}[1]{}
    \newcommand{\abhinav}[1]{}
\title{
Causal Effect Regularization: Automated Detection and Removal of Spurious \new{Correlations}
}
\author{%
  Abhinav Kumar \\
  Microsoft Research\\
  \texttt{abhinavkumar.wk@gmail.com} \\
   \And
   Amit Deshpande \\
   Microsoft Research \\
   \texttt{amitdesh@microsoft.com} \\
   \And
   Amit Sharma \\
   Microsoft Research \\
   \texttt{amshar@microsoft.com} \\
}
\begin{document}
\maketitle
\newcommand{\refalg}[1]{Alg.~\ref{#1}}
\newcommand{\refeqn}[1]{Eq.~\ref{#1}}
\newcommand{\reffig}[1]{Fig.~\ref{#1}}
\newcommand{\reftbl}[1]{Table~\ref{#1}}
\newcommand{\refsec}[1]{\S\ref{#1}}
\newcommand{\refsubsec}[1]{\S\ref{#1}}
\newcommand{\method}[1]{\mbox{\textsc{#1}}}
\newcommand{\reflemma}[1]{Lemma~\ref{#1}}
\newcommand{\reftheorem}[1]{Theorem~\ref{#1}}
\newcommand{\refdef}[1]{Def~\ref{#1}}
\newcommand{\refassm}[1]{Assm~\ref{#1}}
\newcommand{\refalgo}[1]{Alg.~\ref{#1}}
\newcommand{\refexp}[1]{Ex.~\ref{#1}}
\newcommand{\refappendix}[1]{\S\ref{#1}}
\newcommand{\refline}[1]{Line~\ref{#1}}
\newcommand{\refprop}[1]{Prop~\ref{#1}}
\newcommand{\refpara}[1]{Para~\ref{#1}}
\newcommand{\refproof}[1]{Proof~\ref{#1}}


\newcommand{\reminder}[1]{}

\newcommand{\system}{C++ }
\newcommand{\systemfull}{Checklist++ }
\newcommand{\INLP}{INLP\xspace}
\newcommand{\inlp}{Iterative-Null-Space-Removal }
\newcommand{\stageOne}{\emph{Stage 1} }
\newcommand{\stageTwo}{\emph{Stage 2} }
\newcommand{\algoOne}{DST }
\newcommand{\algoTwo}{DP }

\newcommand{\oursfull}{Automated Average Causal Effect Regularization\xspace}
\newcommand{\oursshort}{AutoACER\xspace}
\newcommand{\oursreg}{Reg\xspace}
\newcommand{\erm}{ERM\xspace}
\newcommand{\mouli}{Mouli+CAD\xspace}
\newcommand{\mouliours}{Mouli+\oursshort \xspace}
\newcommand{\irm}{IRM\xspace}
\newcommand{\jtt}{JTT\xspace}

\newcommand{\fullprop}{spurious feature\xspace}
\newcommand{\prop}{attribute\xspace}
\newcommand{\Prop}{Attribute\xspace}
\newcommand{\props}{attributes\xspace}
\newcommand{\generalization}{OOD generalization\xspace}
\newcommand{\feature}{SF\xspace}
\newcommand{\features}{SFs\xspace}
\newcommand{\sensitive}{sensitive\xspace}
\newcommand{\nsr}{NSR\xspace}
\newcommand{\ar}{AR\xspace}
\newcommand{\mnli}{MultiNLI\xspace}
\newcommand{\pan}{Twitter-PAN16\xspace}

\newcommand{\syn}{Syn-Text\xspace}
\newcommand{\synoc}{Syn-Text-Obs-Conf\xspace}
\newcommand{\synuc}{Syn-Text-Unobs-Conf\xspace}
\newcommand{\mnist}{MNIST34\xspace}
\newcommand{\civil}{Civil-Comments\xspace}
\newcommand{\civilrace}{Civil-Comments (Race)\xspace}
\newcommand{\civilgender}{Civil-Comments (Gender)\xspace}
\newcommand{\civilreligion}{Civil-Comments (Religion)\xspace}
\newcommand{\aae}{Twitter-AAE\xspace}

\newtheoremstyle{tightstyle}
  {7pt} 
  {7pt} 
  {\itshape} 
  {} 
  {\bfseries} 
  {.} 
  {1em} 
  {} 

\theoremstyle{tightstyle} \newtheorem{theorem}{Theorem}[section]
\newtheorem{claim}[theorem]{Claim}
\newtheorem{lemma}[theorem]{Lemma}
\newtheorem{corollary}{Corollary}[theorem]
\theoremstyle{tightstyle} \newtheorem{definition}{Definition}[section]
\theoremstyle{tightstyle} \newtheorem{assumption}{Assumption}[section]
\newtheorem{example}{Example}[section]
\newtheorem{proposition}{Proposition}[section]
\newtheorem*{remark}{Remark}

\begin{abstract}
    In many classification datasets, the task labels are \emph{spuriously} correlated with some input attributes. Classifiers trained on such datasets often rely on these attributes for prediction, especially when the spurious correlation is high,  and thus fail to generalize whenever there is a shift in the attributes' correlation at deployment. If we assume that the spurious attributes are known a priori, several methods have been proposed to learn a classifier that is invariant to the specified attributes.  
    However, in real-world data, information about spurious attributes is typically unavailable. Therefore, we propose a method that automatically identifies spurious attributes by estimating their causal effect on the label and then uses a regularization objective to mitigate the classifier's reliance on them. 
    Compared to recent work for identifying spurious attributes, we find that our method, \oursshort, is more accurate in  removing the attribute from the learned model, especially  when spurious correlation is high. Specifically, across synthetic, semi-synthetic, and real-world datasets,  \oursshort 
    shows significant improvement in 
     a metric used to quantify the dependence of a classifier on spurious attributes ($\Delta$Prob), while obtaining better or similar accuracy. 
    In addition, \oursshort mitigates the reliance on spurious attributes even under noisy estimation of causal effects. 
    To explain the empirical robustness of our method, we create a simple linear classification task with two sets of attributes: causal and spurious. We prove that \oursshort only requires the \textit{ranking} of  estimated causal effects to be correct across attributes 
    to select the correct classifier. 

\end{abstract}

\section{Introduction}
When trained on datasets where the task label is spuriously correlated with some input attributes, machine learning classifiers have been shown to rely on these attributes (henceforth known as \emph{spurious} \props) for prediction~\cite{AdvDomAdapGanin,McCoyMNLIArtifact,GururanganMNLINegationArtifact}. 
For example in a sentiment classification dataset that we evaluate (\aae \cite{aaeDataset}), a demographic \prop like race could be spuriously correlated with the sentiment of a sentence \cite{AdvRemYoav}. 
Classifiers trained on such datasets are at risk of failure during deployment when the correlation between the task label and the spurious \prop changes ~\cite{IRM:2020,Makar2021CausallymotivatedSR,GroupDRO,AdvDomAdapGanin}. 

\begin{figure}[t]
\centering
\includegraphics[width=0.9\linewidth,height=0.35\linewidth]{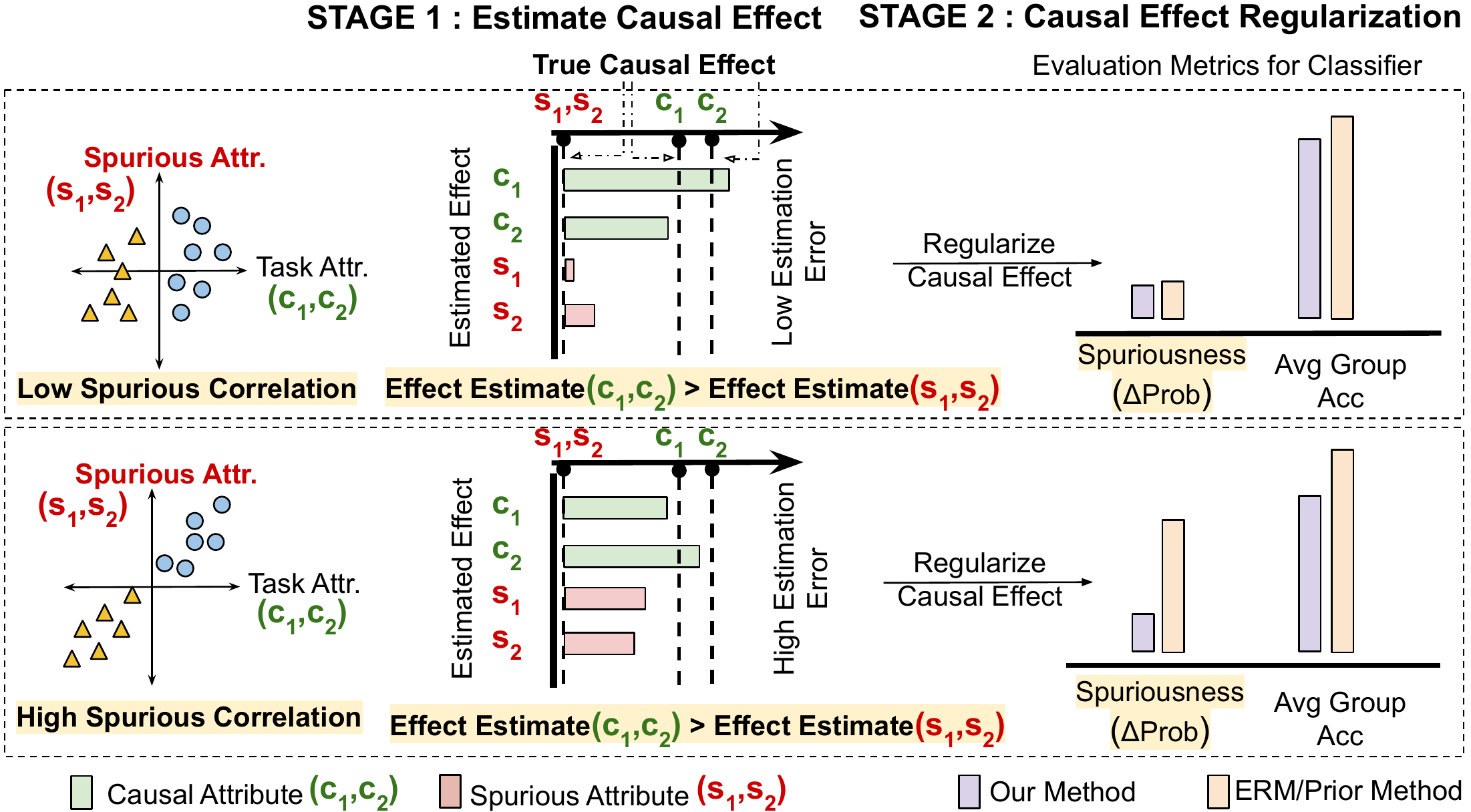}

\caption{To reduce dependence on spurious attributes in a classifier, our method \oursshort has two stages. Stage 1 (middle panel) estimates the causal effect of attributes on the task label. Stage 2 (right panel) ensures that the causal effect of each attribute on the classifier's prediction matches its estimated causal effect on the task label. \new{In the bottom panel, due to high spurious correlation the estimation of average causal effect is error-prone.} \oursshort works well \new{even in such high correlation scenarios} as shown by the Spuriousness score, $\Delta$Prob \new{(see definition in \refsec{subsec:baseline_eval_metric}).} 
}
\label{fig:summary_ours}
\end{figure}

Assuming that a set of auxiliary attributes is available at training time (but not at test time), several methods have been proposed to mitigate the classifier's reliance on the spurious \props.  The first category of methods assumes that the spurious \props are known a priori. They develop regularization \cite{victorStressTest,kaur2022modeling,Makar2021CausallymotivatedSR}, optimization \cite{GroupDRO} or data-augmentation \cite{kaushik2021explaining,kaushik2021learningthediff,joshi-he-2022-investigation} strategies to train a classifier invariant to the specified spurious \props. The second category of methods relaxes the assumption of known spurious attributes by automatically identifying the spurious attributes and  regularizing the classifier to be invariant to them. To identify spurious attributes, these methods impose assumptions on the type of spurious correlation they deal with. For example, they may assume that \props are modified in input via symmetry transformation \cite{mouli2022asymmetry} or a group transformation \cite{mouli2021neural},  or the data-generating process follows a specific graph structure such as anti-causal~\cite{maggie2022multishorcut}. 
However, all these methods assume a strict dichotomy---every \prop is either spurious or not---which makes them susceptible to imposing incorrect invariance whenever there is a mistake in identifying the spurious \prop.


In this paper, we propose a method that regularizes the effect of attributes on a classifier proportional to their \textit{average causal effect} on the task label, instead of strictly binning them as spurious (or not) using an arbitrary threshold.  We propose a two-step method that
\begin{enumerate*}[label=(\textbf{\arabic*})]
    \item uses an effect estimation algorithm to find the causal effect of a given \prop;
    \item regularizes the classifier proportional to the estimated causal effect for each \prop estimated in the first step.  
\end{enumerate*}
We call the method\textit{ \oursshort}, Causal Effect Regularization for automated removal of spurious correlations. 
 If the estimated causal effects are correct, \oursshort is perfect in removing a classifier's reliance on spurious \props (i.e., attributes with causal effect close to zero), while retaining the classifier's acccuracy. But in practice it is difficult to have good estimates of the causal effect due to issues like non-identifiability, noise in the labels \cite{gretton_te_error_in_labels}, or finite sample estimation error \cite{te_finite_sample}. 
Under such conditions, we find that \oursshort is more robust than past work~\cite{mouli2022asymmetry} since it does not group the \props as spurious or causal and regularizes the classifier proportional to the estimated effect. 

We analyze our method both theoretically and empirically.  First, we provide the conditions under which causal effect is identified. An implication of our analysis is that causal effect is identified whenever the relationship between the attribute and label is fully mediated through the observed input (e.g, text or image). This is often the case in real-world datasets where human labellers use only the input to generate the label.  Even if the causal effect is not identified, we show that \oursshort is robust to errors in effect estimation. 
Theoretically, on a simple classification setup with two sets of \props ---causal and spurious---we prove that only the  correct ranking of the causal effect estimates is needed to learn a classifier invariant to spurious \props. For the general case with multiple disentangled high-dimensional causal and spurious \props, under the same condition on correct causal effect rankings, we prove that the desired classifier (that does not use spurious \props) is preferred by \oursshort over the baseline ERM classifier. 

To confirm our result empirically, we use three datasets that introduce different levels of difficulty in estimating the causal effect of \props:
\begin{enumerate*}[label=(\textbf{\arabic*})]
    \item \syn: a synthetic dataset that includes an unobserved confounding variable and violates the identifiability conditions of causal effect estimators;
    \item \mnist: a semi-synthetic dataset from \cite{mouli2021neural}, where we introduce noise in the  image labels;
    \item \aae:, which is a real-world text dataset. 
\end{enumerate*}
To evaluate the robustness of methods to spurious correlation, we create multiple versions of each dataset with varying levels of correlation with the spurious \prop, thereby increasing the difficulty to estimate the causal effect. Even with a noisy estimate of the causal effect of \props, our method shows significant improvement over previous algorithms in reducing the dependence of a classifier on spurious \props, especially in the high correlation regime. 
Our contributions include,
\begin{itemize}[leftmargin=*,topsep=0pt,noitemsep]{}
    \item Causal effect identifiability guarantee for two realistic causal data-generating processes, which is the first step to automatically distinguish between causal and spurious \props. 

    \item A method, \oursshort, to train a classifier that obeys the causal effect of attributes on the label. Under a \new{simplified} setting, we show that it requires only the correct ranking of attributes. 

    \item Evaluation on three datasets showing that \oursshort is effective at reducing spurious correlation even under noisy, high correlation, or confounded settings. 
\end{itemize}


\section{OOD Generalization under Spurious Correlation: Problem Statement}
\label{sec:preliminaries}

For a classification task, let $(\bm{x}^{i},y^{i},\bm{a}^{i})_{i=1}^{n}\sim \mathcal{P}_{m}$ be the set of example samples from the data distribution $\mathcal{P}$, where $\bm{x}^{i}\in \mathbf{X}$ are the input features, $y^{i}\in Y$ are the  task labels and $\bm{a}^{i}=(a^{i}_1,\ldots,a^{i}_k)$, $a^{i}_{j}\in A_{j}$ are the auxiliary \props, henceforth known as \emph{attributes} for the example $\bm{x}^{i}$. 
We use $``\bm{x}_{a_{j}}"$ to denote an example $``\bm{x}"$ to have \prop $a_{j}\in A_{j}$. These attributes are only observed during the training time.
The goal of the classification task referred to as \emph{ task} henceforth, 
is to predict the label $y^{i}$ from the given input $\bm{x}^{i}$. The  task classifier can be written as $c(h(\bm{x}))$ where $h:\bm{X}\rightarrow \bm{Z}$ is an encoder mapping the input $\bm{\bm{x}}$ to a latent representation $\bm{z}:=h(\bm{x})$ and  $c:\bm{Z}\rightarrow Y$ is the classifier on top of  the representation~$\bm{Z}$. 


\abhinav{TODO (later): We should have defined the family of distribution over which we want to generalize i.e. where the marginals with spurious attribute changes and shown that the optimal predictor only uses the causal attribute. Currently, we are just proposing that without proof.}

\noindent \textbf{Generalization to shifts in spurious correlation. } 
We are interested in the setting where the data-generating process is \textit{causal} \cite{causal_anticausal_learning} i.e. there is a certain set of \emph{causal} \props that affect the task label $y$. Upon changing these causal \props, the task label changes. Apart from these \props, there could be other \props defining the overall data-generating process (see \reffig{fig:dgp} for examples). 
A \prop $c \in \mathcal{C}$ is called spurious \prop if it is correlated with the task label in the training dataset and thus could be used by a classifier trained to predict the task label, as a \textit{shortcut}~\cite{maggie2022multishorcut,Makar2021CausallymotivatedSR}. But their correlation could change at the time of deployment, affecting the classifier's accuracy. 

Using the attributes available at training time, our goal is to train a classifier $c(h(x))$ that is robust to shift in correlation between the task label and the spurious \props. We use the fact that changing spurious \props will not lead to a change in the task label i.e. they have zero causal effect on the task label $y$. Hence, we use the \emph{estimated causal effect} of an \prop to automatically identify its degree of spuriousness. For a \emph{spurious attribute}, its true causal effect on the label  is zero and hence the goal is to ensure its causal effect on the classifier's prediction is also zero. More generally, we would like to regularize the effect of each attribute on the classifier's prediction to its causal effect on the label. In other words, unlike existing methods that aim to discover a subset of attributes  that are spurious~\cite{mouli2021neural,mouli2022asymmetry,maggie2022multishorcut}, we aim to estimate the causal effect of each attribute and match it. Since our method avoids hard  categorization of attributes that downstream regularizers need to follow, we show that is a more robust way of handling estimation errors when spurious correlation may be high. 

\section{Causal Effect Regularization: Minimizing reliance on spurious correlation}
\label{sec:general_method}
We now describe our method to train a classifier that generalizes to shift in attribute-label correlation by automatically identifying and imposing invariance w.r.t to spurious \props. 
In \refsec{subsec:ce_identifiability}, we provide sufficient conditions for  identifying the causal effect, a crucial first step to detect the spurious \props. 
Next, in \refsec{subsec:method_desc} and \refsec{subsec:theoretical_analysis_robustness}, we present the \oursshort method and its theoretical analysis. 

\subsection{Causal Effect Identification}
\label{subsec:ce_identifiability}

\new{Since our method relies on estimating the causal effect of attributes on the task label, we first show that the attributes' causal effect
is \textit{identified} for many commonly ocurring datasets.}
The identifiability of  the causal effect of an \prop depends on the data-generating process (DGP).  
Below we present two illustrative DGPs (DGP-1 and DGP-2 from \reffig{fig:dgp}) under which the causal effect is identified. \new{\emph{DGP-1} refers to the case where the task label is generated based on the observed input $X$ which in turn may be caused by observed and unobserved attributes.} \emph{DGP-1} is common in many real-world datasets where the task labels are annotated based on the observed input $X$ either automatically using some deterministic function or using human annotators \cite{kaushik2021explaining,rajpurkar-etal-2016-squad}. Thus \emph{DGP-1} is applicable for all the settings where the input $X$ has all the sufficient information for creating the label. 
\citet{mouli2022asymmetry} consider another DGP where a set of \emph{transformations}  (like rotation or vertical-flips in image) over a base (unobserved) image $x$ generates the input $X$. 
We adapt their graph  to our setting where we include each observed \prop as a node in the graph, along with hidden confounders. \new{Depending on whether they are spurious or not, these \props may cause the task label $Y$.  DGP-2 and DGP-3 represent two such adaptations from their work, with the difference that DGP-3 also allows unobserved \props to cause the task label.}

Generally for identifying the causal effect one assumes that we only have access to observational data ($\mathcal{P}$). But here we also assume access to the interventional distribution for the input, $P(X|do(A))$ where the \prop $A$ is set to a particular value, as in \cite{mouli2021neural}. This is commonly available in vision datasets via data augmentation strategies over attributes like rotation or brightness, and also in text datasets using generative language models \cite{gpt3,polyjuice}. Having access to interventional distribution $P(X|do(A))$ could help us identify the causal effect in certain cases where observational data ($\mathcal{P}$) alone cannot as we see below. 

\begin{restatable}{proposition}{idenprop}
\label{prop:identifiability}
    Let \text{DGP-1} and \text{DGP-2} in \reffig{fig:dgp} be the causal graphs of two data-generating processes. Let $A,C,S$ be different \props, $X$ be the observed input, $Y$ be the observed task label and  $U$ be the unobserved confounding variable. In DGP-2, $x$ is the (unobserved) core input feature that affects the label $Y$. Then:
    \begin{enumerate}[leftmargin=*,topsep=0pt]{}
        \item \textbf{DGP-1 Causal Effect Identifiability:} Given the interventional distribution $P(X|do(A))$, the causal effect of the \prop $A$ on task label $Y$ is identifiable using  observed data distribution.  
        
        \item \textbf{DGP-2 Causal Effect Identifiability:} Let $C$ be a set of observed \props that causally affect the task label $Y$ (unknown to us), $S$ be the set of observed \props spuriously correlated with task label $Y$ (again unknown to us), and let $\mathcal{V}=C\cup S$ be the given set of all the \props. Then if all the causal \props are observed then the causal effect of all the \props in $V$ can be identified using observational data distribution alone.

    \end{enumerate}
\end{restatable}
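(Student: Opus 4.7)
The plan is to handle the two DGPs separately and reduce each identifiability claim to a single application of do-calculus, after reading off the relevant structural independencies from \reffig{fig:dgp}.

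For Part~1 (DGP-1), the key structural facts are that (i) every directed path from $A$ to $Y$ is intercepted by the observed input $X$, and (ii) the unobserved confounder $U$ has no edge into $Y$. So conditioning on $X$ blocks every path between $A$ and $Y$ in the graph cut at $A$. My intended derivation is the mediator identity
\begin{equation*}
P(Y \mid do(A)) \;=\; \sum_{x} P(Y \mid X=x)\, P(X \mid do(A)),
\end{equation*}
where the first factor is an observational conditional---its identifiability follows because $X$ has no unblocked back-door to $Y$ in this DGP---and the second factor is the assumed interventional distribution over $X$. I would justify the two reductions by applying do-calculus Rules~2 and~3 in turn. Note that $P(X \mid do(A))$ cannot in general be replaced by $P(X \mid A)$, because the back-door $A \leftarrow U \to X$ is unblocked; this is precisely why the proposition assumes access to the interventional distribution over $X$.

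For Part~2 (DGP-2), the argument uses back-door adjustment on the observed attributes. The structural content I would extract from \reffig{fig:dgp} is that every attribute in $S$ is non-causal for $Y$, every direct cause of $Y$ among the \props is contained in $C$, and the unobserved core $x$ does not open an unblocked back-door once we condition on the observed causal attributes. Taking $Z = C\setminus\{A\}$ when $A \in C$, and $Z = C$ when $A \in S$, the set $Z$ satisfies the back-door criterion with respect to $(A,Y)$, giving
\begin{equation*}
P(Y \mid do(A)) \;=\; \sum_{z} P(Y \mid A, Z=z)\, P(Z=z),
\end{equation*}
which is a function of the observational distribution alone. A consistency check is that for $A \in S$ the right-hand side collapses to $P(Y)$, which is what it should be since truly spurious attributes have zero causal effect on $Y$.

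The main obstacle is not the calculational step but the careful graph-reading: I must verify that the adjustment/mediator sets proposed above actually satisfy the appropriate d-separation conditions in the graphs of \reffig{fig:dgp}, which in turn depends on exactly which back-door paths exist between the attributes and the hidden variables $U$ and $x$. I would handle this by first listing, for each DGP, all active paths from an arbitrary attribute $A$ to $Y$ in the mutilated graph, and then checking that the proposed conditioning set (the interventional substitution for DGP-1, the observed causal attributes for DGP-2) d-separates them. Once the d-separation check is in place, the identification formulas above follow immediately from the adjustment and front-door theorems.
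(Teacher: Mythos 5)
Your Part~1 is essentially the paper's argument: the paper derives the same mediator identity $P(Y\mid do(A))=\sum_{X}P(Y\mid X)\,P(X\mid do(A))$ by truncated factorization of $P(Y,X,U\mid do(A))$ and marginalizing out $U$, whereas you invoke do-calculus rules; the identity, the role of the assumed $P(X\mid do(A))$, and the observation that $P(X\mid do(A))\neq P(X\mid A)$ because of $U$ all match.

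For Part~2 there is a substantive issue with your choice of adjustment set. You take $Z=C\setminus\{A\}$ when $A\in C$ and $Z=C$ when $A\in S$, i.e., your estimand depends on knowing which attributes are causal. But the proposition explicitly stipulates that the partition of $\mathcal{V}=C\cup S$ into causal and spurious attributes is \emph{unknown}, and the point of the result (stated in the paper's proof sketch) is that a \emph{single} observational estimand works for every attribute without knowing its status --- this is what Stage~1 of the method actually computes. The paper adjusts on all remaining observed attributes, $P(Y\mid do(A))=\sum_{\mathcal{V}\setminus A}P(\mathcal{V}\setminus A)\,P(Y\mid \mathcal{V})$, and verifies separately for $A\in C$ (using $P(S\mid do(C))=P(S)$ and $Y\perp C\mid S$ in the edge-deleted graph) and for $A\in S$ (using $Y\perp S\mid C$) that this one formula is correct in both cases. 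Your d-separation claims and back-door formulas are themselves valid in DGP-2 (and your consistency check that the estimand collapses to $P(Y)$ for spurious $A$ is correct, since $Y\perp S\mid C$), so you establish identifiability in the abstract sense that the effect is some functional of the observational law; but as written the construction does not produce an estimand computable by an analyst who does not know $C$ versus $S$, which is the content the proposition is meant to deliver. The fix is small: show that the larger set $\mathcal{V}\setminus\{A\}$ is also a valid back-door set for every $A\in\mathcal{V}$ (conditioning on the extra spurious attributes opens no collider paths in DGP-2), which recovers the paper's partition-free formula.
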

\begin{proof}[Proof Sketch]
\begin{enumerate*}[label=\textbf{(\arabic*)}]
    \item We show that we can identify the interventional distribution $P(Y|do(A))$ which is needed to estimate the causal effect of $A$ on $Y$ using the given observational distribution $P(Y|X)$ and interventional distribution $P(X|do(A))$. 
    \item For both causal \prop $C$ and spurious \prop $S$, we show that we can identify the interventional distribution $P(Y|do(S))$ or $P(Y|do(C))$ using purely observational data using the same identity without the need to know whether the variable is causal or spurious a priori. 
 
\end{enumerate*}
See \refappendix{appendix:identifiability_proof} for proof.
\end{proof}

\new{Note that \textit{DGP-1} and \textit{DGP-2} are not exhaustive. We provided them simply as illustrative examples to show that there exist DGPs, corresponding to commonly occurring datasets,  where the attributes' effect is identified. However, the effect is not identified in \textit{DGP-3} and possibly other DGPs. }
\new{Moreover, in practice, the underlying DGP for a given task may not be known or mis-specified.} 
Hence, in \refsec{subsec:method_desc}, we design our method \oursshort such that it does not depend on knowledge of the true DGP. 
\new{When the effect is identified, our method is theoretically guaranteed  to work well. Empirically, as we show in Section~\ref{sec:empresult}, our method also works to remove the spurious correlation  for datasets corresponding to DGP-3 where the causal effect of A on Y is not identified. }
In comparison, prior methods like \citet{mouli2022asymmetry} provably fail under DGP-3 (see ~\refappendix{appendix:mouli_dgp_failure} for the proof that their method would fail to detect the spurious attribute).



\begin{figure}[t]
\centering
\includegraphics[height=0.17\linewidth]{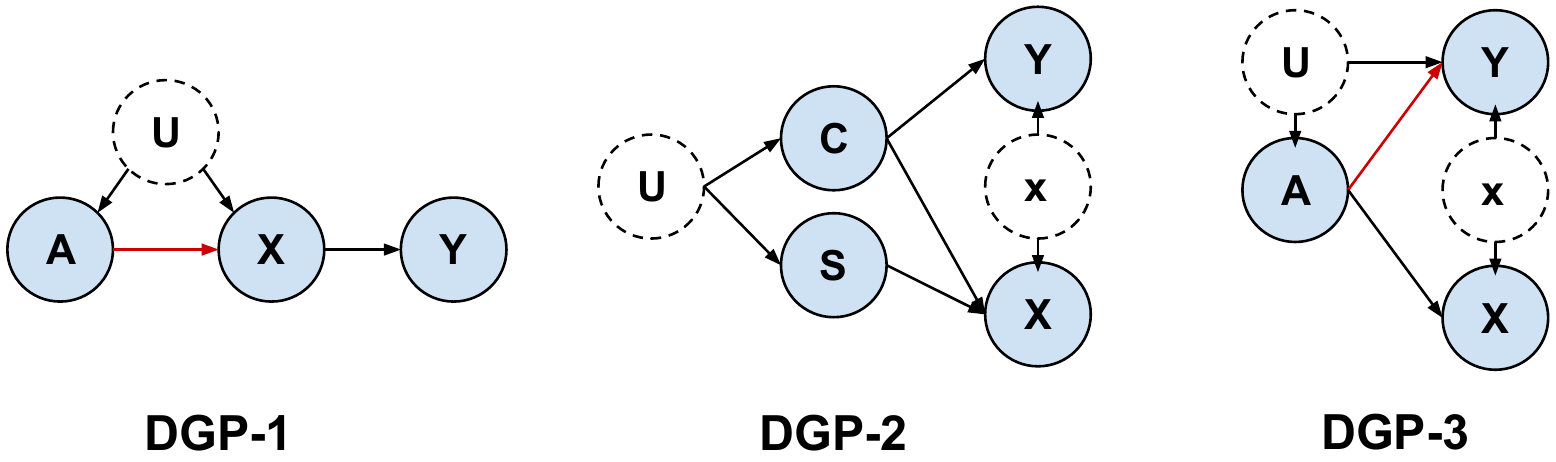}

\caption{Causal graphs showing different data-generating processes. Shaded nodes denote observed variables; the red arrow denotes an unknown target relationship. \new{ Node X denotes the observed input features and node Y denotes the task label in all the DGPs. Node A refers to \props; could denote either causal or spurious \prop. If A is a causal \prop then the red arrow will be present in the underlying true graph otherwise not. Node U denotes the unobserved confounding variable which could be the potential reason for spurious correlation between an \prop (A or S) and Y. The node C in DGP-2 denotes causal \props and node S denotes spuriously correlated \props. The node x in DGP-2 and DGP-3 denotes the unobserved causal \prop that creates the label Y along with other observed causal \props (C or A).} In the first two graphs, the causal effect of attributes (A and C, S respectively) on Y is identified (see \refprop{prop:identifiability}).  
}
\label{fig:dgp}
\end{figure}

\subsection{\oursshort: Causal Effect Regularization for predictive models}
\label{subsec:method_desc}
Our proposed method proceeds in two stages. In the first stage, it uses a causal effect estimator to identify the causal effect of an \prop on the task label.  Then in the second stage, it regularizes the classifier proportional to the causal effect of every \prop.

\noindent \textbf{Stage 1: Causal Effect Estimation.}
Given a set of \props $\mathcal{A}=\{a_{1},\ldots,a_{k}\}$, the goal of this step is to estimate the causal effect ($TE_{\bm{a}_{i}}$)) of every \prop $a\in \mathcal{A}$. The causal effect of a \prop on the label $Y$ is defined as the expected change in the label $Y$ when we change the \prop (see \refsec{appendix:math_preliminaries} for formal definition). 
If the data-generating process of the task is one of DGP-1 or DGP-2, our \refprop{prop:identifiability} gives us sufficient conditions needed to identify the causal effect. Then one can use appropriate causal effect estimators that work under those conditions or build their own estimators using the closed form causal effect estimand given in the proof of \refprop{prop:identifiability}. 
However, to build a general method, we propose a technique that does not assume knowledge of the DGP. Specifically, we use a conditioning-based estimator (based on the backdoor criterion~\cite{pearl_causality_2009}) and assume the observed variables (X) as the backdoor variables. \amits{need to say, under what DGPs is this backdoor criterion expected to hold?} \abhinav{Actually, none of the DGPs we have considered has X as a backdoor set. The backdoor criterion says that X should not be the descendent of the treatment variable (A, C, or S here). Maybe this is the reason why we didn't observe improvement in the causal effect estimators using either of the estimators.} 
There is a rich literature on estimating causal effects for high dimensional data~\cite{imbens2015causal,dragonnet,riesznet}. 
We use a deep learning-based estimator from \citet{riesznet} to estimate the causal effect (henceforth called \emph{Riesz}). Given the treatment along with the rest of the covariates, it learns a common representation that approximates backdoor adjustment to estimate the causal effect (see \refsec{appendix:expt_setup} for details). 
Even if the causal effect in the relevant dataset is identifiable, we might get a noisy estimate of the effect due to finite sample error or noise in the labels. Later in \refsec{subsec:theoretical_analysis_robustness} and \refsec{sec:empresult} we will show that our method is robust to error in the causal effect estimate of \props both theoretically and empirically. Finally, as a baseline effect estimator, we use the direct effect estimator (henceforth called \emph{Direct}) defined as $\mathbb{E}_{X} ( \mathbb{E}(Y|X,a=1) - \mathbb{E}(Y|X,a=0) )$ for \prop $a\in \mathcal{A}$ that has limited identifiability guarantees (see \refsec{appendix:expt_setup} for details). 

\noindent \textbf{Stage2: Regularization.}
Here our method regularizes the model prediction with the estimate of causal effect $\mathcal{}=\{TE_{a_1},\ldots,TE_{a_k}\}$ of each \prop $\mathcal{A}=\{a_1,\ldots,a_k\}$. The loss objective is,
\begin{equation}
\label{eq:general_objective}
     \mathcal{L}_{\oursshort} \coloneqq \mathbb{E}_{(\bm{x},y) \sim \mathcal{P}} \Big{[}\mathcal{L}_{task}\Big{(}c(h(\bm{x})),y\Big{)}  + R \cdot \mathcal{L}_{Reg}\Big{(}\bm{x},y\Big{)} \Big{]}
\end{equation}
where $R$ is the regularization strength hyperparameter, $\mathcal{P}$ is the training data distribution (\refsec{sec:preliminaries}). The first term $\mathcal{L}_{task}(c(h(\bm{x})),y)$ can be any training objective e.g. cross-entropy or max-margin loss for training the encoder $h$ and  task classifier $c$ jointly to predict the  task label $y$ given input $\bm{x}$. Our regularization loss term $\mathcal{L}_{\oursreg}$ aims to regularize the model such that the causal effect of an attribute on the classifier's output matches the estimated causal effect of the \prop $a_{i}$ on the label. Formally,
\begin{equation}
\label{eq:lreg_practice}
    \mathcal{L}_{\oursreg} \coloneqq \sum_{i=\{1,2,\ldots,|\mathcal{A}|\}} \mathbb{E}_{(\bm{x}_{a_{i}'}) \sim \mathcal{Q}(\bm{x}_{a_{i}})}\Big{[} \Big{(}c(h(\bm{x})_{a_{i}}) - c(h(\bm{x}_{a_{i}}')) \Big{)} - TE_{a_{i}} \Big{]}^{2}
\end{equation}
where $\bm{x}_{a_{i}'} \sim \mathcal{Q}(\bm{x}_{c_{i}})$ be a sample from counterfactual distribution $\mathcal{Q} \coloneqq  \mathbb{P}(\bm{x}_{c_{i}'} | \bm{x}_{c_{i}})$ and $\bm{x}_{c_{i}'}$ is the input had the \prop in input $\bm{x}_{c_{i}}$ been $c_{i}'$.
\abhinav{maybe we can get intervetionsion from counterfactual}


\subsection{Robustness of \oursshort with noise in the causal effect estimates}
\label{subsec:theoretical_analysis_robustness}
Our proposed regularization method relies primarily on the estimates of the causal effect of any \prop~$c_{i}$ to regularize the model. Thus, it becomes important to study the efficacy of our method under error or noise in causal effect estimation, \new{which is expected in real-world tasks due to finite sample issues, incorrect choice of causal effect estimators, or due to properties of the unknown underlying DGP}. We consider a simple setup to theoretically analyze the condition under which our method will train a better classifier than the standard ERM max-margin objective (following previous work \cite{our_probing_paper,MaxMarginJustificationJi,FailureModeOODGen}) in terms of generalization to the spurious correlations. 
Let $\mathcal{A}=\{ca_{1},\ldots,ca_{K},sp_{1},\ldots,sp_{J}\}$ be the set of available \props where $ca_{k}$ are causal \props and $sp_{j}$ are the spurious \props.
For simplicity, we assume that the representation encoder mapping $\bm{X}$ to $\bm{Z}$ i.e $h:\bm{X} \rightarrow \bm{Z}$ is frozen and the final \emph{task} classifier ($c$) is a linear operation over the representation. Following \citet{our_probing_paper}, we also assume that $\bm{z}$ is a disentangled representation w.r.t. the causal and spurious attributes, i.e., the representation vector $\bm{z}$ can be divided into two subsets, features corresponding to the causal and spurious attribute respectively. Thus the task classifier takes the form $c(\bm{z}) = \sum_{k=1}^{K} \bm{w}_{ca_{k}} \cdot \bm{z}_{ca_{k}} + \sum_{j=1}^{J} \bm{w}_{sp_{j}} \cdot \bm{z}_{sp_{j}}$. 
\new{Note that we make the disentanglement assumption only for creating a simple setup so that we can theoretically study the effectiveness of our method in the presence of noisy estimates of the causal effects. Our method does not require this assumption.}

\abhinav{low priority: change ca --> c and sp-->s. }

Let  $\mathcal{L}_{task}(\theta;(\bm{x},y_{m}))$ be the max-margin objective (see \refappendix{appendix:math_preliminaries} for details) used to train the  task classifier $``c"$ to predict the  task label $y_{m}$ given the \emph{frozen} latent space representation $\bm{z}$.
Let the  task label $y$ and the \prop $``a"$ where $a\in \mathcal{A}$ be binary taking value from $\{0,1\}$. The causal effect of an \prop $a\in \mathcal{A}$ on the  task label $Y$ is given by  $TE_{a} = \mathbb{E}(Y|do(a)=1) - \mathbb{E}(Y|do(a)=0) = P(y=1|do(a)=1) - P(y=1|do(a)=0)$ (see \refappendix{appendix:math_preliminaries} for definition). Thus the value of the causal effect is bounded s.t. $\hat{TE}_{a}\in[-1,1]$ where the ground truth causal effect of spuriously correlated \prop $``sp"$ is  $TE_{sp}=0$ and for causal \prop $``ca"$ is $|TE_{ca}|>0$. 
Given that we assume a linear model, we instantiate a simpler form of the regularization term $\mathcal{L}_{\oursreg}$ for the training objective given in \refeqn{eq:general_objective}:

\begin{equation} \label{def:cereg_theory}
    \mathcal{L}_{\oursreg} \coloneqq   \sum_{k=\{1,2,\ldots,K\}} \lambda_{ca_{k}} \norm{\bm{w}_{ca_{k}}}_{p} + \sum_{j=\{1,2,\ldots,J\}} \lambda_{sp_{j}} \norm{\bm{w}_{sp_{j}}}_{p} 
\end{equation}
where $\lambda_{ca_{k}}  \coloneqq 1/{|TE_{ca_{k}}|}$ and $\lambda_{sp_{j}} \coloneqq 1/{|TE_{sp_{j}}|}$ are the regularization strength for the causal and spurious features $\bm{z}_{ca_{k}}$ and $\bm{z}_{sp_{j}}$ respectively, $|\cdot|$ is the absolute value operator and $\norm{\bm{w}}_{p}$ is the $L_{p}$ norm of the vector $\bm{w}$.
Since $|TE_{(\cdot)}|\in[0,1]$, we have $\lambda_{(\cdot)} = 1/|TE_{(\cdot)}|\geq 1$. In practice, we only have access to the empirical estimate of the causal effect $TE_{(\cdot)}$ denoted as $\hat{TE}_{(\cdot)}$ and the regularization coefficient becomes $\lambda_{(\cdot)} = 1/ |\hat{TE}_{(\cdot)}|$.
\abhinav{at $TE=0$, this could blow up to infinity. Should we add some epsilon in the denominator?} 
\amitd{Would it be okay to simply assume $|{TE_{\cdot}}| > 0$? Do we really need a lower bound or need to add a small $\epsilon$?}
Now we are ready to state the main theoretical result that shows our regularization objective will learn the correct classifier which uses only the causal attribute for its prediction,  given  that the ranking of estimated treatment effect is correct up to some constant factor. Let $[S]$ denote the set $\{1,\ldots,S\}$.


\begin{restatable}{theorem}{tethm}
\label{theorem:lambda_sufficient}
Let the latent space be frozen and disentangled such that $\bm{z}=[\bm{z}_{ca_{1}},\ldots,\bm{z}_{ca_{K}},\bm{z}_{sp_{1}},\ldots,\bm{z}_{sp_{J}}]$ (\refassm{assm:disentagled_latent}). 
Let the desired classifier $c^{des}(\bm{z})=\sum_{k=1}^{K} \bm{w}^{des}_{ca_{k}}\cdot \bm{z}_{ca_{k}}$ be the max-margin classifier among all linear classifiers that use only the causal features $\bm{z}_{ca_{k}}$'s for prediction.
Let $c^{mm}(\bm{z}) = \sum_{k=1}^{K} \bm{w}^{mm}_{ca_{k}} \cdot \bm{z}_{ca_{k}} + \sum_{j=1}^{J} \bm{w}^{mm}_{sp_{j}} \cdot \bm{z}_{sp_{j}}$ be the max-margin classifier that uses both the causal and the spurious features, and let $\bm{w}_{sp_{j}}^{mm}\neq\bm{0}$, $\forall j \in [J]$. We assume $\bm{w}_{sp_{j}}^{mm}\neq\bm{0}$, $\forall j \in [J]$, without loss of generality because otherwise, we can restrict our attention only to those $j \in [J]$ that have $\bm{w}_{sp_{j}}^{mm}\neq\bm{0}$.
Let the norm of the parameters of both the classifier be set to $1$ i.e $\sum_{k=1}^{K}\norm{\bm{w}^{mm}_{ca_{k}}}^{2}_{p=2}+\sum_{j=1}^{J}\norm{\bm{w}^{mm}_{sp_{j}}}^{2}_{p=2} = \sum_{k=1}^{K}\norm{\bm{w}^{des}_{ca_{k}}}^{2}_{p=2} = 1$.
Then if 
regularization coefficients are related s.t. 
$\operatorname{mean}\left(\left\{\frac{\lambda_{ca_{k}}}{\lambda_{sp_{j}}} \cdot \eta_{k,j}\right\}_{k \in [K], j \in [J]}\right) < \frac{J}{K}$
where 
$\eta_{k,j}=\frac{\norm{\bm{w}^{des}_{ca_{k}}}-\norm{\bm{w}^{mm}_{ca_{k}}}_{p}}{\norm{\bm{w}^{mm}_{sp_{j}}}_{p}}$, then
\begin{enumerate}[leftmargin=*,topsep=0pt]{}
    \item \textbf{Preference:} $\mathcal{L}_{\oursshort}(c^{des}\big{(}\bm{z})\big{)}< \mathcal{L}_{\oursshort}\big{(}c^{mm}(\bm{z})\big{)}$. Thus, our causal effect regularization objective (\refdef{def:cereg_theory}) will choose the $c^{des}(\bm{z})$ classifier over the max-margin classifier $c^{mm}(\bm{z})$ which uses the spuriously correlated feature.

    \item \textbf{Global Optimum:} The desired classifier $c^{des}(\bm{z})$ is the global optimum of our loss function $\mathcal{L}_{\oursshort}$ when $J=1$, $K=1$, $p=2$, the regularization strength are related s.t. $\lambda_{ca_{1}}<\lambda_{sp_{1}} \implies |\hat{TE}_{ca_{1}}|>|\hat{TE}_{sp_{1}}|$ and search space of linear classifiers $c(\bm{z})$ are restricted to have the norm of parameters equal to 1. 

    
\end{enumerate}

\end{restatable}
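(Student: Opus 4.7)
The plan is to prove the two claims largely independently. For \emph{Preference} I would reduce $\mathcal{L}_{\oursshort}(c^{des})<\mathcal{L}_{\oursshort}(c^{mm})$ to an inequality between two $\lambda$-weighted sums of norms and dispatch it via the AM--HM inequality; for \emph{Global Optimum} I would parameterise the unit sphere and exploit the fact that with $K=J=1$, $p=2$ the regularizer is linear on the non-negative arc of the unit circle and therefore minimised at an endpoint.

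For the Preference claim, I would first write the regularizer gap as
\[
\mathcal{L}_{\oursreg}(c^{mm})-\mathcal{L}_{\oursreg}(c^{des})\;=\;\sum_{j=1}^{J}\beta_j\;-\;\sum_{k=1}^{K}\alpha_k,
\]
using the abbreviations $\alpha_k:=\lambda_{ca_k}\bigl(\|\bm{w}^{des}_{ca_k}\|_p-\|\bm{w}^{mm}_{ca_k}\|_p\bigr)$ and $\beta_j:=\lambda_{sp_j}\|\bm{w}^{mm}_{sp_j}\|_p>0$, where positivity of $\beta_j$ uses the hypothesis $\bm{w}^{mm}_{sp_j}\neq\mathbf{0}$. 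Multiplying the mean-hypothesis through by $JK$ rewrites it as $\bigl(\sum_k\alpha_k\bigr)\bigl(\sum_j\beta_j^{-1}\bigr)<J^2$, and combining with the classical AM--HM inequality $\bigl(\sum_j\beta_j\bigr)\bigl(\sum_j\beta_j^{-1}\bigr)\ge J^2$ yields $\sum_k\alpha_k<\sum_j\beta_j$, i.e.\ $\mathcal{L}_{\oursreg}(c^{des})<\mathcal{L}_{\oursreg}(c^{mm})$. What remains is the task-loss term: because $c^{mm}$ is by definition the unit-norm max-margin direction, $\mathcal{L}_{task}(c^{mm})\le\mathcal{L}_{task}(c^{des})$, so one must argue the regularizer gap absorbs this. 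I would close by invoking the precise max-margin objective from \refappendix{appendix:math_preliminaries} together with linear separability of the causal block (which makes both task losses coincide); failing that, the proof inherits an implicit lower bound on the regularization strength $R$.

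For the Global Optimum claim, I would parameterise an arbitrary unit-norm linear classifier as $c(\bm{z})=a\,\hat{\bm{u}}\cdot\bm{z}_{ca_1}+b\,\hat{\bm{v}}\cdot\bm{z}_{sp_1}$ with $a^2+b^2=1$ and $\hat{\bm{u}},\hat{\bm{v}}$ unit vectors. Then $\mathcal{L}_{\oursreg}(c)=\lambda_{ca_1}\,a+\lambda_{sp_1}\,b$, a linear functional on the non-negative arc $\{(a,b):a^2+b^2=1,\ a,b\ge 0\}$, and hence attains its minimum at an endpoint. The hypothesis $\lambda_{ca_1}<\lambda_{sp_1}$ selects the endpoint $(a,b)=(1,0)$, forcing $b=0$. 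Restricted to the slice $b=0$, the task loss is minimised by the max-margin direction among causal-only classifiers, which is by definition $c^{des}$; consequently the joint minimum of $\mathcal{L}_{\oursshort}$ over the unit sphere is attained precisely at $c^{des}$.

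\textbf{Main obstacle.} The delicate step is the task-loss bookkeeping in the Preference claim: the stated hypothesis bounds only $\lambda$-weighted norm differences and never references $R$, so turning $\mathcal{L}_{\oursreg}(c^{des})<\mathcal{L}_{\oursreg}(c^{mm})$ into the full $\mathcal{L}_{\oursshort}$ inequality will require either an appeal to separability (so that $\mathcal{L}_{task}(c^{des})=\mathcal{L}_{task}(c^{mm})$ and the task terms cancel) or an explicit side-condition on $R$ that should be surfaced in the proof. Everything else reduces to the clean AM--HM step for Preference and the linear-on-an-arc corner argument for Global Optimum.
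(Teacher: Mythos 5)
Your Part 1 is essentially the paper's own argument, expressed more cleanly. The paper makes the identical decomposition into a margin gap and a $\lambda$-weighted norm gap, reduces everything to showing $\sum_k \lambda_{ca_k}\bigl(\norm{\bm{w}^{des}_{ca_k}}_p-\norm{\bm{w}^{mm}_{ca_k}}_p\bigr) < \sum_j \lambda_{sp_j}\norm{\bm{w}^{mm}_{sp_j}}_p$, and closes with the same AM--HM step (it splits into the trivial case $\sum_k\alpha_k\le 0$ and the positive case, whereas your product formulation $\bigl(\sum_k\alpha_k\bigr)\bigl(\sum_j\beta_j^{-1}\bigr)<J^2\le\bigl(\sum_j\beta_j\bigr)\bigl(\sum_j\beta_j^{-1}\bigr)$ handles both at once). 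You are also right about the ``main obstacle'': the paper does \emph{not} get task-loss cancellation from separability; it resolves the bookkeeping exactly the way you suspected, by choosing $R$ larger than an explicit threshold (the ratio of the margin gap to the positive regularizer gap), a side-condition that lives only in the proof and is not surfaced in the theorem statement. So for Preference your plan matches the paper, and your diagnosis of the hidden $R$-dependence is accurate.

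For Global Optimum your route genuinely differs, and as sketched it has a gap. The paper does not argue about the regularizer in isolation; it reuses Part 1 against an \emph{arbitrary} unit-norm competitor $c^{s}$ with $\bm{w}^{s}_{sp}\neq\bm{0}$, parameterizes $\norm{\bm{w}^{s}_{ca}}_2=\theta$, computes $\eta(\theta)=\sqrt{(1-\theta)/(1+\theta)}\le 1$, and concludes that $\lambda_{ca}<\lambda_{sp}=\lambda_{ca}\cdot\eta_{\max}<\lambda_{sp}$ makes the Part 1 comparison (margin gap versus $R$ times regularizer gap) go through uniformly; the causal-only slice is then handled by noting all such classifiers pay the same penalty $R\lambda_{ca}$ and $c^{des}$ has the best margin. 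Your corner argument correctly shows the \emph{regularizer} $\lambda_{ca}a+\lambda_{sp}b$ is minimized on the arc at $(a,b)=(1,0)$, but minimizing $\mathcal{L}_{\oursreg}$ and minimizing $\mathcal{L}_{task}$ on the slice $b=0$ separately does not yield the minimizer of the sum: a competitor with small $b>0$ pays only a small extra penalty yet can have a strictly better margin, so you still need the Part 1 mechanism (the regularizer gap, scaled by $R$, dominating the margin gap for every such competitor) to exclude it. Once you route the $b>0$ competitors back through Part 1 --- which is precisely what the $\eta(\theta)\le 1$ computation buys the paper --- your argument closes; without that step, ``the joint minimum is attained at $c^{des}$'' does not follow from the two separate minimizations.
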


\textbf{Remark. }\textit{The result of \reftheorem{theorem:lambda_sufficient} holds under a more intuitive but stricter constraint on the regularization coefficient $\lambda$ which states that 
$    \lambda_{sp_{j}} > \Big{(}\frac{K}{J} \eta_{k,j}\Big{)} \lambda_{ca_{k}} \implies |\hat{TE}_{sp_{j}}| <  \Big{(}\frac{K}{J} \eta_{k,j}\Big{)} |\hat{TE}_{ca_{k}}| $
$\forall k\in[K]$ and  $j\in[J]$. 
\amitd{Would it help convey the intuition if we coin some term for $K/J$ similar to signal-to-noise ratio?}
The above constraint states that if the treatment effect of the causal feature is more than that of the spurious feature by a constant factor then the claims in \reftheorem{theorem:lambda_sufficient} hold. }

\amitd{As we readily have Figure 1, we can refer to it as well to convey an intuitive proof sketch (especially for the proof of Global Optimum part).}

\textit{Proof Sketch.}
\begin{enumerate*}[label=\textbf{(\arabic*)}]
    \item We compare both the classifier $c^{des}(\bm{z})$ and $c^{mm}(\bm{z})$ using our overall training objective to our training objective $\mathcal{L}_{\oursshort}$ (\refeqn{eq:general_objective}). Given the relation between the regularization strength mentioned in the above theorem is satisfied, we then show that one can always choose a regularization strength $``R"$ greater than a constant value s.t  \new{ classifier $c^{mm}(\bm{z})$ incurs a greater regularization penalty than $c^{des}(\bm{z})$ (\refeqn{def:cereg_theory}) which is not compensated by the gain in the max-margin objective of $c^{mm}(\bm{z})$ over $c^{des}(\bm{z})$
    . Thus,} the desired classifier $c^{des}(\bm{z})$ has lower \new{overall} loss than the $c^{mm}(\bm{z})$ in terms of our training objective $\mathcal{L}_{\oursshort}$.
    \item We use the result from the first claim to show that $c^{des}(\bm{z})$ has a lower loss than any other classifier that uses the spurious feature $\bm{z}_{sp_{1}}$. Then, among the classifier that only uses the causal feature $\bm{z}_{ca_{1}}$, we show that again $c^{des}(\bm{z})$ has the lower loss w.r.t. $\mathcal{L}_{\oursshort}$. Thus the desired classifier has a lower loss than all other classifiers w.r.t $\mathcal{L}_{\oursshort}$ with parameter norm $1$ hence a global optimum. 
    Refer \refappendix{appendix:proof_lambda_sufficient} for proof.
    \abhinav{correct S' to S back again in the proof. We don't have lambda =0 now.}
\end{enumerate*}

\section{Empirical Results}
\label{sec:empresult}


\subsection{Datasets}
\label{subsec:empresult_dataset}  
\reftheorem{theorem:lambda_sufficient} showed that our method can find the desired classifier in a simple linear setup. We now evaluate the method on a synthetic, semi-synthetic and real-world dataset. Details are in \refappendix{appendix:expt_setup}.

\textbf{\syn .} We introduce a synthetic dataset where the ground truth causal graph is known and thus the ground truth spuriously correlated feature is known apriori (but unknown to all the methods). The dataset contains two random variables (\emph{causal} and \emph{confound}) that cause the binary main task label $y_{m}$ and the variable \emph{spurious} is spuriously correlated with the \emph{confound} variable. Given the values of spurious and causal features, we create a sentence as input. We define \synoc -- a version of the \syn dataset where all three variables/\props are observed. Next, to increase  difficulty, we create a version of this dataset --- \synuc --- where the \emph{confound} \prop is not observed in the training dataset, corresponding to  DGP-3 from \reffig{fig:dgp}. 

\textbf{\mnist. } We use MNIST \cite{lecun2010mnist} to compare our method on a similar semi-synthetic dataset as used in \citet{mouli2022asymmetry}. We define a new task over this dataset but use the same \props, color, rotation, and digit, whose associated transformation satisfies the \emph{lumpability} assumption in their work. We define the digit \prop ($3$ and $4$) and the color \prop (red or green color of digit) as the causal \props which creates the binary main task label using the $XOR$ operation. Then we add rotation ($0^{\degree}$ or $90^{\degree}$) to introduce spurious correlation with the main task label. This dataset corresponds to DGP-2, where $C$ is color and digit \prop, $S$ is rotation \prop and $x$ is an empty set. 

\textbf{\aae \cite{aaeDataset}. } This is a real-world dataset where the main task is to predict a binary sentiment label from a tweet's text. The tweets are associated with \emph{race} of the author which is spuriously correlated  with the main task label. 
Since this is a real-world dataset where we have not artificially introduced the spurious \prop we don't have a ground truth causal effect of \emph{race} on the sentiment label. But we expect it to be zero since changing \emph{race} of a person should not change the sentiment of the tweet.
We use GPT3 \cite{gpt3} to create the counterfactual example by prompting it to change the race-specific information in the given sentence (see \refappendix{appendix:empresult} for examples). This dataset corresponds to DGP-1, where the node $A$ is the spurious \prop race. 
\abhinav{Is it actually DGP-1. They removed the emoji used to label Y from the tweet.}


\textbf{Varying spurious correlation in the dataset.} Since the goal is to train a model that doesn't rely on the spuriously correlated \prop, we create multiple settings for every dataset with different levels of correlations between the main task labels $y_{m}$ and spurious \prop $y_{c_{sp}}$. Following~\cite{our_probing_paper}, we use a \emph{predictive correlation} metric to define the label-attribute correlation that we vary in our experiments. 
The predictive correlation $(\kappa)$ measures how informative one label or attribute ($s$) is for predicting the other ($t$), $\kappa \coloneqq Pr(s=t) = {\sum_{i=1}^{N}\mathbf{1}[s= t]}/{N}$, 
where $N$ is the size of the dataset and $\mathbf{1}[\cdot]$ is the indicator function that  is $1$ if the argument is true otherwise $0$. Without loss of generality, assuming $s=1$ is correlated with $t=1$ and similarly, $s=0$ is correlated with $t=0$; predictive correlation lies in $\kappa\in[0.5,1]$ where $\kappa=0.5$ indicates no correlation and $\kappa=1$ indicates that the attributes are fully correlated.
For \syn dataset, we vary the predictive correlation between the confound \prop and the spurious \prop; for  \mnist, between the  combined causal \prop (digit and color) and the spurious \prop (rotation); and for \aae, between the task label and the spurious \prop (race). See \refappendix{appendix:expt_setup} for details.

\abhinav{Could plot the correlation between the main and the spurious separately in the appendix.}


\subsection{Baselines and evaluation metrics}
\label{subsec:baseline_eval_metric}
\textbf{Baselines. } 
We compare our method with five baseline algorithms: 
\begin{enumerate*}[label=(\roman*)]
    \item \emph{\erm}: Base algorithm for training the main task classifier using cross-entropy loss without any additional regularization.

    \item \emph{\mouli}: The method proposed in \cite{mouli2022asymmetry} to automatically detect the spurious \props and train a model invariant to those \props.  Given a set of \props, this method computes a score for every subset of \props, selects the subset with a minimum score as the spurious subset, and finally enforces invariance with respect to those subsets using counterfactual data augmentation (CAD) \cite{cadSen2022,kaushik2021learningthediff}\abhinav{cite main CAD paper}.

    \item \emph{\mouliours}: Empirically, we observe that CAD does not correctly impose invariance for a given \prop (see \refappendix{appendix:empresult} for discussion). Thus we add a variant of \mouli's method where instead of using CAD it uses our regularization objective (\refeqn{eq:lreg_practice}) to impose invariance using the causal effect $TE_{a}=0$ for some attribute $``a"$. 

    \item \new{\emph{Just Train Twice} \cite{jtt} (\emph{JTT}): Given a known spurious attribute, \textit{JTT} aims to train a model that performs well on all subgroups of the dataset,  where subgroups are defined based on different correlations between the task label and spurious \prop. Unlike methods like GroupDRO \cite{GroupDRO}, \emph{JTT} requires (expensive) subgroup labels only for the examples in the validation set. } 

    \item \new{\textit{Invariant Risk Minimization~\cite{IRM:2020}} (\emph{IRM}): This method requires access to multiple \emph{environments} or subsets of the dataset where the correlation between the spurious \prop and task label changes and then it trains a classifier that only uses the  \props with stable correlations across environments (see \ref{appendix:expt_setup} for experimental setup).} 
\end{enumerate*}


\amits{Move the commented discussion of diff distr shift/kappa should go to appendix. too much detail }

\textbf{Metrics.} We use two metrics for evaluation. Since all datasets have binary task labels and attributes, we define a group-based metric (\emph{average group accuracy}) to measure generalization under distribution shift. Specifically, given binary task label $y\in \{0,1\}$ and spurious \prop $a\in \{0,1\}$ \amits{use a, or z for attribute. simpler to read than $c_sp$.}. Following \citet{our_probing_paper}
we define $2\times2$ groups, one for each combination of $(y,a)$. 
The subset of the dataset  with
$(y=1,a=1)$ and $(y=0,a=0)$ are the majority group $S_{maj}$ while groups $(y=1,a=0)$ and $(y=0,a=1)$ make up the minority group $S_{min}$. We expect the main classifier to exploit this correlation and hence perform better on $S_{maj}$ but badly on $S_{min}$ where the correlation breaks. Thus we want a method that performs better on the average accuracy on both the groups i.e $\frac{Acc(S_{min})+Acc(S_{maj})}{2}$, where $Acc(S_{maj})$ and $Acc(S_{min})$ are the accuracy on majority and minority group respectively.
 The second metric ($\Delta$Prob) measures the reliance of a classifier on the spurious feature. For every given input $\bm{x}_{a}$ we have access to the counterfactual distribution $(\bm{x}_{a'})\sim \mathcal{Q}(\bm{x}_{a})$ (\refsec{sec:preliminaries}) where the \prop $A=a$ is changed to $A=a'$. $\Delta$Prob is defined as the change in the prediction probability of the model on changing the spurious \prop $a$ in the input, thus directly measuring the reliance of the model on the spurious \prop. 
\abhinav{not important: could connect with the weights of linear classifier in theory based on space available}
 For background on baselines refer \refsec{appendix:math_preliminaries} a detailed description of our experiment setup refer \refsec{appendix:expt_setup}.

\begin{figure}[t]
\centering
    \begin{subfigure}[h]{.32\textwidth}
        \captionsetup{justification=centering}
        \centering
            \includegraphics[width=\linewidth,height=0.7\linewidth]{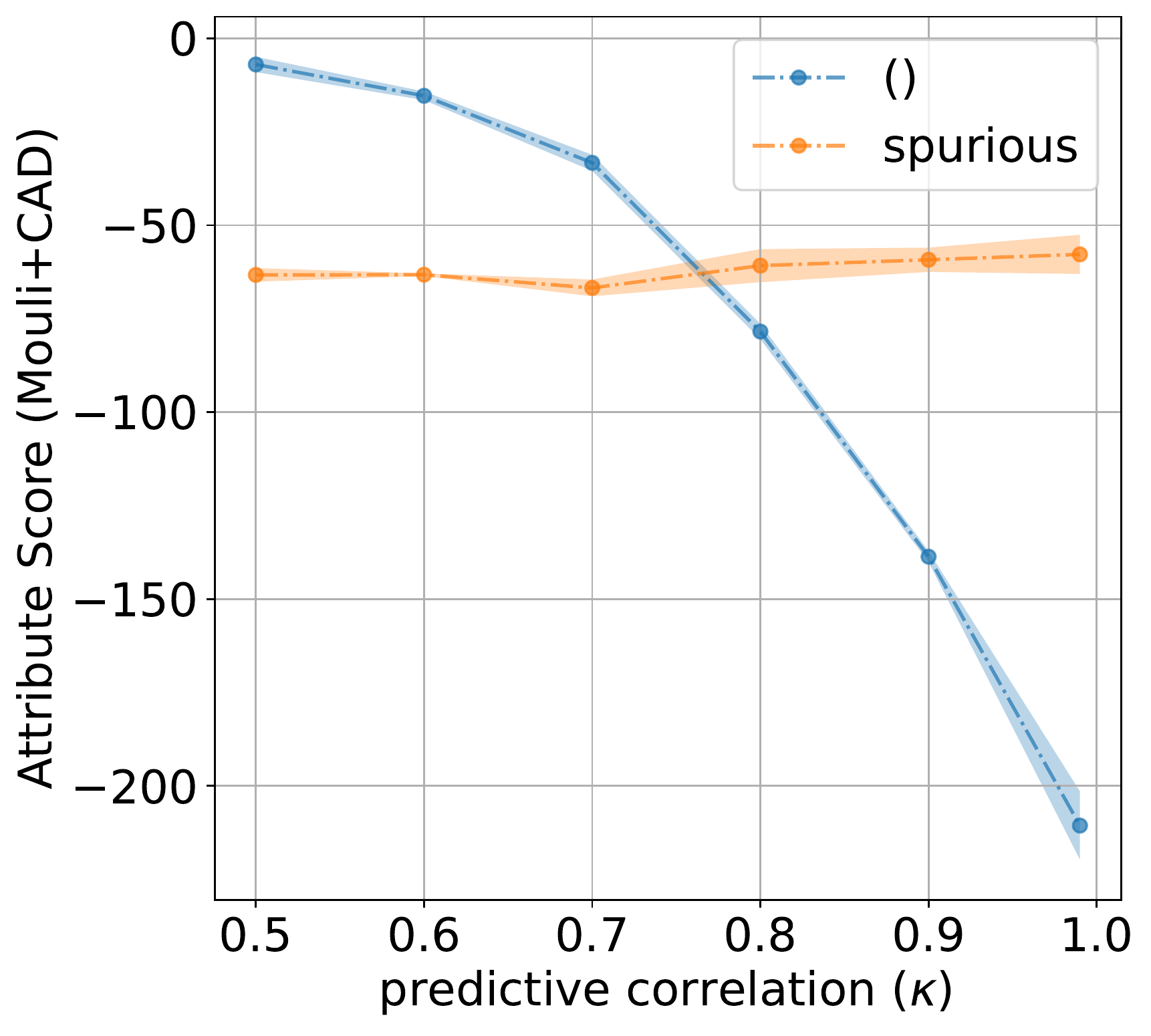}
    		\caption[]%
            {{\small \synuc dataset}}    
    		\label{fig:mouliscore_syn}
    \end{subfigure}
\hfill 
  \begin{subfigure}[h]{.32\textwidth}
  \captionsetup{justification=centering}
    \centering
    \includegraphics[width=\linewidth,height=0.7\linewidth]{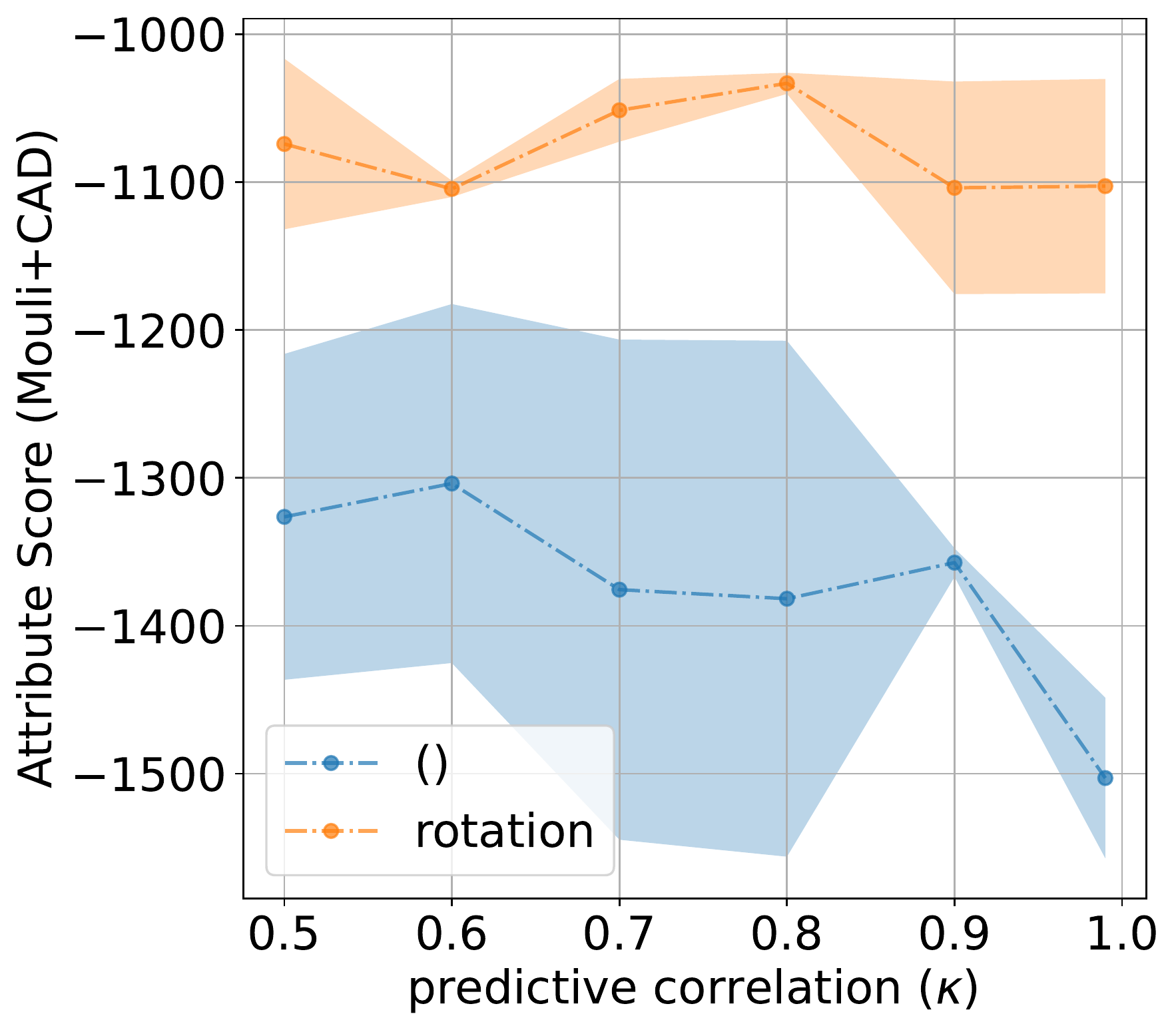}
		\caption[]%
        {{\small \mnist dataset}}    
		\label{fig:mouliscore_mnist} 
  \end{subfigure}
\hfill
  \begin{subfigure}[h]{.32\textwidth}
    \captionsetup{justification=centering}
    \centering
    \includegraphics[width=\linewidth,height=0.7\linewidth]{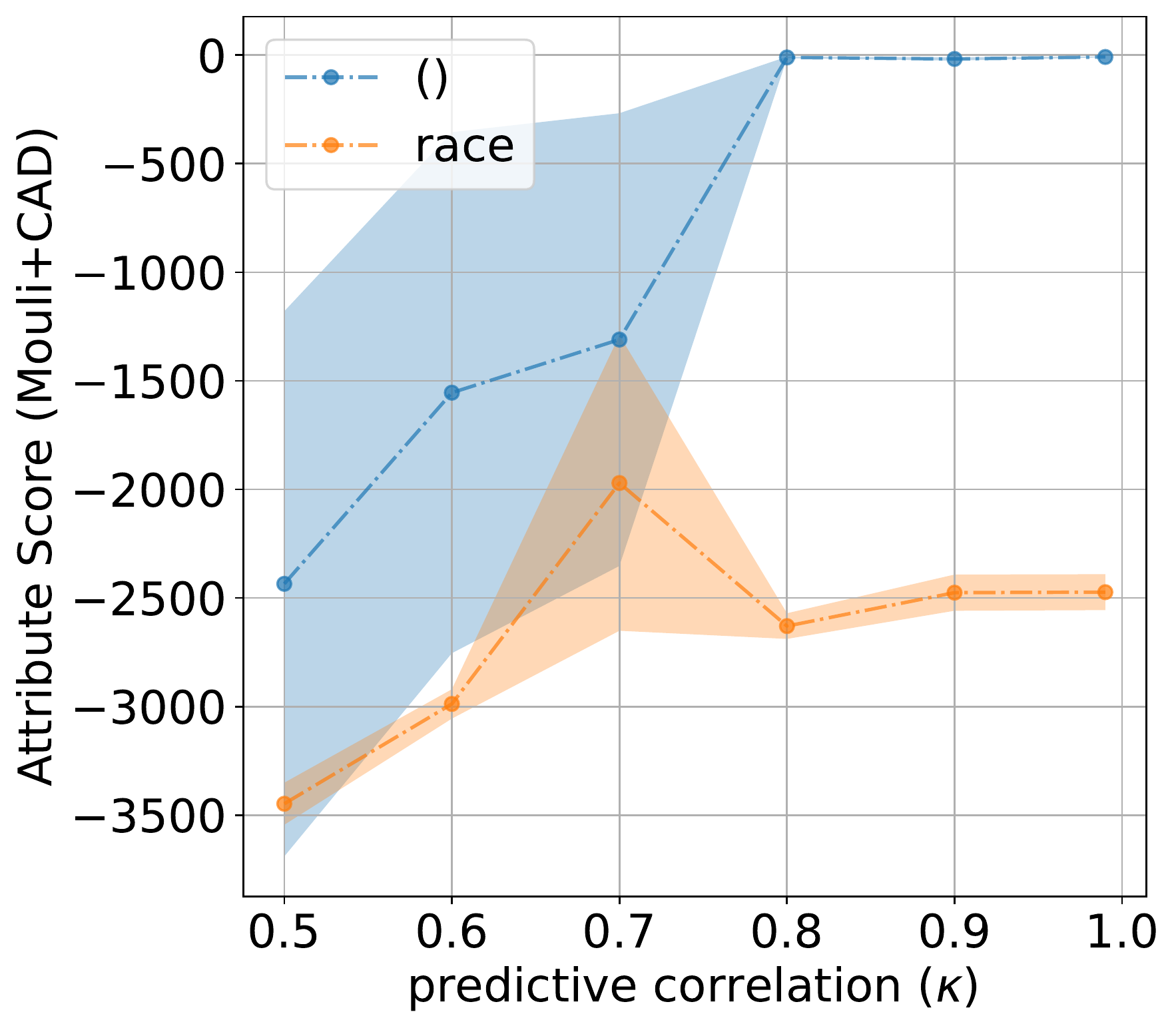}
		\caption[]%
        {{\small \aae dataset}}    
		\label{fig:mouliscore_aae} 
  \end{subfigure}

\caption{\textbf{Detecting spurious \prop using \mouli:} 
x-axis is the predictive correlation and y-axis shows the score defined by \mouli.  The orange curve shows the score for true spurious \prop. \new{The blue curve shows the score for setting when \mouli considers none of the given \props as spurious}. The attribute with the lowest score is selected as spurious. \new{If the blue curve has the lowest score for a dataset with predictive correlations ($\kappa$), \mouli will not declare any \props as spurious for that $\kappa$}. \textbf{(a)} and \textbf{(b).} \mouli fails to detect the spurious \prop at high $\kappa$. (\textbf{c}) \mouli correctly identifies the race \prop as spurious. 
}
\label{fig:mouli_score_syn_mnist_aae}
\end{figure}

\subsection{Evaluating Stage 1: Automatic Detection of Spurious Attributes}
\label{subsec:empresult_stage1}

\paragraph{Failure of \mouli in detecting the spurious \props at high correlation. }
In \reffig{fig:mouli_score_syn_mnist_aae}, we test the effectiveness of \mouli to detect the subset of \props which are spurious on different datasets with varying levels of spurious correlation ($\kappa$). In \syn dataset, at low correlations ($\kappa<0.8$) \mouli correctly detects \emph{spurious} \prop (orange line is lower than blue). As the correlation increases, their method incorrectly states that there is no spurious \prop (blue line lower than orange). In \mnist dataset, \mouli does not detect any \props as spurious (shown by blue line for all $\kappa$). 
For \aae dataset, \mouli's method is correctly able to detect the spuriously correlated \prop (\emph{race}) for all the values of predictive correlation, perhaps because the spurious correlation is weak compared to the causal relationship from causal features to task label. 
\amits{the commented out discussion above can go to appendix.}

\noindent \textbf{\oursshort is robust to error in the estimation of spurious \props. }
Unlike \mouli's method that does a hard categorization,  \oursshort estimates the causal effect of every \prop on the task label as a fine-grained measure of whether an \prop is spurious or not.
\reftbl{tbl:te_combined} summarizes the estimated treatment effect of spurious \props in every dataset for different levels of predictive correlation ($\kappa$). 
We use two different causal effect estimators named \emph{Direct} and \emph{Riesz} with the best estimate selected using validation loss (see \refsec{appendix:expt_setup}). Overall, the Riesz estimator gives a better or comparable estimate of the causal effect of spurious \prop than Direct, except in the \synuc dataset where the causal effect is not identified. At high predictive correlation($>=0.9$), as expected, the causal effect estimates are incorrect. But as we will show next,  
since \oursshort uses a continuous effect value to detect the spurious \prop, it allows for errors in the first (detection) step to not affect later steps severely. 

\begin{table}[t]
\centering
\small
\caption{\textbf{Causal Effect Estimate of spurious \prop}. \emph{Direct} and \emph{Riesz} are two different causal effect estimators as described in \refsec{subsec:method_desc}. Overall we see that \emph{Riesz} performs better or comparable to the baseline \emph{Direct} and closer to ground truth value $0$. Since \aae is a real dataset we don't have a ground causal effect of \emph{race} \prop but we expect it to be zero (see \refsec{subsec:empresult_dataset}). }
\label{tbl:te_combined}
\begin{tabularx}{\textwidth}{c *{9}{Y}}
\toprule
  &  &  & \multicolumn{6}{c}{Predictive Correlation $(\kappa)$}\\
 \cmidrule(l){4-9}
 Dataset                & True                   & Method  & 0.5 & 0.6 & 0.7 & 0.8 & 0.9 & 0.99 \\
\midrule
\multirow{2}{*}{\synoc} & \multirow{2}{*}{0}     & Direct     & \textbf{0.00} & \textbf{0.00} & \textbf{0.01} & 0.08 & 0.16 & 0.22 \\
                        &                        & Riesz     & \textbf{0.00} & 0.03 & 0.03 & \textbf{0.03} & \textbf{0.05} & \textbf{0.15} \\
\cmidrule(l){3-9}
\multirow{2}{*}{\synuc} & \multirow{2}{*}{0}     & Direct     &  \textbf{-0.01} & \textbf{0.13} & \textbf{0.24} & \textbf{0.37} & \textbf{0.49} & \textbf{0.67} \\
                        &                        & Riesz     &  0.06 & 0.19 & 0.31 & 0.42 & 0.58 & 0.70 \\
\cmidrule(l){3-9}
\multirow{2}{*}{\mnist} & \multirow{2}{*}{0}     & Direct     &  0.02 & \textbf{0.03} & \textbf{0.05} & 0.06 & \textbf{0.15} & \textbf{0.27} \\
                        &                        & Riesz     &   \textbf{-0.01} & 0.06 & \textbf{0.05} & \textbf{0.04} & 0.2 & 0.31 \\
\cmidrule(l){3-9}
\multirow{2}{*}{\aae}   & \multirow{2}{*}{-}      & Direct     &  \textbf{0.01} & \textbf{0.07} & \textbf{0.12} & 0.20 & \textbf{0.27} & \textbf{0.31}  \\
                        &                        & Riesz     &  0.02 & \textbf{0.07} & \textbf{0.12} & \textbf{0.17} & \textbf{0.27} & 0.37 \\
\bottomrule

\end{tabularx}
\end{table}

\abhinav{move commented portion to appendix}

\subsection{Evaluating Stage 2: Evaluation of \oursshort and other baselines}
\label{subsec:empresult_stage2}

\begin{figure}[t]
\centering
    \begin{subfigure}[h]{.32\textwidth}
        \captionsetup{justification=centering}
        \centering
            \includegraphics[width=\linewidth,height=1.4\linewidth]{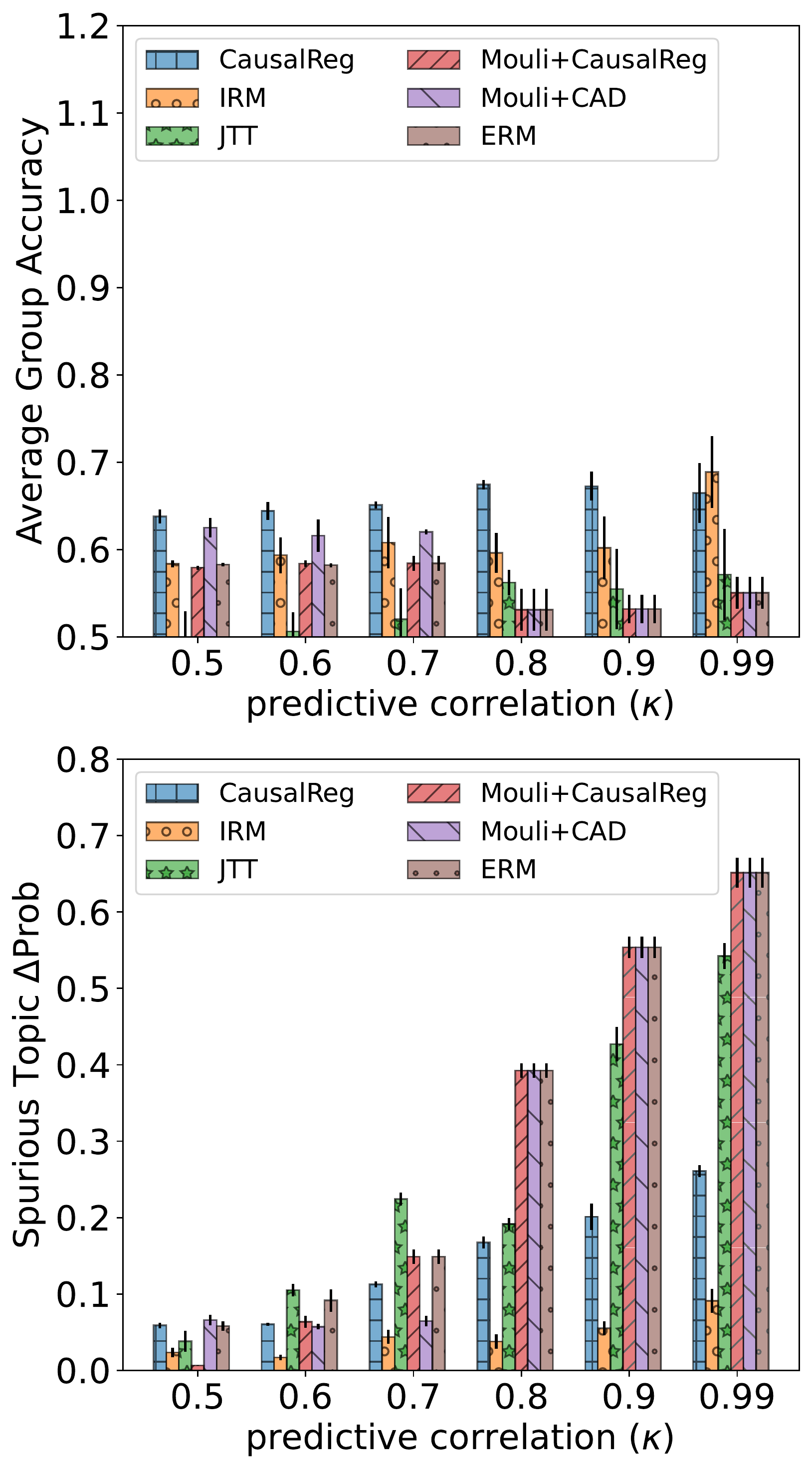}
    		\caption[]%
            {{\small \synuc }}    
    		\label{fig:main_syn}
    \end{subfigure}
\hfill 
  \begin{subfigure}[h]{.32\textwidth}
  \captionsetup{justification=centering}
    \centering
    \includegraphics[width=\linewidth,height=1.4\linewidth]{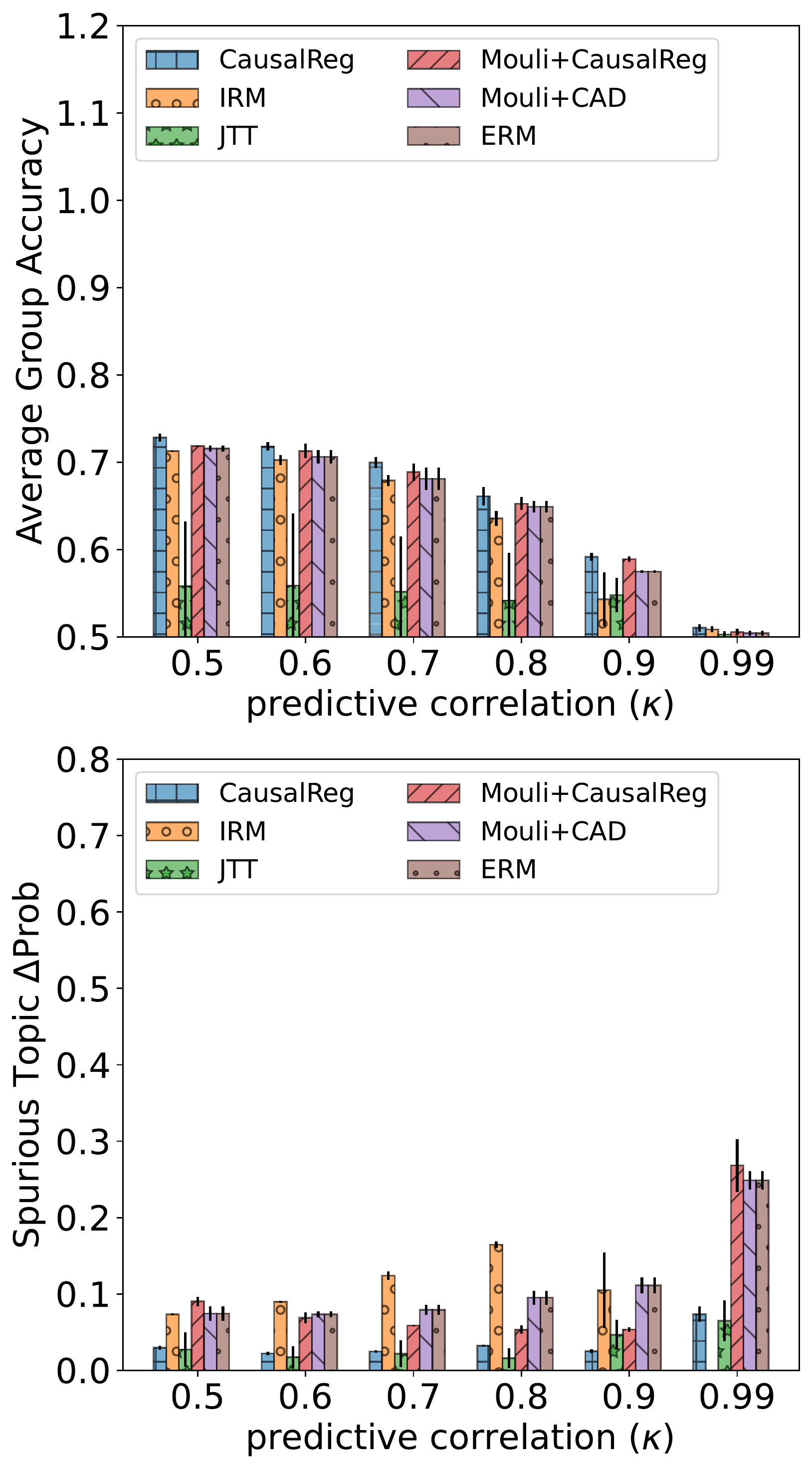}
		\caption[]%
        {{\small \mnist }}    
		\label{fig:main_mnist} 
  \end{subfigure}
\hfill
  \begin{subfigure}[h]{.32\textwidth}
    \captionsetup{justification=centering}
    \centering
    \includegraphics[width=\linewidth,height=1.4\linewidth]{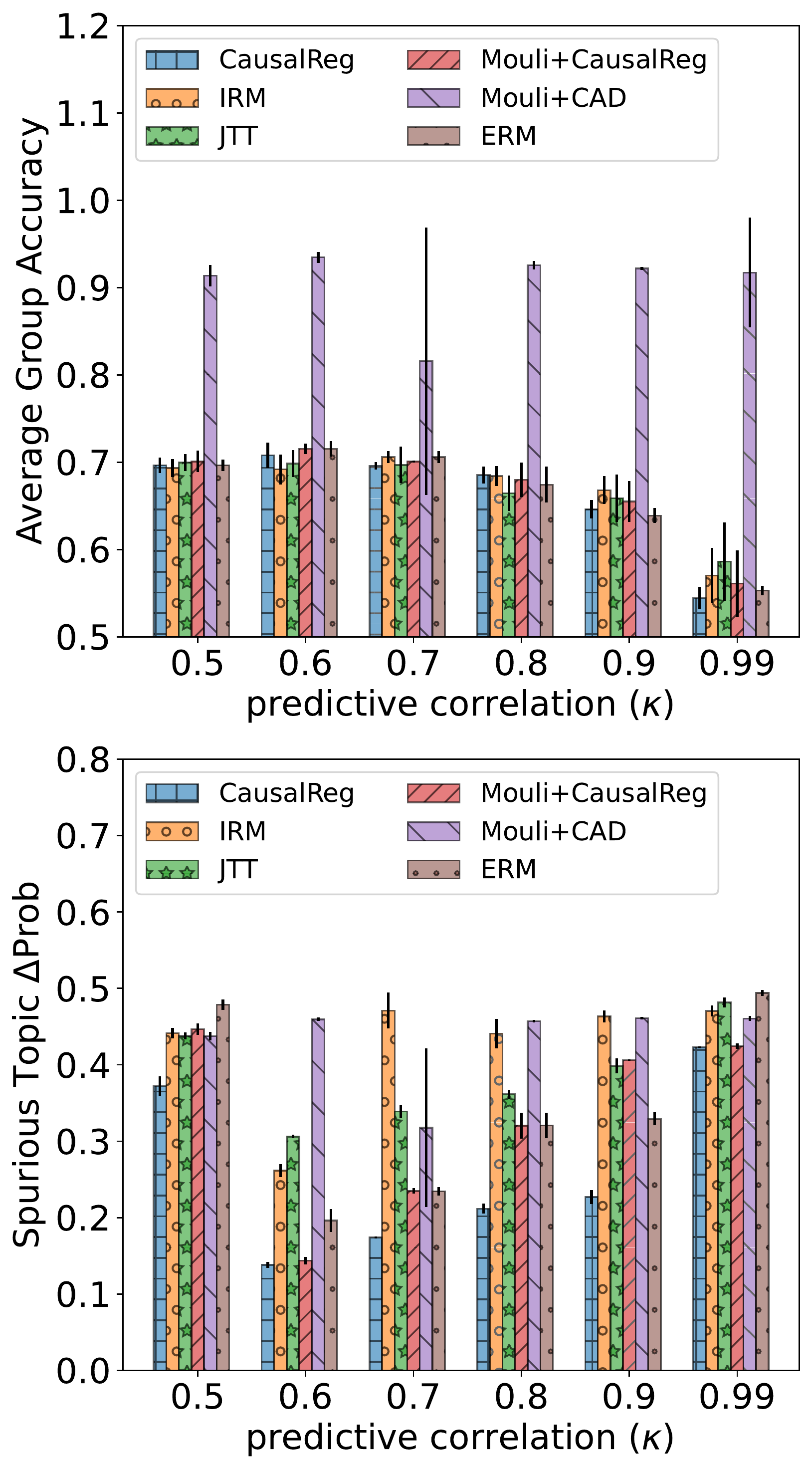}
		\caption[]%
        {{\small \aae }}    
		\label{fig:main_aae} 
  \end{subfigure}

\caption{\textbf{Average group accuracy (top row) and $\Delta$Prob (bottom row) for different methods. } \new{x-axis denotes  predictive correlation ($\kappa$) in the dataset. 
Compared to other baselines, \textbf{(a)} and \textbf{(b)} show that \oursshort performs better or comparable in average group accuracy and trains a classifier significantly less reliant to spurious \prop as measured by $\Delta$Prob.
In \textbf{(a)}, \irm has lower $\Delta$Prob as compared to other methods but performs worse on average group accuracy. \mouli performs better in average group accuracy in \textbf{(c)} but relies heavily on spurious \prop shown by high $\Delta$Prob whereas \oursshort performs equivalent to all other baselines on average group accuracy with lowest $\Delta$Prob among all.}}
\label{fig:overall_syn_mnist_aae}
\end{figure}

\reffig{fig:overall_syn_mnist_aae} compares the efficacy of our method with other baselines in removing the model's reliance on spurious \props.
On \textbf{\synuc} dataset, our method (\oursshort) performs better than all the other baselines for all levels of predictive correlation ($\kappa$) on \emph{average group accuracy} (first row in \reffig{fig:main_syn}). In addition, $\Delta$Prob (the sensitivity of model on changing spurious \prop in input, see \ref{subsec:empresult_dataset}) for our method is the lowest 
compared to other baselines (see bottom row of \reffig{fig:main_syn}). For $\kappa\geq0.8$, \mouli is the same as ERM since it fails to detect the spurious \prop and thus doesn't impose invariance w.r.t the spurious \prop. 
(\reffig{fig:mouliscore_syn} for details). 
On \textbf{\mnist} dataset, the average group accuracy of all methods is comparable \new{(except \jtt that performs worse)}, but \oursshort has a substantially lower $\Delta$Prob than baselines for all values of $\kappa$. Again, the main reason why \mouli fails is that it is not able to detect the spurious \prop for all $\kappa$ and thus doesn't impose invariance w.r.t them (see \reffig{fig:mouliscore_mnist} and \refsec{subsec:empresult_stage1}). 
On \textbf{\aae} dataset,   \mouli correctly detects the \emph{race} \prop as spurious and performs better in terms of Average Group Accuracy \new{compared to all other baselines}. But if we look at $\Delta$Prob, the gain in accuracy is not because of better invariance: in fact, the reliance on the spurious attribute (race) is worse than \erm. In contrast,  \oursshort has \new{lowest} $\Delta$Prob \new{among all} while obtaining comparable accuracy to \erm, \new{ \jtt and \irm}. To summarize, in all datasets, \oursshort ensures a higher or comparable accuracy to \erm while yielding the lowest $\Delta$Prob. 
\new{
\new{Note that \oursshort does not require knowledge of the spurious attribute whereas \jtt and \irm do; hence we selected the best model using overall accuracy on the validation set for all the methods (see \refeqn{eq:model_accuracy_selection_crit}). 
For a fair comparison to \jtt and \irm, in \refsec{subsec:app_empresult_stage2} (\reffig{fig:overall_syn_mnist_aae_selection_spurious_known}) we show results under the setting where all the methods (including \oursshort) assume that the spurious attribute is known.
Even with access to this additional information, we observe a similar trend as in \reffig{fig:overall_syn_mnist_aae}. \oursshort performs equivalent to all other baselines
on average group accuracy with the lowest $\Delta$Prob among all.}
Finally, in \refsec{appendix:empresult} we provide an in-depth analysis of our results, provide recommendations for selecting among different causal effect estimators used by our method (\refsec{subsec:app_empresult_stage1}) and evaluate \oursshort and other baselines on an additional dataset ---\civil\cite{wilds_dataset}---a popular dataset for studying generalization under spurious correlation  (\refsubsec{subsec:app_empresult_stage2})}.
\abhinav{Mention some results that will be there in appendix}



\section{Related Work}

\textbf{Known spurious \props. }
When the spuriously correlated \props are known a priori, three major types of methods have been proposed, based on worst-group optimization \cite{GroupDRO}, conditional independence constraints \cite{victorStressTest,kaur2022modeling,Makar2021CausallymotivatedSR}, or data augmentation \cite{kaushik2021learningthediff,kaushik2021explaining}. 
Methods like GroupDRO \cite{GroupDRO} create multiple groups in the training dataset based on the spurious \prop and optimize for worst group accuracy. \new{
To reduce labelling effort for the spurious \prop, methods like JTT \cite{jtt} and EIIL \cite{eiil} improve upon GroupDRO by requiring the spurious \prop labels only on the validation set.}
Other methods assume knowledge of causal graphs and impose conditional independence constraints for removing spurious \props~\cite{victorStressTest,kaur2022modeling,Makar2021CausallymotivatedSR}. Methods based on data augmentation add counterfactual training data  where only the spurious attribute is changed (and label remains the same)
~\cite{kaushik2021learningthediff,joshi-he-2022-investigation,kaushik2021explaining}.

\textbf{Automatically discovering spurious \props. }
The problem becomes harder if spurious \props are not known. Mouli and Ribeiro's work~\cite{mouli2021neural,mouli2022asymmetry} provides a method assuming specific kinds of transformations on the spurious attributes, either transformations that form finite linear automorphism groups~\cite{mouli2021neural} or  symmetry transformations over equivalence classes~\cite{mouli2022asymmetry}.  
Any \prop changed via the corresponding transformation that does not  hurt the training accuracy is considered spurious. 
However, they do not consider settings with correlation between the transformed attributes and the task labels. Our work considers a more realistic setup where we don't impose any constraint on the transformation or the \prop values, and allows attributes to be correlated with the task label (at different strengths).  
Using conditional independencies derived from a causal graph, \citet{maggie2022multishorcut} propose a method to automatically discover the spurious \props under the anti-causal classification setting. Our work, considering the \textit{causal} classification setting (features cause label) complements their work and allows soft regularization proportional to the causal effect.
\abhinav{Add Andrew's invariance paper description too.}

\section{Limitations and Conclusion}
\label{sec:limitations_conclusion}
We presented a method for automatically detecting and removing spurious correlations while training a classifier. While we focused on spurious attributes, estimation of causal effects can be used to regularize the effect of non-spurious features too. 
That said, our work has limitations: we can guarantee the identification of causal effects only in certain DGPs. In future work, it will be useful to characterize the datasets on which our method is likely to work and where it fails. \new{Another limitation is that our method is not guaranteed to work on datasets with an {anti-causal} data-generating process, wherein the observed input $\bm{x}$ and other \props are generated from the task label $y$.}

\bibliographystyle{plainnat} 
\bibliography{main}

\clearpage
\appendix
\section{Mathematical Preliminaries}
\label{appendix:math_preliminaries}

\subsection{Causal Effect Estimation and RieszNet Estimator}
\label{subsec:app_causal_effect_defs}
Let $(X, Y, A)$ denote a random input point from the data distribution, where $A$ is the auxiliary \prop label and $X$ is the input covariate and $Y$ is the task label. Assuming binary $A$, the average causal effect (or equivalently called as average treatment effect) of \prop $A$ on the task label $Y$ is defined as (Chapter 3 in \citet{pearl_causality_2009}):
\begin{equation}
     TE_{A} =  \mathbb{E}[Y|do(A)=1] - \mathbb{E}[Y|do(A)=0] 
\end{equation}
Given $X$ satisfies sufficient backdoor adjustment the causal effect can be estimated using (see Chapter 3 in \citet{pearl_causality_2009} for details):
\begin{equation}
\label{eq:direct_effect_estimator}
   Direct \coloneqq  TE_{A} = \mathbb{E}_{X} \Big{[}  \mathbb{E}[Y|X,A=1] - \mathbb{E}[Y|X,A=0] \Big{]}
\end{equation}
Henceforth, we will refer to the above estimate of causal effect as \emph{Direct}. 
Often in practice, the backdoor adjustment variable $X$ is high dimensional (e.g., vector encoding of text or images), and as a result, it becomes difficult to estimate the above quantity. Thus, several methods have been proposed to efficiently estimate and debias the causal effect for high dimensional data \cite{riesznet,dragonnet}. In this paper, we use one of the recently proposed methods \cite{riesznet}, henceforth denoted by \emph{Riesz}, that uses insights from the Riesz representation theorem to create a multitasking neural network-based method to get a debiased estimate of the causal effect. Since this method uses a neural network to perform estimation, it is desirable for many data modalities such as text and image that require a neural network to get a good representation of the input. Let $g(X,A) \coloneqq \mathbb{E}[Y|X,A]$, then the estimator used by \emph{Riesz} is given by:
\begin{equation}
    TE_{A} = \mathbb{E}[\alpha_{0}(X,A) \cdot g(X,A)]
\end{equation}
where $\alpha_{0}(X, A)$ is called a Riesz representer (RR), whose existence is guaranteed by the Riesz representation theorem. Lemma 3.1 in \citet{riesznet} states that in order to estimate $g(X,A) = \mathbb{E}[Y|X,A]$, it suffices to estimate $g(X,A) = \mathbb{E}(Y|\alpha_{0}(X,A))$. Since $\alpha_{0}(X,A)$ is a scalar quantity, it is more efficient to estimate $\mathbb{E}(Y|\alpha_{0}(X,A))$ than to condition on high dimensional $(X,A)$. Thus, \emph{Riesz} uses a multi-tasking objective to learn both $\alpha_{0}(X,A)$ and $g(X,A)$ jointly. First, a neural network is used to encode the input $(X,A)$ to a latent representation $Z$. Then, the architecture branches out into two heads; the first head trains the Riesz representer $\alpha_{0}(X,A)$ and the second branch is used to train $g(X,A)$ (see Figure 1 and Section 3 in \citet{riesznet} for details). Let the loss used to train the Riesz representer be called as \emph{RRloss} and the loss used to train $g(X,A)$ be called as REGloss and L2Reg be the l2 regularization loss (see Section 3 in \cite{riesznet} for the exact form of loss). Therefore, the final loss can be written as
\begin{equation}
    \text{RieszLoss} =  \text{REGloss} + R_1 \cdot \text{RRloss} + R_2 \cdot \text{L2RegLoss}, 
\end{equation}
with $R_{1}$ and $R_{2}$ as the regularization hyper parameters. \citet{riesznet} also suggests using another term in the overall loss (TMLEloss) which we do not consider in the current setup. After training the whole neural network jointly with the above loss objective, the causal effect estimate could be calculated using:
\begin{equation}
\label{eq:riesz_estimator}
    Riesz \coloneqq \mathbb{E}_{X} [g(X,A=1) - g(X,A=0)]
\end{equation}
\citet{riesznet} also proposes another debiased measure of the causal effect given by:
\begin{equation}
    DebiasedRiesz \coloneqq \mathbb{E}_{X} \Big{[} \big{(}g(X,A=1) - g(X,A=0) \big{)} + \alpha(X,A) (Y-g(X,A))  \Big{]}
\end{equation}
We observe that this measure gives a worse estimate than \emph{Riesz} on \syn dataset where the ground truth causal effect is known. Thus, in all our experiments we only use the $Riesz$ estimator and leave the exploration of $DebiasedRiesz$ as future work. Please see \refsec{subsec:app_causal_effect_estimators} for the setup we use in our experiments to estimate the causal effect for different datasets.

\subsection{Max Margin Objective}
\label{subsec:app_max_margin_obj}
Taking inspiration from the description of the max-margin classifier from \citet{our_probing_paper} and \citet{FailureModeOODGen}, we give a brief introduction to the max-margin training objective that is used to train the classifier we consider in \reftheorem{theorem:lambda_sufficient}.
Consider an encoder $h: \bm{X}\rightarrow \bm{Z}$ mapping the input $\bm{x}$ to latent space representation $\bm{z}$, and a classifier $c: \bm{Z} \rightarrow Y$ that uses the latent space representation $\bm{Z}$ to predict the task label $y$ as $c(\bm{z}) = \bm{w}\cdot \bm{z}$. The hyperplane $c(\bm{z})=0$ is called the decision boundary of the classifier. Let the task label be binary, taking values $1$ or $-1$, without loss of generality (otherwise, the labels could be relabelled to conform to this notation). Then the points falling on one side of decision boundary $c(\bm{z}< 0$ are assigned one predicted label (say $-1$) and the ones falling on the other side the other predicted label (say $+1$). Then the distance of a point $\bm{z}$, having task label $y$, from the decision boundary is given by:
\begin{equation}
    \mathcal{M}_{c}(\bm{z}) = \frac{y \cdot c(\bm{z})}{\norm{\bm{w}}_{2}}
\end{equation}
where $\norm{\bm{w}}_{2}$ is the L2 norm of the classifier. The \emph{margin} of a classifier is the distance of the closest latent representation $\bm{z}$ from the decision boundary. Thus, the goal of the max-margin objective is then to train a classifier that has the maximum margin. Equivalently, we want to minimize the following loss for training a classifier that has the largest margin. 
\begin{equation}
\label{eq:max_margin_obj_prelim}
    \mathcal{L}_{mm}\left(c(Z), Y\right) = (-1)~ \min_{i} \dfrac{y^{i}c(\bm{z}^{i})}{\norm{c(\bm{z})}_{2}}
\end{equation}
where the minimum is over all the training data points indexed by $i$. 
\section{Proof of \refprop{prop:identifiability}}
\label{appendix:identifiability_proof} 
\idenprop*

\begin{proof}
\textbf{First Claim:}
To identify the causal effect of \prop $A$ on the label $Y$, we need access to the interventional distribution $P(Y|do(A))$. For example, in the case when $A$ is binary random variable the average causal effect of $A$ on $Y$ is given by $TE_{A} = \mathbb{E}[Y|do(A)=1] - \mathbb{E}[Y|do(A)=0]$. We can write $P(Y|do(A))$ as:
\begin{equation}
\begin{aligned}
    P(Y|do(A)) &= \sum_{X} \sum_{U} P(Y,X,U|do(A))\\
     &= \sum_{X} \sum_{U} \Bigg{\{} P(U) P(X|U,do(A)) P(Y|X) \Bigg{\}}\\
     &= \sum_{X} P(Y|X) \Bigg{\{} \sum_{U} P(U) P(X|U,do(A)) \Bigg{\}}\\
     &= \sum_{X} P(Y|X) P(X|do(A))\\
\end{aligned}
\end{equation}
Since we have access to the interventional distribution $P(X|do(A))$ and access to observational distribution $P(Y, X, A)$, we can also estimate $P(Y|do(A))$ using the above identity completing the first part of the proof.

\textbf{Second Claim:}
We are given the \props $\mathcal{V}=\{C, S\}$, but we do not know the distinction between the causal and spurious \prop beforehand. We will show that we can identify the interventional distribution $P(Y|do(A))$ where $A\in \mathcal{V}$. If $A=C$, then we have:
\begin{equation}
\begin{aligned}
    P(Y|do(C)) &= \sum_{S} P(Y,S|do(C))\\
    &= \sum_{S} P(S|do(C)) P(Y|S,C) = \sum_{S} P(S) P(Y|S,C)\\
    &= \sum_{\mathcal{V}\setminus A} P(\mathcal{V}\setminus A) P(Y|\mathcal{V})\\
\end{aligned}
\end{equation}
Then, when we have $A=S$, we have:
\begin{equation}
\begin{aligned}
    P(Y|do(S)) &= \sum_{C} P(Y,C|do(S))=\sum_{C} P(C|do(S)) P(Y|C,do(S))\\
    &= \sum_{C} P(C) P(Y|C,S) \quad\quad\quad\quad  \because (Y\perp S | C) \\
    &= \sum_{\mathcal{V}\setminus A} P(\mathcal{V}\setminus A) P(Y|\mathcal{V})\\
\end{aligned}
\end{equation}
Thus, $P(Y|do(A))=\sum_{\mathcal{V}\setminus A} P(\mathcal{V}\setminus A) P(Y|\mathcal{V})$ for all $A\in \mathcal{V}$ and could be estimated from pure observational data.

\end{proof}

\section{Failure of Mouli on DGP-2 and DGP3}
\label{appendix:mouli_dgp_failure}

\citet{mouli2022asymmetry} define a particular DGP for their task and propose a score that identifies the invariant transformations.
DGP-2 and DGP-3 in \reffig{fig:dgp} adapt the causal graph taken in their work to our setting, where the unobserved variable associated with every transformation is replaced with  an observed \prop.
In addition, we add an additional level of complexity to the DGP by introducing the unobserved confounding variable $U$ that introduces a spurious correlation between different \props in DGP-2 and between an \prop and task label in DGP-3. 
The graph in DGP-1 is different from their setting and thus cannot use their method. This shows that the method proposed in their work doesn't generalize to different DGPs.  
Below, we show that their method will be able to identify the spurious \prop in the DGP-2 but fail to do so in DGP-3.

\begin{restatable}{corollary}{moulicorol}
 Let the observed input $X$ be defined as the concatenation of the $C,S$ and $x$ in DGP-2 i.e. $X=[C,S,x]$ and $A$ and $x$ in DGP-3 i.e. $X=[A,x]$. Then, Theorem 1 in \citet{mouli2022asymmetry} will correctly identify the spurious \prop in DGP-2. For DGP-3, it will incorrectly claim that there is an edge between $A$ and $Y$ even if it is non-existent in the original graph. 
\end{restatable}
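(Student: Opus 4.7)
The plan is to exploit the structural asymmetry between DGP-2 and DGP-3 in the conditional independencies they induce. In DGP-2 the label $Y$ is d-separated from the spurious attribute $S$ given the causal attributes $(C, x)$, whereas in DGP-3 the unobserved confounder $U$ opens a backdoor path between $A$ and $Y$ that cannot be blocked by any observed variable. Since Mouli's Theorem~1 labels a subset of attributes as spurious exactly when enforcing invariance to its transformation does not hurt training loss, these two independence statements will translate directly into correct and incorrect identification respectively.

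First I would formalize Mouli's criterion in our notation: the score assigned to a candidate subset $\{A'\}$ is (roughly) the excess training loss of the best classifier constrained to satisfy $c(h(X)) = c(h(\tau(X)))$ over the unconstrained classifier, and the subset with the smallest such score---including the empty subset, which corresponds to ``nothing is spurious''---is returned. For DGP-2 with $X=[C,S,x]$, d-separation yields $Y \perp S \mid C, x$, because $C$ blocks the only backdoor through $U$ and there is no direct $S \to Y$ edge; hence $P(Y\mid X)=P(Y\mid \tau(X))$ whenever $\tau$ acts only on the $S$-coordinate, the invariance constraint is costless, and $\{S\}$ attains the minimum score. For DGP-3 with $X=[A,x]$, even without the red edge $A\to Y$, the path $A \leftarrow U \to Y$ is open given $\{x\}$ alone, so $Y \not\perp A \mid x$ in general; the constrained classifier therefore strictly underperforms, $\{A\}$ receives a positive score, and the empty subset wins, causing the method to infer an $A \to Y$ edge.

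To make the DGP-3 half fully rigorous I would exhibit a concrete confounded distribution---for instance, a binary $U$ with $A \approx U$ and $Y \approx U$ conditioned on $x$---and verify that the unconstrained Bayes-optimal classifier achieves strictly lower loss than any $\tau$-invariant one, giving a strict inequality in the scores. The main obstacle I anticipate is the precise translation of Mouli's transformation/lumpability framework into our graphical DGPs: I must argue that the bit-flip or permutation acting on the $A$-coordinate of $X$ is an admissible symmetry transformation in their sense (respecting their equivalence-class structure), and that their scoring function is strictly monotone in the mismatch between $P(Y\mid X)$ and $P(Y\mid \tau(X))$, so that the strict conditional dependence in DGP-3 genuinely forces $\{A\}$ above the empty subset in their ranking rather than merely tying with it.
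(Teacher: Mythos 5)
Your proposal follows essentially the same route as the paper: both arguments reduce Mouli's criterion to a conditional-independence test and then apply d-separation, using $S \perp Y \mid C, x$ in DGP-2 (the backdoor through $U$ being blocked by $C$) and the unblockable path $A \leftarrow U \to Y$ in DGP-3 to get the positive and negative results respectively; the paper simply formalizes the score directly as $|P(Y\mid \Gamma_N(X)) - P(Y\mid X)|_{TV}$ rather than as excess constrained training loss. The one piece you should add for completeness is the check the paper does for the causal attribute in DGP-2, namely that $C \not\perp Y \mid S, x$ gives $\{C\}$ a strictly positive score, so the method does not also (incorrectly) flag $C$ as spurious.
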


\begin{proof}[Proof Sketch]
For a \prop $A$ to be spurious, the score proposed in Theorem 1 in \citet{mouli2022asymmetry} requires conditional independence of the $A$ with the task label $Y$ given the rest of the observed \props and core input feature $x$. For DGP-2, we show that both the spurious \prop $S$ correctly satisfy that condition whereas the causal \prop $C$ correctly doesn't satisfy satisfy the condition. Then for DGP-3, we show that even if the \prop $A$ is spurious, it cannot be conditionally independent to Y given $x$ due to unblocked path $A\rightarrow U \rightarrow Y$ in the graph. See the complete proof below.
\end{proof}

\begin{proof}

Theorem 1 in \citet{mouli2022asymmetry} defines a score, henceforth called \emph{mouli's score}, to identify whether there is an edge between any node $N$ in the graph and the main task label $Y$. There is no edge between $N$ and $Y$ iff for all $X$ :
\begin{equation}
\label{eq:mouli_score_main}
|P(Y|\Gamma_{N}(X),U_{Y}) - P(Y|X,U_{Y})|_{TV} = 0
\end{equation}
where $\Gamma_{N}$ is the most-expressive representation that is invariant with respect to the node $N$.

\textbf{Case 1 -- DGP-2: }
For DGP-2 when the node $N$ is causal ($C$), we have $X=[x,C,S]$, $\Gamma_{C}(X) = [x,S]$ and $U_{Y} = \phi$. Thus, the mouli's score becomes:
\begin{equation}
\begin{aligned}
        |P(Y|x,S) - P(Y|x,C,S)|_{TV} \neq 0
\end{aligned}
\end{equation}
since for at least one combination of $x,C,S$ we have $P(Y|x,S)\neq P(Y|x,C,S)$ since $C \not\perp Y |S,x$ in DGP-2. Thus, mouli's score doesn't incorrectly mark causal node $C$ as spurious.  Next for a spurious node $S$, we have $\Gamma_{S}(X) = [x,C]$; thus the mouli's score becomes:
\begin{equation}
\begin{aligned}
        |P(Y|x,C) - P(Y|x,C,S)|_{TV} = 0
\end{aligned}
\end{equation}
for all $x,C,S$ we have $P(Y|x,C) = P(Y|x,C,S)$ since $S \perp Y | C,x$. Thus the mouli's score correctly identifies the spurious node $S$.

\textbf{Case 2 -- DGP-3: }
For DGP-3, when the node $N=A$, we have $X=[x, A]$, $\Gamma_{A}(X) = [x]$ , and if is spurious i.e. there is no edge from $A$ to $Y$ in the actual graph then the mouli's score becomes:
\begin{equation}
\begin{aligned}
    |P(Y|x) - P(Y|x,A)|_{TV} \neq 0
\end{aligned}
\end{equation}
for all $x,A$ since $A \not\perp Y|x \implies P(Y|x) \neq P(Y|x,A)$ for atleast one setting of $(x,A,Y)$. Thus the mouli's score will be non-zero and will fail to identify $A$ as a spurious node.
    
\end{proof}

\section{Proof of \reftheorem{theorem:lambda_sufficient}}
\label{appendix:proof_lambda_sufficient}

We start by formalizing the assumption we made about the disentangled latent space in \refsec{subsec:theoretical_analysis_robustness}.

\begin{assumption}[Disentangled latent space]
\label{assm:disentagled_latent}
Let $\mathcal{A}=\{ca_{1},\ldots,ca_{K},sp_{1},\ldots,sp_{J}\}$ be the set of \prop given to us. The latent representation $\bm{z}$ is disentangled and is of form $[\bm{z}_{ca_{1}},\ldots,\bm{z}_{ca_{K}},\bm{z}_{sp_{1}},\ldots,\bm{z}_{sp_{J}}]$, where  $\bm{z}_{ca_{k}}\in \mathbb{R}^{d_{k}}$ is the features corresponding to causal \prop $``ca_{k}"$ and $\bm{z}_{sp_{j}}\in \mathbb{R}^{d_{j}}$ are the features corresponding to spurious \prop $``sp_{j}"$.
Here $d_{k}$ and $d_{j}$ are the dimensions of $\bm{z}_{ca_{k}}$ and $\bm{z}_{sp_{j}}$ respectively.  
\end{assumption}

Now we are ready to formally state the main theorem and its proof that shows that it is sufficient to have the correct ranking of the causal effect of causal and spurious \props to learn a classifier invariant to the spurious \props using one instantiation of our regularization objective (\ref{def:cereg_theory}). Our proof goes in two steps:
\begin{enumerate}[leftmargin=*,topsep=0pt,noitemsep]{}
    \item \textbf{First Claim}: To prove that the desired classifier will be preferred by our loss objective over the classifier learned by the max-margin objective (henceforth denoted as undesired), we compare the loss incurred by both the classifier w.r.t to our training objective. Next, we show that given the ranking of the causal effect of causal and spurious \props follows the relation mentioned in the theorem statement, there always exists a regularization strength when the desired  classifier has a lower loss than the undesired one w.r.t to our training objective. 
    \item \textbf{Second Claim}: Using our first claim we show that the desired classifier will have a lower loss than any other classifier that uses the spurious \prop. Thus the desired classifier is preferred over any classifier that uses the spurious \prop using our loss objective. Then we show that among all the classifier that only uses the causal \prop for prediction, again our loss objective will select the desired classifier. 
\end{enumerate}

\tethm* 

\begin{proof}

\textbf{Proof of Part 1:}
Since we are training our task classifier using the max-margin objective (\refeqn{eq:max_margin_obj_prelim}) thus over training objective becomes:
\begin{equation}
    \mathcal{L}_{\oursshort}\big{(}c(h(\bm{x}))\big{)} \coloneqq \mathcal{L}_{mm}\big{(}c(h(\bm{x})),y\big{)} + R\cdot \sum_{k=1}^{K} \lambda_{ca_{k}} \norm{\bm{w}_{ca_{k}}}_{p} + \sum_{j=1}^{J}\lambda_{sp_{j}} \norm{\bm{w}_{sp_{j}}}_{p} 
\end{equation}
where $\norm{\cdot}_{p}$ is the $L_{p}$ norm. For ease of exposition, we will denote $c(h(\bm{x}))$ as $c(\bm{z})$  where $\bm{z}$ is the latent space representation of $\bm{x}$ given by $\bm{z}=h(\bm{x})$. Next, we will show that there always exists a regularization strength $R\geq0$ s.t. $\mathcal{L}_{\oursshort}(c^{des}\big{(}\bm{z})\big{)}< \mathcal{L}_{\oursshort}\big{(}c^{mm}(\bm{z})\big{)}$ given $\operatorname{mean}\left(\left\{\frac{\lambda_{ca_{k}}}{\lambda_{sp_{j}}} \cdot \eta_{k,j}\right\}_{k \in [K], j \in [J]}\right) < \frac{J}{K}$ (from the main statement of \reftheorem{theorem:lambda_sufficient}). To have $\mathcal{L}_{\oursshort}(c^{des}\big{(}\bm{z})\big{)}< \mathcal{L}_{\oursshort}\big{(}c^{mm}(\bm{z})\big{)}$ we need to select the regularization strength $R\geq0$ s.t:
\begin{equation}
\label{eq:claim2_loss_compare}
\begin{aligned}
    \mathcal{L}_{mm}\big{(}c^{des}(\bm{z}),y\big{)} &+ R \cdot \Bigg{\{} \sum_{k=1}^{K} \lambda_{ca_{k}} 
     \norm{\bm{w}^{des}_{ca_k}}_{p}\Bigg{\}} < \\
    &\mathcal{L}_{mm}\big{(}c^{mm}(\bm{z}),y\big{)} + R \cdot \Bigg{\{} \sum_{k=1}^{K} \lambda_{ca_k} \norm{\bm{w}^{mm}_{ca_k}}_{p} + \sum_{j=1}^{J} \lambda_{sp_j} \norm{\bm{w}^{mm}_{sp_j}}_{p} \Bigg{\}}
\end{aligned}
\end{equation}
The max-margin objective (see \refsec{subsec:app_max_margin_obj} for background) given is given by $\mathcal{L}_{mm}\big{(}c(\bm{z}),y\big{)} = (-1)\cdot \min_{i} \big{\{}\frac{y^{i}c(\bm{z}^{i}))}{\norm{c(\bm{z})}_{2}}\big{\}}$ where $\bm{z}^{i}$ is the latent space representation of the input $\bm{x}^{i}$ with label $y^{i}$ and $\norm{c(\bm{z})}_{2}$ is the L2-norm of the weight vector of the classifier $c(\bm{z})$. The norm of both the  classifier $c^{des}(\bm{z})$ and  $c^{mm}(\bm{z})$ is $1$ (from the statement of this \reftheorem{theorem:lambda_sufficient}).
Substituting the max-margin loss for both the classifier in \refeqn{eq:claim2_loss_compare} we get:
\begin{equation}
\label{eq:claim2_mm_subs}
\begin{aligned}
    (-1) \cdot & \min_{i}\Big{\{} y^{i} c^{des}(\bm{z}^{i}) \Big{\}} + R \cdot \Bigg{\{} \sum_{k=1}^{K} \lambda_{ca_{k}} \norm{\bm{w}^{des}_{ca_{k}}}_{p} \Bigg{\}} < \\
    & (-1) \cdot \min_{j}\Big{\{} y^{j} c^{mm}(\bm{z}^{j}) \Big{\}} + R \cdot \Bigg{\{} \sum_{k=1}^{K} \lambda_{ca_{k}} \norm{\bm{w}^{mm}_{ca_{k}}}_{p} + \sum_{j=1}^{J} \lambda_{sp_{j}} \norm{\bm{w}^{mm}_{sp_{j}}}_{p} \Bigg{\}}
\end{aligned}
\end{equation}
Rearranging the above equation we get:
\begin{equation}
\label{eq:claim2_final_r_inequality}
\begin{aligned}
     R \cdot & \underbrace{\Bigg{\{} \sum_{k=1}^{K} \lambda_{ca_{k}} \Big{[}\norm{\bm{w}^{mm}_{ca_{k}}}_{p} -\norm{\bm{w}^{des}_{ca_{k}}}_{p} \Big{]} + \sum_{j=1}^{J} \lambda_{sp_{j}} \norm{\bm{w}^{mm}_{sp_{j}}}_{p} \Bigg{\}}}_{LHS-term 1} > \\
     & \quad \quad \quad \quad \quad \quad \quad \quad \quad \quad \quad \quad \quad \min_{j}\Big{\{} y^{j} c^{mm}(\bm{z}^{j}) \Big{\}} - \min_{i}\Big{\{} y^{i} c^{des}(\bm{z}^{i}) \Big{\}}
\end{aligned}
\end{equation}

If the LHS-term 1 in the above equation is greater than $0$, then we can always select a regularization strength $R>0$ s.t. the above inequality is always satisfied, whenever
\begin{equation}
\label{eq:claim2_r_condition}
    R > \frac{\min_{j}\Big{\{} y^{j} c^{mm}(\bm{z}^{j}) \Big{\}} - \min_{i}\Big{\{} y^{i} c^{des}(\bm{z}^{i}) \Big{\}}}{ \Bigg{\{} \sum_{k=1}^{K} \lambda_{ca_{k}} \Big{[}\norm{\bm{w}^{mm}_{ca_{k}}}_{p} -\norm{\bm{w}^{des}_{ca_{k}}}_{p} \Big{]} + \sum_{j=1}^{J} \lambda_{sp_{j}} \norm{\bm{w}^{mm}_{sp_{j}}}_{p} \Bigg{\}} }
\end{equation}
Next we will show that given $\operatorname{mean}\left(\left\{\frac{\lambda_{ca_{k}}}{\lambda_{sp_{j}}} \cdot \eta_{k,j}\right\}_{k \in [K], j \in [J]}\right) < \frac{J}{K}$, the LHS-term1 in \refeqn{eq:claim2_final_r_inequality} is always greater than 0. For LHS-term1 to be greater than 0 we need: 
\begin{equation}
\begin{aligned}
\label{eq:main_lambda_inequality}
    \sum_{k=1}^{K} \lambda_{ca_{k}} \Big{[}\norm{\bm{w}^{mm}_{ca_{k}}}_{p} -\norm{\bm{w}^{des}_{ca_{k}}}_{p} \Big{]} &> (-1) \cdot \sum_{j=1}^{J} \lambda_{sp_{j}} \norm{\bm{w}^{mm}_{sp_{j}}}_{p}\\
    \implies \sum_{k=1}^{K} \lambda_{ca_{k}} \Big{[} \norm{\bm{w}^{des}_{ca_{k}}}_{p} - \norm{\bm{w}^{mm}_{ca_{k}}}_{p} \Big{]} &< \sum_{j=1}^{J} \lambda_{sp_{j}} \norm{\bm{w}^{mm}_{sp_{j}}}_{p}\\
\end{aligned}
\end{equation}
Since $\lambda_{(\cdot)} = 1/|\hat{TE}_{(\cdot)}|$ and $|\hat{TE}_{(\cdot)}| \in [0,1]$ we have $\lambda_{(\cdot)}>0$ (see \refsec{subsec:theoretical_analysis_robustness} for discussion). From the main statement of this theorem we have $\bm{w}_{sp_{j}}^{mm}\neq \bm{0} \implies \norm{\bm{w}_{sp_{j}}^{mm}}_{p} >0$ for all value of ``$p$''. Next, we have the following two cases:

\textbf{Case 1} $\Big{(}$ $\sum_{k=1}^{K} \lambda_{ca_{k}} \Big{[} \norm{\bm{w}^{des}_{ca_{k}}}_{p} - \norm{\bm{w}^{mm}_{ca_{k}}}_{p} \Big{]} \leq 0 \Big{)}$: 
For this case the above \refeqn{eq:main_lambda_inequality} is trivially satisfied since $\sum_{j=1}^{J} \lambda_{sp_{j}} \norm{\bm{w}^{mm}_{sp_{j}}}_{p}>0$ as $\forall j\in[J], \>\> \lambda_{sp_{j}}> 0$ and $\norm{\bm{w}_{sp_{j}}^{mm}}_{p}>0$. 


\textbf{Case 2} $\Big{(}$ $ \sum_{k=1}^{K} \lambda_{ca_{k}} \Big{[} \norm{\bm{w}^{des}_{ca_{k}}}_{p} - \norm{\bm{w}^{mm}_{ca_{k}}}_{p} \Big{]}>0 \Big{)} $:  Since for all $j\in[J]$ we have $\lambda_{sp_{j}}> 0$ and $\norm{\bm{w}_{sp_{j}}^{mm}}_{p}>0$,
rearranging we get:
\begin{equation}
\begin{aligned}
    \sum_{j=1}^{J} \frac{\lambda_{sp_{j}} \norm{\bm{w}^{mm}_{sp_{j}}}_{p}}{\sum_{k=1}^{K} \lambda_{ca_{k}} \Big{[} \norm{\bm{w}^{des}_{ca_{k}}}_{p} - \norm{\bm{w}^{mm}_{ca_{k}}}_{p} \Big{]}} &> 1 \\
    \sum_{j=1}^{J} \frac{1}{\sum_{k=1}^{K}\big{(}\frac{\lambda_{ca_{k}}}{\lambda_{sp_{j}}}\big{)} \big{(}\frac{\norm{\bm{w}_{ca_{k}}^{des}}_{p}-\norm{\bm{w}_{ca_{k}}^{mm}}_{p}}{\norm{\bm{w}_{sp_{j}}^{mm}}_{p}}\big{)}} &>1\\
    \frac{1}{\sum_{j=1}^{J} \Bigg{\{} \underbrace{\frac{1}{\sum_{k=1}^{K}\big{(}\frac{\lambda_{ca_{k}}}{\lambda_{sp_{j}}}\big{)} \big{(}\frac{\norm{\bm{w}_{ca_{k}}^{des}}_{p}-\norm{\bm{w}_{ca_{k}}^{mm}}_{p}}{\norm{\bm{w}_{sp_{j}}^{mm}}_{p}}\big{)}}}_{\text{LHS-term 3}} \Bigg{\}}} \cdot J &< J
\end{aligned}
\end{equation}
The above equation is the harmonic mean of the LHS-term 3 and LHS-term 3 is $>0$ for all values of $j$. We know that the harmonic mean is always less than or equal to the arithmetic mean for a set of positive numbers (Inequalities book by GH Hardy \cite{hm_am_ineq_book}).  Thus the above inequality is satisfied if the following inequality is satisfied:
\begin{equation}
\label{eq:claim2_lambda_alpha_ineqality}
\begin{aligned}
    \sum_{j=1}^{J}\sum_{k=1}^{K} \frac{\big{(}\frac{\lambda_{ca_{k}}}{\lambda_{sp_{j}}}\big{)} \big{(}\frac{\norm{\bm{w}_{ca_{k}}^{des}}_{p}-\norm{\bm{w}_{ca_{k}}^{mm}}_{p}}{\norm{\bm{w}_{sp_{j}}^{mm}}_{p}}\big{)}}{J\cdot K} <  \frac{J}{K}
\end{aligned}
\end{equation}
Since the above condition on regularization strength of features is satisfied (given in the main statement of this theorem), we have $\mathcal{L}_{\oursshort}(c^{des}\big{(}\bm{z})\big{)}< \mathcal{L}_{\oursshort}\big{(}c^{mm}(\bm{z})\big{)}$ thus completing the proof. The above \refeqn{eq:claim2_lambda_alpha_ineqality} states that the mean of $\big{(}\frac{\lambda_{ca_{k}}}{\lambda_{sp_{j}}}\big{)} \big{(}\frac{\norm{\bm{w}_{ca_{k}}^{des}}_{p}-\norm{\bm{w}_{ca_{k}}^{mm}}_{p}}{\norm{\bm{w}_{sp_{j}}^{mm}}_{p}}\big{)}$ should be less than $J/K$. Thus the above equation is also satisfied if individually all the terms considered when taking the mean are individually less than $J/K$. Thus, a stricter but more intuitive condition on $\lambda$'s s.t \refeqn{eq:claim2_lambda_alpha_ineqality} is satisfied is given by:
\begin{equation}
\begin{aligned}
    \big{(}\frac{\lambda_{ca_{k}}}{\lambda_{sp_{j}}}\big{)} \eta_{k,j}  &< \frac{J}{K} \\
    \hat{TE}_{sp_{j}}  \big{(} \frac{K}{J} \eta_{k,j} \big{)} &< \hat{TE}_{ca_{k}} 
\end{aligned}
\end{equation}
where $\eta_{k,j} = \big{(}\frac{\norm{\bm{w}_{ca_{k}}^{des}}_{p}-\norm{\bm{w}_{ca_{k}}^{mm}}_{p}}{\norm{\bm{w}_{sp_{j}}^{mm}}_{p}}\big{)}$.


\textbf{Proof of Part 2:}
Given $p=2$ and since $K=1$ and $J=1$, the desired classifier takes form $c^{des}(\bm{z})=\bm{w}^{des}_{ca,1}\cdot \bm{z}_{ca,1} = \bm{w}^{des}_{ca}\cdot \bm{z}_{ca}$. For ease of exposition, we will drop $``1"$ from the subscript which denotes the feature number. Let $c^{s}(\bm{z})=\bm{w}^{s}_{ca}\cdot\bm{z}_{ca}+\bm{w}^{s}_{sp}\cdot\bm{z}_{sp}$ be any classifier which uses spurious feature $\bm{z}_{sp}$ for its prediction s.t. $\bm{w}_{sp}^{s}\neq0$. From statement 1 of this theorem, when the regularization strength are related such that $\lambda_{ca}\cdot \eta < \lambda_{sp}$, we have $\mathcal{L}_{\oursshort}(c^{des}(\bm{z}))<\mathcal{L}_{\oursshort}(c^{s}(\bm{z}))$ where $\eta = \frac{\norm{\bm{w}_{ca}^{des}}_{2}-\norm{\bm{w}_{ca}^{s}}_{2}}{\norm{\bm{w}_{sp}^{s}}_{2}}$. Since the search space of the linear classifiers is constrained to have the norm of parameters equal to $1$ we have $\norm{\bm{w}_{ca}^{s}}_{2}^{2}+\norm{\bm{w}_{sp}^{s}}_{2}^{2}=1$ and $\norm{\bm{w}_{ca}^{des}}_{2}=1$. Let $\norm{\bm{w}_{ca}^{s}}=\theta\in[0,1)$, then we have $\norm{\bm{w}_{sp}^{s}}=\sqrt{1-\theta^{2}}$. Substituting these values in $\eta$ we get: 
\begin{equation}
    \begin{aligned}
        \eta(\theta) &= \frac{1-\theta}{\sqrt{1-\theta^{2}}} \\ 
        &= \sqrt{\frac{1-\theta}{1+\theta}} \quad\quad  (\theta\neq 1)\\
    \end{aligned}
\end{equation}
Thus $\eta(\theta)$ is a decreasing function for $\theta\in[0,1)$ and has its maximum value $\eta_{max}=1$ at $\theta=0$. Thus, if the regularization strengths are related such that $\lambda_{ca}\cdot \eta_{max}<\lambda_{sp} \implies \lambda_{ca}<\lambda_{sp}$ we have $\mathcal{L}_{\oursshort}(c^{des}(\bm{z}))<\mathcal{L}_{\oursshort}(c^{s}(\bm{z}))$ for all possible $c^{s}(\bm{z})$. Since we are given that $\lambda_{ca}<\lambda_{sp}$ in the second statement of the theorem $c^{des}(\bm{z})$ is preferred by our regularization objective among all possible $c^{s}(\bm{z})$. Next, among the classifier $c^{\psi}(\bm{z})\neq c^{des}(\bm{z})$ which only uses the causal feature for prediction, we have:
\begin{equation}
\begin{aligned}
    \mathcal{L}_{\oursshort}(c^{des}(\bm{z})) = \mathcal{L}_{mm}(c^{des}(\bm{z})) + R \cdot \Big{\{} \lambda_{ca}\norm{\bm{w}_{ca}^{des}}_{2} \Big{\}} = \mathcal{L}_{mm}(c^{des}(\bm{z})) + R \lambda_{ca} \cdot 1 \\
\end{aligned}
\end{equation}

\begin{equation}
\begin{aligned}
    \mathcal{L}_{\oursshort}(c^{\psi}(\bm{z})) = \mathcal{L}_{mm}(c^{\psi}(\bm{z})) + R \cdot \Big{\{} \lambda_{ca}\norm{\bm{w}_{ca}^{\psi}}_{2} \Big{\}} = \mathcal{L}_{mm}(c^{\psi}(\bm{z})) + R \lambda_{ca} \cdot 1 \\
\end{aligned}
\end{equation}
Since the desired classifier has maximum margin when using only causal feature for prediction (by definition), we have $\mathcal{L}_{mm}(c^{des}(\bm{z}))<\mathcal{L}_{mm}(c^{\psi}(\bm{z}))$ for all other classifiers ($c^{\psi}(\bm{z})$) which only uses the causal feature for prediction. Thus the desired classifier is the global optimum for our loss function $\mathcal{L}_{\oursshort}$ when the classifiers are constrained to have parameters with a norm equal to 1 thereby completing the proof.

\end{proof}
\section{Experimental Setup}
\label{appendix:expt_setup}

\begin{table}[t]
\centering
\caption{\textbf{Conditional Probability Distribution for \syn dataset} ($P(Y|\text{casual},\text{confound})$):  The rows represent the different settings of the parent \props --- causal and confound and the columns represent different values the task label $Y$ can take. Each cell represents the probability of observing the task label given the parent's setting.
}
\label{tbl:syn_cpd}
\begin{tabularx}{0.75\textwidth}{c | *{2}{Y}}
\toprule
 & $Y=0$ & $Y=1$\\
\cmidrule(lr){1-3}
\emph{causal}=0, \emph{confound}=0 & 0.99 & 0.01 \\
\emph{causal}=0, \emph{confound}=1 & 0.30 & 0.70 \\
\emph{causal}=1, \emph{confound}=0 & 0.70 & 0.30 \\
\emph{causal}=1, \emph{confound}=1 & 0.01 & 0.99 \\
\bottomrule

\end{tabularx}
\end{table}

\begin{table}[t]
\centering
\caption{\textbf{World List corresponding to every Attribute for \syn dataset.} Every value of a given \prop is associated with the following list of words. For creating a sentence we take the values of every \prop, randomly sample 3 words from the corresponding set, and concatenate them to form the final sentence i.e. sentence = [causal words, confound words, spurious words].  
}
\label{tbl:syn_wordlist}
\begin{tabularx}{\textwidth}{c *{2}{Y}}
\toprule
 \Prop & Value &  Word List\\
\midrule 
\multirow{4}{*}{Causal} & 0 &  apple, mango, tomato, cherry,
                                pear, fruit, banana, pear, grapes\\
\cmidrule(lr){2-3}
                        & 1 & rose, jasmine, tulip, lotus, daisy,
                        sunflower, flower, marigold, dahlia, orchid \\
\cmidrule(lr){1-3}
\multirow{4}{*}{Confound} & 0 & bad, inferior, substandard, inadequate,
                                rotten, pathetic, faulty, defective,\\
\cmidrule(lr){2-3}
                        & 1 & good, best, awesome, teriffic,
                        mighty, gigantic, tremendous, mega,  colossal,\\
\cmidrule(lr){1-3}
\multirow{4}{*}{Spurious} & 0 & horror, gore, crime, thriller,
                                mystery, gangster,
                                drama, dark\\
\cmidrule(lr){2-3}
                        & 1 & comedy, romance, fantasy, sports,
                        epic, animated, adventure,
                        science\\
\bottomrule

\end{tabularx}
\end{table}

\begin{table}[t]
\centering
\caption{\textbf{Automatically generated counterfactual example using GPT3.5 for \aae dataset. }
}
\label{tbl:cfgen_aae}
\begin{tabularx}{\textwidth}{c *{2}{Y}}
\toprule
Input Race & Input Sentence & Generated Counterfactual\\
\cmidrule(lr){1-3}
white & hey guess where i just ate at ? cracker barrel . they had plenty of mashed potatoes . just thought i'd let you know . twitter-entity & Hey, guess where I jus' ate at? Cracker Barrel! Dey had plenty o' mashed potatoes. Jus' thought I'd letchu know. \#CrackerBarrelCravings \\ 
\cmidrule(lr){1-3}
white & girl don't sit there and listen to him talk about lacrosse when we all know you have no clue what he is saying & Girl, don't siddity there listenin' to him talkin' 'bout lacrosse when we all know you ain't got no clue 'bout what he sayin'.\\
\cmidrule(lr){1-3}
black & we don't wanna neva end & We don't want it to ever end. \\
\cmidrule(lr){1-3}
black & i wonder if reggie evan still thnk blatche and joe johnson on the same level as lebron & I wonder if Reggie Evans still regard Blatche and Joe Johnson at the same level as LeBron.\\
\bottomrule

\end{tabularx}
\end{table}

\begin{table}[t]
\centering
\caption{\textbf{Words associated with different \props in \civil dataset. }
}
\label{tbl:cfword_civil}
\begin{tabularx}{\textwidth}{c *{1}{Y}}
\toprule
\Prop & Associated Word List\\
\cmidrule(lr){1-2}
Race & black, white, supremacists, obama, clinton, trump,
                john, negro, blacks, whites, nigga \\
\cmidrule(lr){1-2}
Gender & male, patriarchy, feminist, woman, men, man, female,
                women, he, she, him, his, mother, father, feminists, boy, girl \\ 
\cmidrule(lr){1-2}
Religion & christian, muslim, allah, jesus, john, mohammed,
            catholic, priest, church, islamic, islam, hijab\\
\bottomrule

\end{tabularx}
\end{table}

\subsection{Datasets}
\label{subsec:app_dataset}
We perform extensive experiments on 6 datasets spanning synthetic, semi-synthetic, and real-world datasets. In \refsec{sec:empresult} we give results for 3 such datasets --- \syn which is a synthetic dataset, \mnist which is a semi-synthetic dataset and \aae which is a real-world dataset. In \refsec{appendix:empresult}, we further evaluate our methods and other baselines on one additional real-world dataset (\civil) which has three different subsets. Below we give a detailed description of all the datasets.

\textbf{\syn Dataset. } To create this \syn dataset we first create a tabular dataset with three binary \props --- \emph{causal, spurious} and \emph{confound}. The causal and confound \prop create the task label ($Y$) and the confound \prop creates the  spurious \prop. Causal and confound \props are independent, $P(\text{causal}=0)=P(\text{causal}=1)=0.5$ and $P(\text{confound}=0)=P(\text{confound}=1)=0.5$. The conditional probability distribution (CPD) of the task label given to the parents is given in \reftbl{tbl:syn_cpd}. The CPD for spurious \prop is not fixed i.e $P(\text{spurious}|\text{confound})=\kappa$, which we vary in our experiment to change the overall predictive correlation of spurious \prop with the task label. We then create two versions of this dataset \textbf{(1)} \synoc and \textbf{(2)} \synuc. In the \synoc dataset, we keep all the \props in the dataset but for \synoc to simulate the real-world setting where there is an unobserved confounding variable we remove the \emph{confound} \prop from the dataset.
Post this, we use this tabular dataset to generate textual sentences for every example. For each of the values of (observed) \props, we sample 3 words from a fixed set of words (separate for each value of \prop) that we append together to form the final sentence (see \reftbl{tbl:syn_wordlist} for the set of words corresponding to every \prop). 
The \synuc dataset corresponds to the DGP-3 in the \reffig{fig:dgp} where the unobserved node $U$ is the confound \prop and $A$ is the spurious \prop and there is no edge between A and task label Y.  
In our experiment, we sample 1k examples using the above methods and create an 80-20 split for the train and test set. Note that the predictive correlation mentioned in experiments and other tables for all versions of \syn dataset is  between the confound and spurious \prop given by $\kappa=P(\text{spurious}|\text{confound})$. Our experiments require access to the counterfactual example $\bm{x}_{a'} \sim \mathcal{Q}(\bm{x}_{a})$ where the \prop $A=a$ is changed to $A=a'$ in the input. To generate this counterfactual example we flip the value of the \prop in the given input and generate the corresponding sentence using the same procedure mentioned above.

\textbf{\mnist Dataset. } We use the MNIST to evaluate the efficacy of our method on  the vision dataset. Following \citet{mouli2022asymmetry}, we subsample only the digits $3$ (digit \prop label=0)  and $4$ (digit \prop label=1) from this dataset and create a synthetic task. To create this dataset, we first take a grayscale image (with digit \prop either $3$ or $4$). Then we add background color --- red labeled as 1 or green labeled as 0 --- to the image uniformly randomly i.e. $P(\text{color}|\text{digit})=0.5$. We create the task label using a deterministic function over the digit and color \prop formally defined as  $Y$ = color~XOR~digit where XOR is the exclusive OR operator. Thus the \prop, digit, and color are causal \props for this dataset. Next, to introduce spurious correlation we add rotation transformation to the image --- $0^{\degree}$ labeled as 1
 or $90^{\degree}$ labelled as 0. We vary the correlation between the causal \props (color and digit) and spurious \prop (rotation) by varying the CPD $P(\text{rotation}|\text{color, digit})$. Since the combined causal \prop label is the same as the task label the predictive correlation between the task label and spurious \prop is given by $\kappa =  P(\text{rotation}|\text{color, digit})$ which is vary in all our experiments. The above data-generating process resembles DGP-2 in the \reffig{fig:dgp} where the node $C$ is the combined causal \props (color and digit) and the node $S$ is the spurious \prop (rotation). The core input feature $x$ is an empty set in this dataset. All the \props combinedly create the final input image $X$. We sample 10k examples using the above process described above and create an 80-20 split for the train and test set. Our experiments require access to the counterfactual example $\bm{x}_{a'} \sim \mathcal{Q}(\bm{x}_{a})$ where the \prop $A=a$ is changed to $A=a'$ in the input. To generate this counterfactual example we flip the value of the relevant \prop the \prop label and generate the counterfactual image.

\textbf{\aae Dataset. } This is a real-world dataset where given a sentence the task is to predict the sentiment of the sentence. Following \citet{AdvRemYoav,our_probing_paper}, we simplify the task to predict the binary sentiment (Positive or Negative) given the tweet. Every tweet is also associated with the demographic \prop ``race'' which is correlated with the task label in the dataset. Following \citet{AdvRemYoav,our_probing_paper} and considering that changing the race of the person in the tweet should not affect the sentiment, we consider race as the spurious \prop.  We use the code made available by \citet{AdvRemYoav} to automatically label the tweet with the race that uses AAE (African-American English) and SAE (Standard American English) as a proxy for race. 
This data-generating process of this dataset resembles DGP-1 in \reffig{fig:dgp} since the sentiment (task) labels are annotated using a deterministic function given the input tweet (\cite{AdvRemYoav}).
The dataset \footnote{TwitterAAE dataset could be found online at: \url{http://slanglab.cs.umass.edu/TwitterAAE/}} and code \footnote{The code for \aae dataset acquisition and automatically labeling race information is available at: \url{https://github.com/yanaiela/demog-text-removal} }  to create the dataset are available online. We subsample 10k examples and use an 80-20 split for the train and test set. See \refsec{subsec:real_dataset_kappa} for details on how we vary the predictive correlation in this dataset between the task labels and spurious \prop in our experiment. 
Our experiments require access to the counterfactual example $\bm{x}_{a'} \sim \mathcal{Q}(\bm{x}_{a})$ where the \prop $A=a$ is changed to $A=a'$ in the input.
Since this is a real-world dataset where we don't have access to the data-generating process. Thus we take help from GPT3.5 (text-davinci-003 model) \cite{gpt3} to automatically generate the counterfactual tweet where the race \prop is changed. See \reftbl{tbl:cfgen_aae} for a sample of generated counterfactual tweets.

\textbf{\civil Datasets. } To further evaluate our result on another real-world dataset we conduct use another dataset \civil \footnote{Civil Comments dataset is available online at \url{https://www.tensorflow.org/datasets/catalog/civil_comments}} (WILDS dataset, \cite{wilds_dataset}). Given a sentence, the task is to predict the toxicity of the sentence which is a continuous value in the original dataset. In our experiment, we binarize this task to predict whether the sentence is toxic or not by labeling the sentence with toxicity score $\geq 0.5$  as toxic or otherwise non-toxic. We use a subset of the original dataset (CivilCommentsIdentities) which includes an extended set of auxiliary identity labels associated with the sentence. We finally select three different  identity labels (\emph{race}, \emph{gender}, and \emph{religion}). Race \prop takes two values \emph{black} or \emph{white}, Gender \prop takes two values \emph{male} or \emph{female}, and Religion \prop also takes two values \emph{muslim} or \emph{christian}. We expect that these identity \props to have zero causal effect on the task label since changing a person's race, gender, or religion in the sentence should not change the toxicity of the sentence. For each considered identity label, we create a corresponding different subset of the dataset named \civilrace, \civilgender, and \civilreligion. This data-generating process of this dataset also follows DGP-1 since toxicity (task) labels were generated from human annotators. We subsample 5k examples for \civilrace and \civilgender and 4k examples to create \civilreligion dataset that we use in our experiments. Then, we use an 80-20 split to create a train and test set. For details on how we vary the predictive correlation see \refsec{subsec:real_dataset_kappa}. In our experiment, for every input $\bm{x}$ we need access to the counterfactual where the \prop's value is changed in the input. To create such counterfactuals we use a deterministic function that remove the words related to the \prop from the sentence. Currently, we use a hand-crafted set of words for each of the \props for removal, we plan to replace that more natural counterfactual generated from generative models like GPT3. \reftbl{tbl:cfword_civil} show the set of words associated with every \prop that we use for removal.


\subsection{Dataset with varying levels of spurious Correlation. }
\label{subsec:real_dataset_kappa}
We create multiple subsets of the dataset with different levels of predictive correlation ($\kappa$) between the task label ($y$) and the \prop ($a$).   
The task label and all the \props we consider in our work are binary taking values from $0$ and $1$. Following \citet{our_probing_paper}, we define 4 subgroups in the dataset for each combination of $(y,a)$. The subset of the dataset with $(y=1,a=1)$ and $(y=0,a=0)$ is defined as majority group $S_{maj}$ where the task labels $y$ are correlated with the \prop. The remaining subset of the dataset where the correlation break is named $S_{min}$ which contains the dataset with $(y=1,a=0)$ and $(y=0,a=1)$. Next, we artificially vary the correlation between the \prop and the task label by varying $\kappa = Pr(y=a)$ i.e. the number of examples with the same task label and \prop. Following \citet{our_probing_paper}, we can reformulate $\kappa$ in terms of the size of the majority and minority groups:
\begin{equation}
    \kappa \coloneqq \frac{|S_{maj}|}{|S_{maj}|+|S_{min}|}
\end{equation}

For both \aae and \civil dataset, we consider 6 different settings of $\kappa$ from the set $\{0.5,0.6,0.7,0.8,0.9,0.99\}$ by artificially varying the size of $S_{maj}$ and $S_{min}$. To do so, we keep the size of $S_{maj}$ fixed, and then based on the desired value of $\kappa$ we determine the number of samples to take in $S_{min}$ using the above equation. Thus for different values of $\kappa$ the overall training dataset size ($|S_{maj}|+ |S_{min}|$) changes in \aae and \civil datasets. Thus it is important to  only draw an independent conclusion from the different settings of $\kappa$ in these datasets.

\subsection{Encoder for the different datasets. }
\label{subsec:encoders}
We give the details of how we encode different types of inputs to feed to the neural network in our experiment. For specific details of the rest of the architecture see the individual setup for every method in \refsubsec{subsec:app_causal_effect_estimators}, \ref{subsec:setup_ours_stage2} and \ref{subsec:setup_mouli}.

\paragraph{\syn Dataset. } We tokenize the sentence into a list of words and use 100-dimensional pretrained GloVe word embedding \cite{pennington2014glove} to get a vector representation for each word. Next, to get the final representation of a sentence we take the average of all the word embedding in the sentence. The word embedding is fixed and not trained with the model.  Post this we add an additional trainable fully connected layer (with output dimension 50) without any activation to get the final representation for the sentence used by different methods with different loss objectives (see \refsubsec{subsec:app_causal_effect_estimators}, \ref{subsec:setup_ours_stage2} and \ref{subsec:setup_mouli}). 

\paragraph{\mnist Dataset. } We directly take the image as input and normalize it by dividing it with a scalar 255. Post this we use a convolutional neural network for further processing the image. Specifically, we apply the following layers in sequence to get the final representation of the image.
\begin{enumerate}[leftmargin=25pt,topsep=0pt,noitemsep]{}
    \item 2D convolution layer, activation = relu, filter size = (3,3), channels = 32
    \item 2D max pooling layer, with filter size (2,2)
    \item 2D convolution layer, activation = relu, filter size = (3,3), channels = 64
    \item 2D max pooling layer, with filter size (2,2)
    \item 2D convolution layer, activation = relu, filter size = (3,3), channels = 128
\end{enumerate}
Next, we flatten the output of the above last layer to get the final representation of the image used by different methods with different loss objectives (see \refsubsec{subsec:app_causal_effect_estimators}, \ref{subsec:setup_ours_stage2} and \ref{subsec:setup_mouli}). 

\paragraph{\aae and \civil Datasets. } We use Hugging Face \cite{HuggingFace} transformer implementation of BERT \cite{devlin-etal-2019-bert} \emph{bert-base-uncased} model to encode the input sentence. We use the pooled output of the [CLS] token as the encoded representation of the input for further processing by other methods (see \refsubsec{subsec:app_causal_effect_estimators}, \ref{subsec:setup_ours_stage2} and \ref{subsec:setup_mouli}). We start with the pretrained weight and fine-tune the model based on the specific task.

\subsection{Setup: Causal Effect Estimators}
\label{subsec:app_causal_effect_estimators}
We use two different causal effect estimators --- \emph{Direct} (\refeqn{eq:direct_effect_estimator}) and \emph{Riesz} (\refeqn{eq:riesz_estimator}) --- to estimate the causal effect of a \prop on the task label $Y$ (see \refsec{subsec:app_causal_effect_defs} for details of the individual estimators). 

\paragraph{\emph{Direct} estimator in practice. }
For \emph{Direct} estimator we use a neural network to estimate $\mathbb{E}[Y|X,A]$. To get the causal effect we select the best model based either based on validation loss or validation accuracy in the prediction of the task label $Y$ given $(X,A)$. We refer to these two versions of \emph{Direct} estimators as \emph{Direct(loss)} and \emph{Direct(acc)} for the setting when validation loss and validation accuracy are used for selection respectively. To estimate $\mathbb{E}[Y|X,A]$ we use the following loss objective:
\begin{equation}
    \Big{[}Y - g(X,A)\Big{]}^{2} + R \cdot \text{L2loss}
\end{equation}
where $g(X,A)$ is the neural network predicting the task label $Y$ given input $(X,A)$, L2loss is the L2 regularization loss and R is the regularization strength hyperparameter. For \syn dataset, we don't apply L2loss, for \mnist dataset we do a hyperparameter search over $R\in \{0.0,0.1,1.0,10.0,100.0,200.0,1000.0\}$, for \aae we use $R\in \{ 0.0,10.0,100.0,1000.0 \}$ and \civil dataset we use $R\in \{ 0.0,1.0,10.0,100.0,200.0,1000.0 \}$.

\paragraph{\emph{Riesz} estimator in practice. }
To get the causal effect from \emph{Riesz} we optimize the loss function defined in \refeqn{eq:riesz_estimator}. We fix the regularization hyperparameter $R_{1}$ for RRloss to a fixed value $1$ in all the experiments and search over the L2loss regularization hyperparameter ($R_{2}$). For \syn dataset, we don't use the L2loss at all, for \mnist dataset  we search over $R_2 \in \{ 0.0,0.1,1.0,10.0,100.0,200.0,1000.0 \}$, for \aae dataset we use $R_{2} \in \{ 0.0,10.0,100.0,200.0,1000.0 \}$ and for \civil dataset we use $R_2 \in \{ 0.0,1.0,10.0,100.0,200.0,1000.0  \}$. Then similar to \emph{Direct} estimator, for selecting the best model for estimating the causal effect we use validation loss and validation accuracy of $g(X,A)$ for predicting the task label $Y$ using input $(X,A)$. We call the estimators \emph{Riesz(loss)} and \emph{Riesz(acc)} based on this selection criteria.

For both the estimator we train the model for 200 epochs/iterations in \syn dataset and 20 epochs for the rest of the datasets and use either validation loss or validation accuracy to select the best training epoch (as described above).




\subsection{Setup: \oursshort}
\label{subsec:setup_ours_stage2}
We use the regularization objective defined in \refeqn{eq:general_objective} and \ref{eq:lreg_practice} to train the classifier using our method \oursshort.
For each dataset, first, we encode the inputs to get a vector representation using the procedure described in \refsec{subsec:encoders}. We use cross-entropy objectives to train the task prediction ($\mathcal{L}_{task}$). 
Next, we take the causal effect estimated from Stage 1 and map it to the closest value in set $\{ -1.0,-0.5,-0.1,0.0,0.1,0.3,0.5,1.0\}$. For \syn and \mnist dataset we map the estimated treatment effect from Stage 1 to the closest value in set $\{-1.0, -0.7, -0.5, -0.3, -0.1, 0.0, 0.1, 0.3, 0.5, 0.7, 1.0\}$. The mapped causal effect is then used to regularize the model  using the regularization term $\mathcal{L}_{Reg}$ (see \refeqn{eq:general_objective} and \ref{eq:lreg_practice}).
Also, we search over the regularization strength $R$ (\refeqn{eq:general_objective}) to select the best model. For all the datasets we choose the value of $R$ from set $\{1,10,100,1000\}$. We train the model for 20 epochs for all the datasets. Next, to compare the results with other methods we select the best model (across different regularization hyperparameters and training epochs) that has \new{the highest  accuracy on the validation set. Here accuracy of the model is defined as:
\begin{equation}
\label{eq:model_accuracy_selection_crit}
    \text{accuracy of model} = \frac{\sum_{i=1}^{N} \mathds{1}(f(\bm{x}_i)=y_{i})}{N}
\end{equation}
where $\mathds{1(\cdot)}$ is the indicator function that is equal to $1$ if the argument is true and $0$ otherwise, ``$f$'' is the model, $\bm{x}_{i}$ is the input to the model and $y_{i}$ is the target label, and N is the total number of sample in the dataset.
}

\new{The abovementioned selection criteria assume that we don't know the spurious attribute. However, some of the baselines we considered in our work (JTT and IRM) do assume that the spurious \props are known and use them to train and select the best model that generalizes under spurious correlations. Thus for a fair comparison, in \refsec{subsec:app_empresult_stage2} and \reffig{fig:overall_syn_mnist_aae_selection_spurious_known}, we re-evaluate all the methods on all the datasets by relaxing this assumption. Now we select the configuration that performs the best on the following metric (see \refsec{subsec:baseline_eval_metric} for the definition of the individual metrics):
\begin{equation}
\label{eq:model_sp_known_selection_crit}
    \text{selection criteria} = \frac{\text{Majority Group Accuracy}+ \text{Minority Group Accuracy + (1-$\Delta$Prob)} }{3}
\end{equation}
}

\subsection{Setup: \mouli and other related baselines}
\label{subsec:setup_mouli}
As mentioned in \citet{mouli2022asymmetry}, given a set of \props ($\mathcal{A}$), we train two models for each subset ($\alpha$) of the \props ($\alpha \subset \mathcal{A}$)  \textbf{(1)} model trained to predict the task label $Y$ while being invariant to the \props in $\alpha$, \textbf{(2)} a model trained to predict the random label while being invariant to the \props in $\alpha$. Using these two models we compute the score for every $\alpha$ as defined in Equation 8 in \citet{mouli2022asymmetry}. Then we select the subset of \props with the lowest score as spurious and train a classifier to predict the task label and be invariant to this subset of \props. Similar to \cite{mouli2022asymmetry} we use counterfactual data augmentation (CAD) to impose invariance w.r.t to desired set of \props (denoted as \mouli in all our experiments). We also experiment with using \oursshort to impose instead of CAD by using a causal effect equal to 0 for the \props for which we want to impose invariance (see \refsec{subsec:baseline_eval_metric} for details). When using \oursshort we search over regularization hyperparameter $R$ from the set $\{ 1,10,100,1000\}$. Unless otherwise specified, we use the same selection criteria to select the best model for comparison with other methods as described in \refeqn{eq:model_accuracy_selection_crit}.

\subsection{Setup: \jtt baseline}
\new{We follow Algorithm 1 as described in \citet{jtt} and train the model in two steps. First, we train a model with cross-entropy loss over all the examples (ERM). For \synuc dataset, we tune the number epoch to train the model from set $\{4,8\}$ using the model selection criteria defined later in step 2 of this method. For \mnist and \aae we keep the number epochs fixed at 8 and 10 respectively for this first step. In the next step, we upsample the examples in the dataset that the  ERM model predicted wrong in the first step by a factor of $\lambda_{up}$. We tuned the hyperparameter $\lambda_{up}$ over the set  $\{2,4,8\}$ for \synuc dataset and over set $\{2,4\}$ for both \mnist and \aae datasets. We train the final model using the oversampled dataset for 20 epochs for \synuc, 20 epochs for \mnist, and 10 epochs for \aae dataset. Unless otherwise specified, we use the same selection criteria to select the best model using the validation dataset as described in \refeqn{eq:model_accuracy_selection_crit}.}

\subsection{Setup: \irm baseline}
\new{We implement the IRMv1 to train the model using IRM objective as described in \citet{IRM}. We use the cross-entropy loss for estimating the prediction error ($R^{e}$ in the IRMv1 equation). We compute the norm of the gradient term directly by squaring the gradient of cross-entropy loss w.r.t. the dummy classifier $w=1.0$ as described in the IRMv1 equation. We fine-tune the regularization coefficient $\lambda$ over the set $\{1e15,5e15\}$ for \synuc dataset, $\{1e14\}$ for \mnist dataset and $\{1e14,5e14,1e15\}$ for the \aae dataset. We used such high regularization to scale up the gradient penalty term to have the same scale as the cross-entropy loss term which otherwise would have dominated the training objective. Unless otherwise specified, we use the same selection criteria to select the best model using the validation dataset as described in \refeqn{eq:model_accuracy_selection_crit}.}

\subsection{Computing Resources}
\label{subsec:compute_resources}
We use an internal cluster of P40, P100, and V100 Nvidia GPUs to run all the experiments. We run each experiment for 3 random seeds and report the mean and standard error (using an error bar) for each of the metrics in our experiment of Stage 2 and other baselines and report the mean over 3 random runs for all the experiments in Stage 1 (Causal Effect Estimation).

\section{Additional Empirical Results}
\label{appendix:empresult}

\begin{table}[t]
\centering
\caption{\textbf{Estimated Causal Effect for \emph{spurious} \prop in \synoc dataset:} The causal effect is estimated on multiple datasets with varying predictive correlation ($\kappa$). The true causal effect of spurious \prop is $0$. In our experiment, we used two different causal effect estimators, \emph{Direct} and \emph{Riesz}. Each of them has two different ways to select the best estimate, using validation loss or validation accuracy (see \refsubsec{subsec:app_causal_effect_estimators} for details). The causal effect from  \emph{Direct} estimator is close to the correct value $0$ for lower predictive correlation $(\kappa \leq 0.7)$ but the error increases as $\kappa$ increases. The causal effect for \emph{causal} and \emph{spurious} \prop is identifiable in this dataset since the \emph{confound} \prop is a sufficient backdoor adjustment. \emph{Riesz} estimator learns a common representation that approximates the backdoor adjustment, thus,  is expected to perform better even under a high value of $\kappa$. We observe \emph{Riesz(loss)} estimator has the lowest or equivalent estimation error compared to other estimators for all values of $\kappa$.}
\label{tbl:te_syn_dcf0.0}
\begin{tabularx}{\textwidth}{c *{5}{Y}}
\toprule
 &   
 & \multicolumn{2}{c}{Direct}
 & \multicolumn{2}{c}{Riesz}\\
\cmidrule(lr){3-4} \cmidrule(l){5-6}
 $\kappa$ & True & Direct(acc) & Direct(loss) & Riesz(acc) & Riesz(loss)\\
\midrule
 0.5 & 0 & -0.01 & -0.00 & 0.01 & \textbf{0.00}\\ 
 0.6 & 0 & -0.01 & \textbf{0.00} & 0.01 & 0.03\\ 
 0.7 & 0 & \textbf{0.00} & 0.01 & 0.02 & 0.03\\ 
 0.8 & 0 & 0.05 & 0.08  & 0.04 & \textbf{0.03}\\ 
 0.9 & 0 & 0.10 & 0.16 & 0.20 & \textbf{0.05} \\ 
 0.99 & 0 & 0.17 & 0.22 & 0.26 & \textbf{0.15} \\ 
\bottomrule

\end{tabularx}
\end{table}

\begin{table}[t]
\centering
\caption{\textbf{Estimated Causal Effect for \emph{spurious} \prop in \synuc dataset:} The causal effect is estimated on multiple datasets with varying predictive correlation ($\kappa$). The true causal effect of spurious \prop is $0$. In our experiment, we used two different causal effect estimators, \emph{Direct} and \emph{Riesz}. Each of them has two different ways to select the best estimate, using validation loss or validation accuracy (see \refsubsec{subsec:app_causal_effect_estimators} for details). The data-generating process for this dataset is given by DGP-3 (\reffig{fig:dgp}) where the \emph{confound} \prop is the unobserved variable (see \refsubsec{subsec:app_dataset} for details)  and the identifiability of causal effect is not guaranteed. Thus, as expected, both the methods \emph{Direct} and \emph{Riesz} have a high estimation error and the error increases as the value of $\kappa$ increases. Among both, the Direct estimator performs better than Riesz for this dataset. Our method uses this estimated causal effect to regularize the classifier. We observe that our method is robust to this error in the estimation of causal effect and trains a classifier that performs better on average group accuracy and has significantly lower $\Delta$Prob compared to other baselines (see  \refsec{subsec:empresult_stage2}, \ref{subsec:app_empresult_stage2} and \reffig{fig:main_syn}, \ref{fig:synuc_te_expand}, \ref{fig:syntextuc_spectrum} for discussion).  }
\label{tbl:te_syn_dcf1.0}
\begin{tabularx}{\textwidth}{c *{5}{Y}}
\toprule
 &   
 & \multicolumn{2}{c}{Direct}
 & \multicolumn{2}{c}{Riesz}\\
\cmidrule(lr){3-4} \cmidrule(l){5-6}
 $\kappa$ & True & DE(acc) & DE(loss) & Riesz(acc) & Riesz(loss)\\
\midrule
 0.5 & 0 & \textbf{-0.01} & \textbf{-0.01} & \textbf{0.01} & 0.06 \\ 
 0.6 & 0 &  \textbf{0.10} &  0.13 & 0.16 & 0.19\\ 
 0.7 & 0 &  0.20 &  0.24 & 0.29 & 0.31 \\ 
 0.8 & 0 &  0.38 &  \textbf{0.37} & 0.43 & 0.42\\ 
 0.9 & 0 &  \textbf{0.49} &  \textbf{0.49} & 0.57 & 0.58 \\ 
 0.99 & 0 & \textbf{0.65} &  0.67 & 0.67 & 0.70 \\ 
\bottomrule

\end{tabularx}
\end{table}

\begin{table}[t]
\centering
\caption{\textbf{Estimated Causal Effect for \emph{causal} \prop in \synuc dataset:} The causal effect is estimated on multiple datasets with varying predictive correlation ($\kappa$). The true causal effect of spurious \prop is 0.29. In our experiment, we used two different causal effect estimators, \emph{Direct} and \emph{Riesz}. Each of them has two different ways to select the best estimate, using validation loss or validation accuracy (see \refsubsec{subsec:app_causal_effect_estimators} for details). The increasing value of $\kappa$ indicates the increasing predictive correlation of \emph{spurious} \prop in the dataset (see \ref{subsec:app_dataset} for details). 
Even though there is no guarantee of the identifiability of causal effect, both the methods estimate almost correct causal effect for the \emph{causal} feature. On the other hand, the previous method \mouli will incorrectly identify the \emph{causal} feature as spurious for this dataset  and thus will incorrectly impose invariance w.r.t to \emph{causal} \prop. (see \reffig{fig:app_synuc_all_topic_mouli} and \refsec{subsec:app_empresult_stage1} for more discussion). 
\abhinav{Get the true effect from the formula. Show that in the dataset subsection}
}
\label{tbl:te_syn_dcf1.0_causal}
\begin{tabularx}{\textwidth}{c *{5}{Y}}
\toprule
 &   
 & \multicolumn{2}{c}{Direct}
 & \multicolumn{2}{c}{Riesz}\\
\cmidrule(lr){3-4} \cmidrule(l){5-6}
 $\kappa$ & True & DE(acc) & DE(loss) & Riesz(acc) & Riesz(loss)\\
\midrule
 0.5 & 0.29 & 0.31 & 0.25 & 0.33 & \textbf{0.29} \\ 
 0.6 & 0.29 &  \textbf{0.29} &  0.30 & 0.27 & 0.32\\ 
 0.7 & 0.29 &  0.27 &  \textbf{0.30} & 0.29 & \textbf{0.30} \\ 
 0.8 & 0.29 &  0.33 &  0.30 & 0.34 & \textbf{0.29}\\ 
 0.9 & 0.29 &  0.32 &  0.31 & 0.32 & \textbf{0.29} \\ 
 0.99 & 0.29 & 0.31 &  0.31 & 0.30 & \textbf{0.29} \\ 
\bottomrule

\end{tabularx}
\end{table}

\begin{table}[t]
\centering
\caption{\textbf{Estimated Causal Effect for spurious \prop (\emph{rotation}) in \mnist dataset:} The causal effect is estimated on multiple datasets with varying predictive correlation ($\kappa$). The true causal effect of spurious \prop is $0$. In our experiment, we used two different causal effect estimators, \emph{Direct} and \emph{Riesz}. Each of them has two different ways to select the best estimate, using validation loss or validation accuracy (see \refsubsec{subsec:app_causal_effect_estimators} for details).
The data-generating process of this dataset is given by DGP-2 (\reffig{fig:dgp}) and the causal effect is identifiable using only observational data (see \refprop{prop:identifiability}). The ground truth causal effect of spurious \prop is $0$. Overall the \emph{Riesz(loss)} estimator has a lower or equivalent error in the estimation of the causal effect for most of $\kappa$ except very high $\kappa=0.99$. As the predictive correlation ($\kappa$) of spurious \prop increases the error in the causal effect estimate of spurious \prop increases. When training the classifier, we show that our method (\oursshort) is robust to this error in the estimation of the causal effect of spurious \prop (see  \refsec{subsec:empresult_stage2}, \ref{subsec:app_empresult_stage2} and \reffig{fig:main_mnist}, \ref{fig:mnist_te_expand}, \ref{fig:mnist_spectrum} for discussion).
}
\label{tbl:te_mnist}
\begin{tabularx}{\textwidth}{c *{5}{Y}}
\toprule
 &   
 & \multicolumn{2}{c}{Direct}
 & \multicolumn{2}{c}{Riesz}\\
\cmidrule(lr){3-4} \cmidrule(l){5-6}
 $\kappa$ & True & DE(acc) & DE(loss) & Riesz(acc) & Riesz(loss)\\
\midrule
 0.5 & 0 &  0.02 & 0.02 & \textbf{0.01} & \textbf{-0.01}   \\ 
 0.6 & 0 &  0.04 & 0.03 & \textbf{-0.01} & 0.06   \\ 
 0.7 & 0 &  \textbf{0.05} & \textbf{0.05} & -0.10 & \textbf{0.05}   \\ 
 0.8 & 0 &  0.07 & 0.06 & -0.07 & \textbf{0.04}   \\ 
 0.9 & 0 &  0.11 & 0.15 & \textbf{0.09} & 0.20   \\ 
 0.99 & 0 & 0.28 & \textbf{0.27} & 0.50 & 0.31   \\ 
\bottomrule

\end{tabularx}
\end{table}

\begin{table}[t]
\centering
\caption{\textbf{Estimated Causal Effect for spurious \prop in \aae, \civilrace, \civilgender and \civilreligion dataset:} The causal effect is estimated on multiple version of each datasets with varying predictive correlation ($\kappa$). In our experiment, we used two different causal effect estimators, \emph{Direct} and \emph{Riesz}. Each of them has two different ways to select the best estimate, using validation loss or validation accuracy (see \refsubsec{subsec:app_causal_effect_estimators} for details). In \aae dataset the spurious \prop in \emph{race}, in \civilrace the spurious \prop is also \emph{race}, in \civilgender the spurious \prop is \emph{gender} and in \civilreligion the spurious \prop is \emph{religion}. The prediction task in \aae dataset is sentiment classification and in \civil datasets it is toxicity prediction. The true causal effect of spurious \prop is unknown in every dataset. But we expect the spurious \prop to have a $0$ causal effect on the task label since changing the race/gender/religion of a person should not change the sentiment or toxicity of the sentence.
The data-generating process for all these dataset is given by DGP-1 (\reffig{fig:dgp}), where the \prop ``A'' is the spurious \prop (see \refsec{subsec:app_dataset} for details). The causal effect for any \prop ``A'' is identifiable given access to observational data and  interventional distribution $P(X|do(A))$ (\refprop{prop:identifiability}). Since \emph{Riesz} estimator assumes access to the counterfactual distribution $\mathcal{Q} \coloneqq P(\bm{x}_{a'}|\bm{x}_{a})$ (see \refsec{subsec:app_causal_effect_estimators}), we expect it to have lower estimation error than \emph{Direct}. 
Surprisingly, \emph{Direct} performs comparably to \emph{Riesz} estimator in all the datasets. 
\abhinav{Add a line that Direct is similar to ERM but regularization helps}
When training the classifier, we show that our method (\oursshort) is robust to error in the estimation of the causal effect of spurious \prop (see  \refsec{subsec:empresult_stage2}, \ref{subsec:app_empresult_stage2} and \reffig{fig:main_aae}, \ref{fig:aae_te_expand}, \ref{fig:aae_spectrum} for discussion).   
\abhinav{Rerun lambda 1.0 was missing from Riesz (should not change the result, better or equal). Just mention we didn't select from that value}
}
\label{tbl:te_aae}
\begin{tabularx}{\textwidth}{l *{5}{Y}}
\toprule  
 &   & \multicolumn{2}{c}{Direct} & \multicolumn{2}{c}{Riesz}\\
\cmidrule(lr){3-4} \cmidrule(l){5-6}
 Dataset & $\kappa$ & DE(acc) & DE(loss) & Riesz(acc) & Riesz(loss)\\
\midrule
\multirow{6}{*}{\aae}  & 0.5  & \textbf{0.01} & \textbf{0.01} & 0.03 & 0.02    \\ 
 & 0.6  & \textbf{0.07} & \textbf{0.07} & 0.12 & \textbf{0.07}   \\ 
 & 0.7  & \textbf{0.12} & \textbf{0.12} & \textbf{0.12} & \textbf{0.12}   \\ 
 & 0.8  & 0.21 & 0.20 & 0.18 & \textbf{0.17}   \\ 
 & 0.9  & 0.32 & 0.27 & \textbf{0.25} & 0.27   \\ 
 & 0.99 & 0.35 & \textbf{0.31} & 0.40 & 0.37  \\  
\cmidrule(l){1-6}
\multirow{6}{*}{\civilrace}  & 0.5  &  \textbf{0.02} & \textbf{0.02} & 0.06 & 0.06    \\ 
 & 0.6  &  \textbf{0.04} & \textbf{0.04} & 0.09 & 0.09   \\ 
 & 0.7  &  0.07 & \textbf{0.05} & 0.08 & 0.09   \\ 
 & 0.8  &  \textbf{0.08} & \textbf{0.08} & 0.09 & 0.11   \\ 
 & 0.9  &  \textbf{0.15} & \textbf{0.15} & \textbf{0.15} & 0.18   \\ 
 & 0.99 &  \textbf{0.22} & \textbf{0.22} & 0.23 & 0.27  \\ 
\cmidrule(l){1-6}
\multirow{6}{*}{\civilgender}  &  0.5  &  \textbf{0.00} & \textbf{0.00} & 0.03 & 0.01    \\ 
 & 0.6  &  \textbf{0.01} & \textbf{0.01} & 0.04 & 0.05   \\ 
 & 0.7  &  0.03 & \textbf{0.02} & 0.09 & 0.09   \\ 
 & 0.8  &  \textbf{0.04} & 0.05 & 0.06 & 0.12   \\ 
 & 0.9  &  \textbf{0.09} & \textbf{0.09} & 0.12 & 0.12   \\ 
 & 0.99 &  0.13 & \textbf{0.12} & 0.15 & 0.18  \\ 
\cmidrule(l){1-6}
\multirow{6}{*}{\civilreligion}  & 0.5  &  0.01 & \textbf{0.00} & \textbf{0.00} & \textbf{0.00}   \\ 
& 0.6  &  \textbf{0.02} & 0.03 & \textbf{0.02} & \textbf{0.02}  \\ 
& 0.7  &  0.02 & 0.03 & \textbf{0.01} & 0.04  \\ 
& 0.8  &  0.04 & 0.04 & 0.04 & \textbf{0.03}  \\ 
& 0.9  &  0.06 & 0.06 & 0.06 & \textbf{0.05}  \\ 
& 0.99 &  \textbf{0.07} & 0.08 & 0.09 & 0.08 \\ 
\bottomrule

\end{tabularx}
\end{table}

\begin{figure*}[t]
\centering

\begin{subfigure}[h]{.28\textwidth}
    \centering
    \includegraphics[width=\linewidth,height=0.7\linewidth]{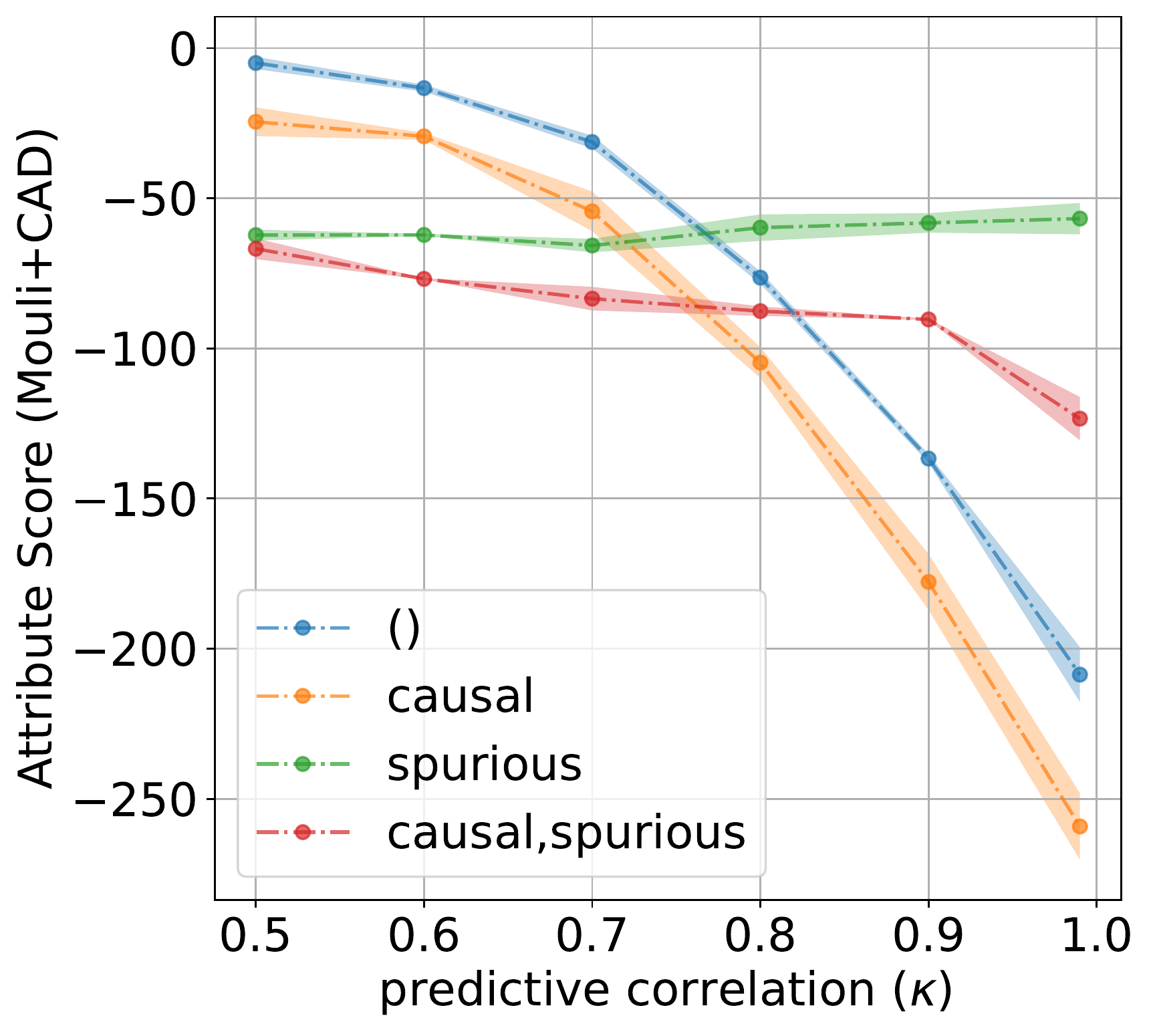}
    	\caption[]%
        {{\small \synuc}}    
    	\label{fig:app_synuc_all_topic_mouli} 
    \end{subfigure}
\hfill
\begin{subfigure}[h]{.28\textwidth}
    \centering
    \includegraphics[width=\linewidth,height=0.7\linewidth]{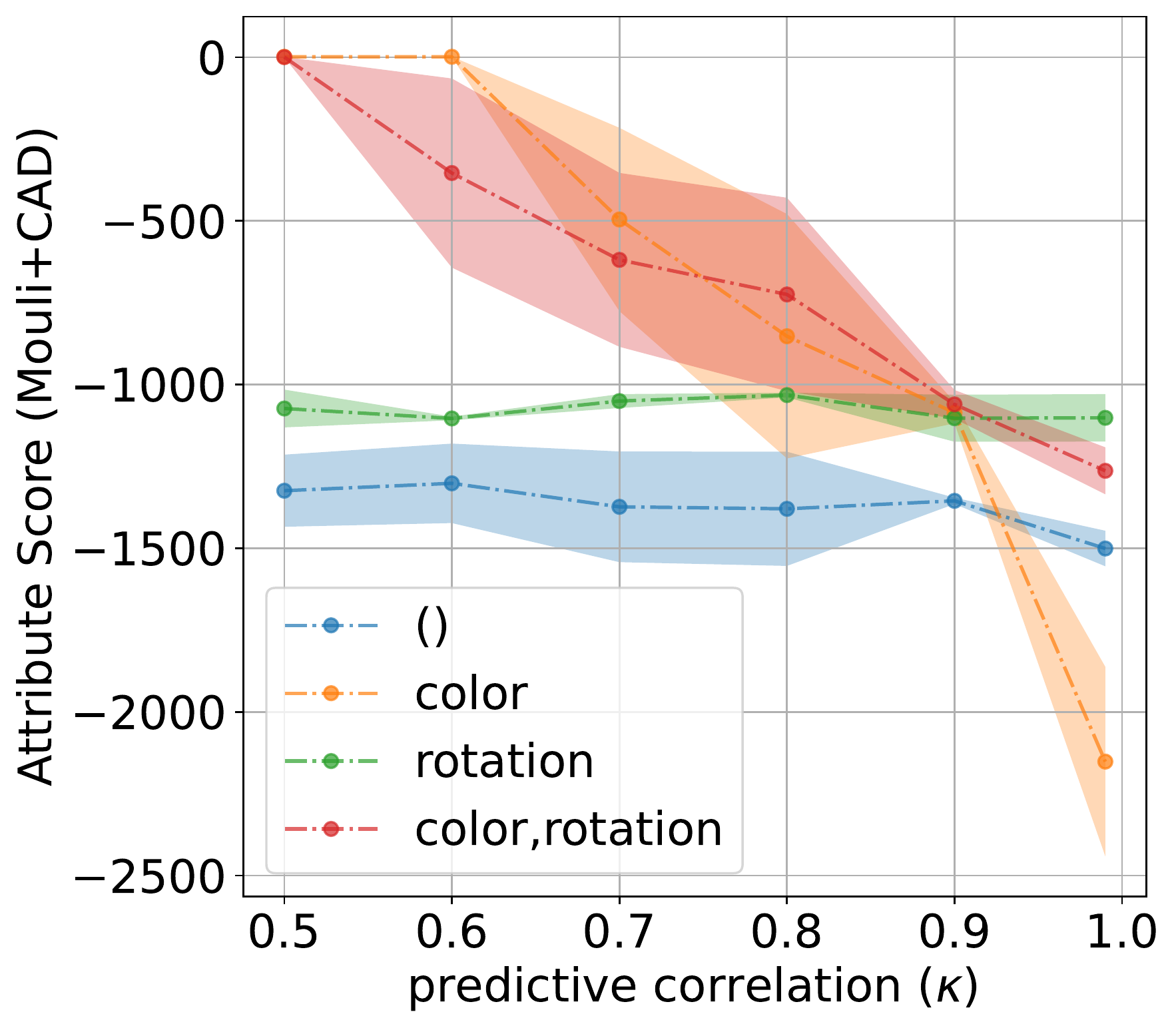}
    	\caption[]%
        {{\small \mnist}}    
    	\label{fig:app_mnist_all_topic_mouli} 
    \end{subfigure}
\hfill
\begin{subfigure}[h]{.28\textwidth}
    \centering
    \includegraphics[width=\linewidth,height=0.7\linewidth]{figs/aae_mouli_score.pdf}
    	\caption[]%
        {{\small \aae}}    
    	\label{fig:app_aae_mouli} 
    \end{subfigure}
\medskip
\begin{subfigure}[h]{.28\textwidth}
    \centering
    \includegraphics[width=\linewidth,height=0.7\linewidth]{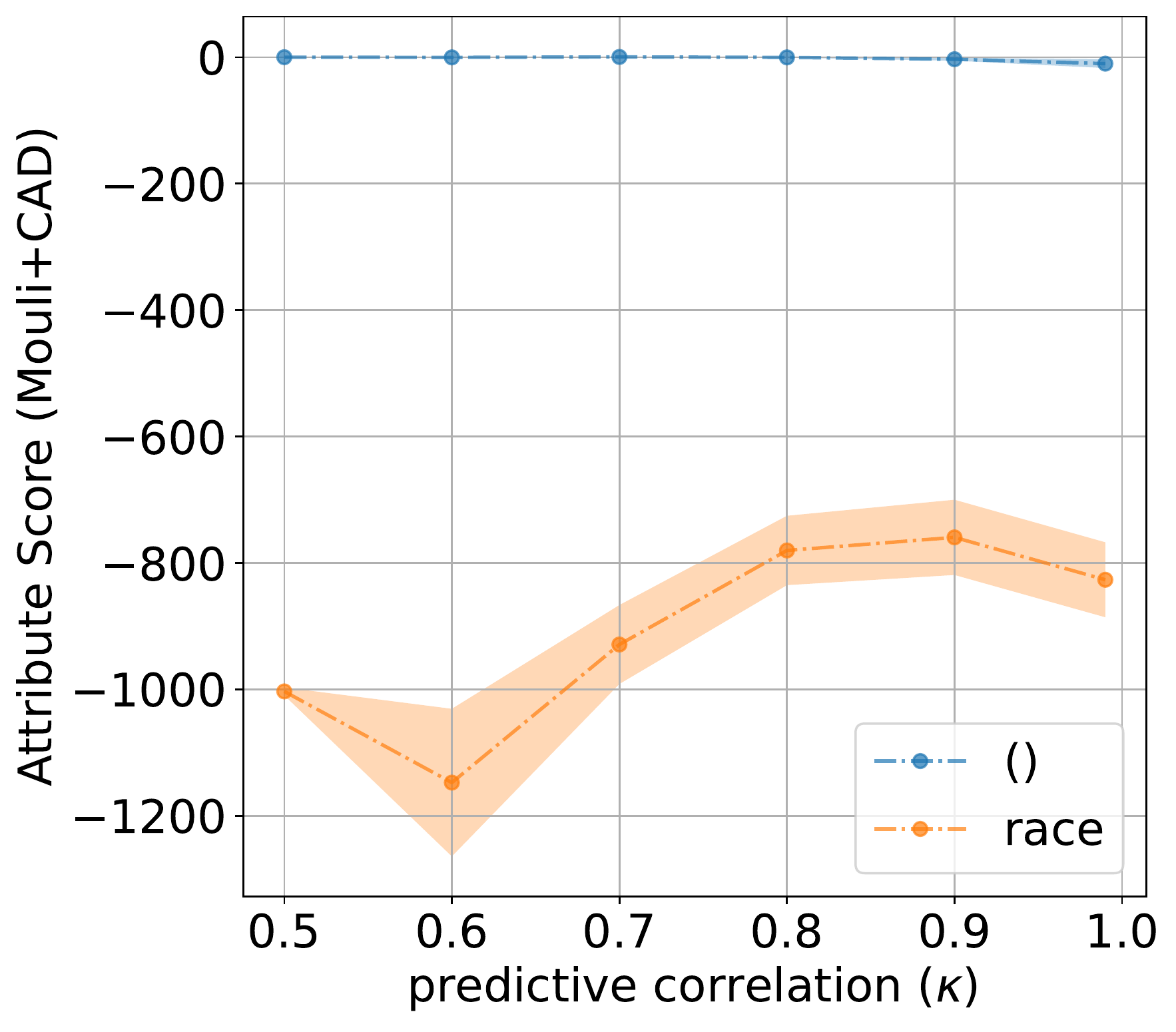}
    	\caption[]%
        {{\small \civilrace}}    
    	\label{fig:app_civilrace_mouli} 
    \end{subfigure}
\hfill
\begin{subfigure}[h]{.28\textwidth}
    \centering
    \includegraphics[width=\linewidth,height=0.7\linewidth]{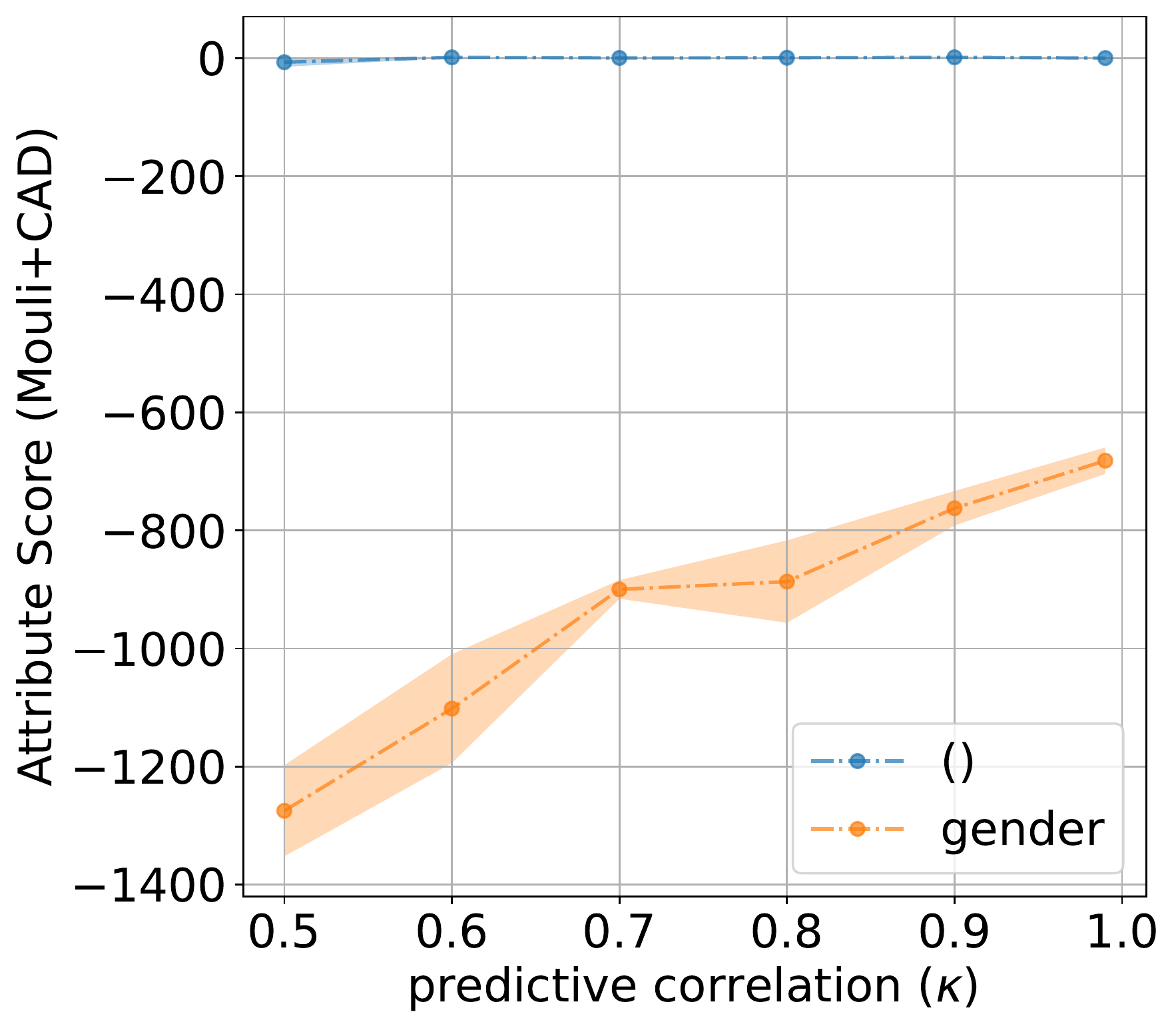}
    	\caption[]%
        {{\small \civilgender}}    
    	\label{fig:app_civil_gender_mouli} 
    \end{subfigure}
\hfill
\begin{subfigure}[h]{.28\textwidth}
    \centering
    \includegraphics[width=\linewidth,height=0.7\linewidth]{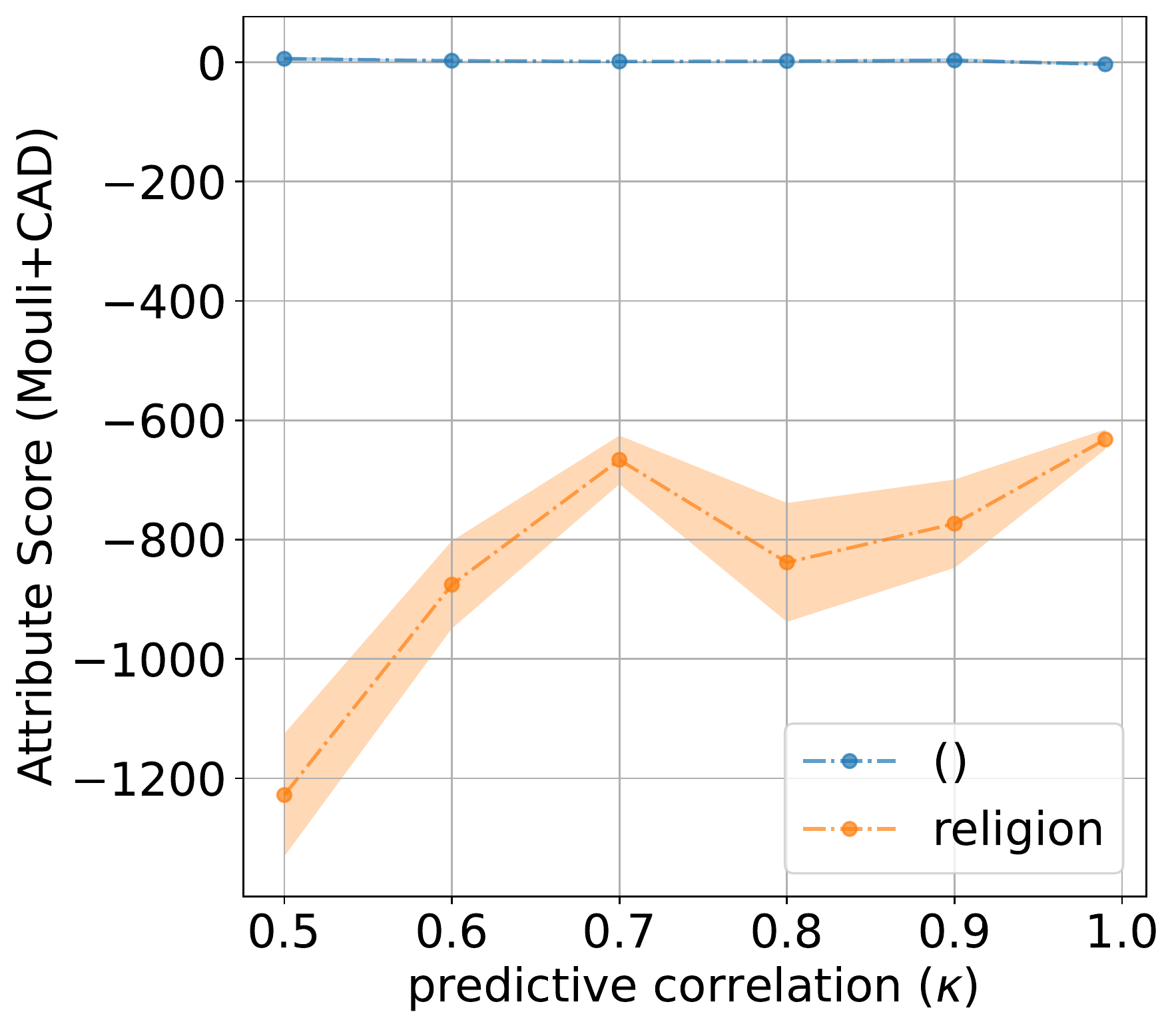}
    	\caption[]%
        {{\small \civilreligion}}    
    	\label{fig:app_civilreligion_mouli} 
    \end{subfigure}
\hfill

\caption{\textbf{Score used by \mouli for determining the spurious \prop. } Given a tuple of \props in a given dataset, the baseline method \mouli defines a score for every subset of \prop and determines the least scoring subset as spurious. The empty subset ``()'' denotes that no \prop is considered spurious. 
The x-axis shows different levels of predictive correlation ($\kappa$) in the dataset and the y-axis shows the score assigned by \mouli to  different subsets of \props.
We include some additional \prop not considered in \reffig{fig:mouli_score_syn_mnist_aae} for \synuc and \mnist datasets and summarize the score for \civil datasets. 
In \textbf{(a)}, we include the score for all the subsets of \props (\emph{causal} and \emph{spurious}) in the \syn dataset. When given all the \prop at the same time, \mouli will incorrectly detect both the \prop (causal and spurious) as spurious shown by the red curve for $\kappa < 0.7$ and the \emph{causal} \prop as spurious for $\kappa>0.8$. Thus \mouli will completely fail in determining the correct spurious \prop for \syn dataset. When only given the \emph{causal} \prop in the dataset, it will again incorrectly detect causal \prop as spurious since it will have a lower score compared to the subset with no \prop as shown by the orange curve being always lower than the blue curve. In \textbf{(b)}, we again include the score for every subset of \props --- rotation (spurious) and color (causal) which were partially included in \reffig{fig:mouli_score_syn_mnist_aae}. Again, \mouli will fail to identify any \prop as spurious for all $\kappa \leq 0.9$ (shown by the blue curve with the lowest score) and will incorrectly detect the causal \prop (color) as spurious for $\kappa=0.99$ (shown by the orange curve). 
For \textbf{(c)} \aae, \textbf{(d)} \civilrace, \textbf{(e)} \civilgender and \textbf{(f)} \civilreligion \mouliours is able to correctly identify the spurious \prop shown by the orange curve with the lowest score. 
In \reffig{fig:app_mouli_cad_atereg}, we show that \mouli doesn't always impose correct invariance with respect to spurious \prop and might fail to detect the correct spurious \prop once the correct invariance is imposed.   
See \refsec{subsec:app_empresult_stage1} for further discussions.
}
\label{fig:app_all_mouli_score}
\end{figure*}

\begin{figure*}[t]
\centering

\begin{subfigure}[h]{.28\textwidth}
    \centering
    \includegraphics[width=\linewidth,height=1.3\linewidth]{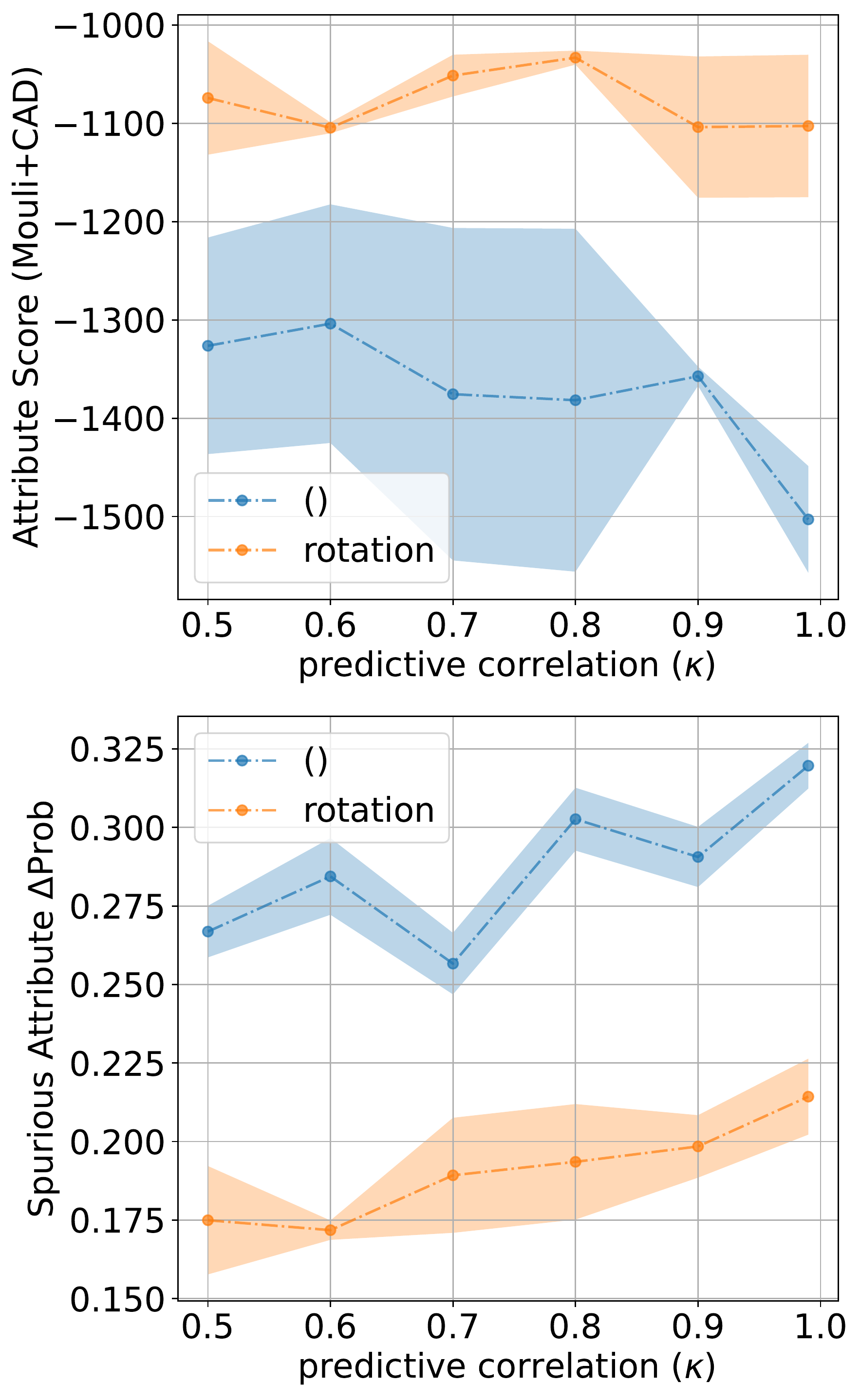}
    	\caption[]%
        {\centering{\small \mnist, \mouli}}    
    	\label{fig:app_mnist_cad} 
    \end{subfigure}
\hfill
\begin{subfigure}[h]{.28\textwidth}
    \centering
    \includegraphics[width=\linewidth,height=1.3\linewidth]{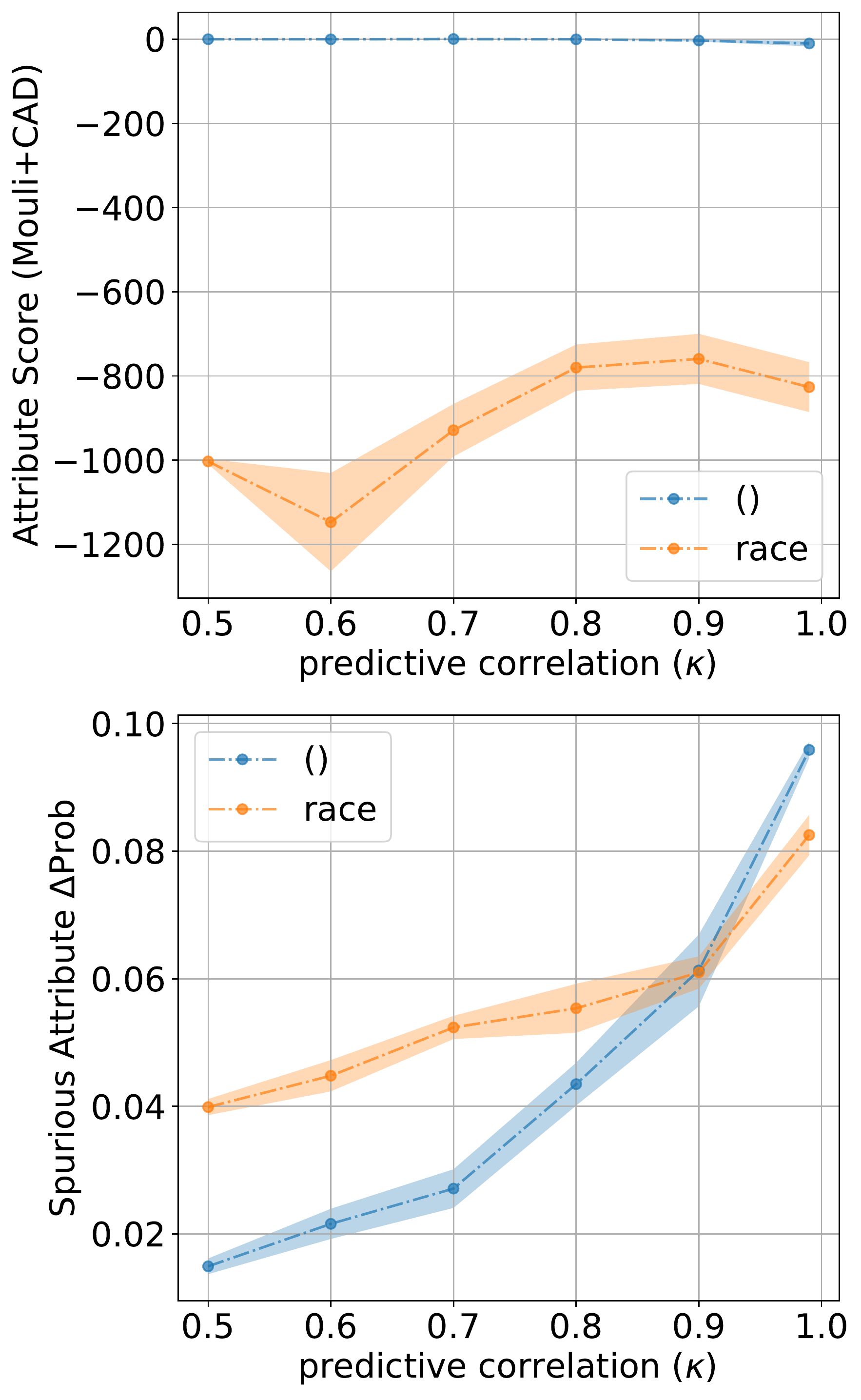}
    	\caption[]%
        {\centering{\small \civilrace, \mouli }}    
    	\label{fig:app_civilrace_cad} 
    \end{subfigure}
\hfill
\begin{subfigure}[h]{.28\textwidth}
    \centering
    \includegraphics[width=\linewidth,height=1.3\linewidth]{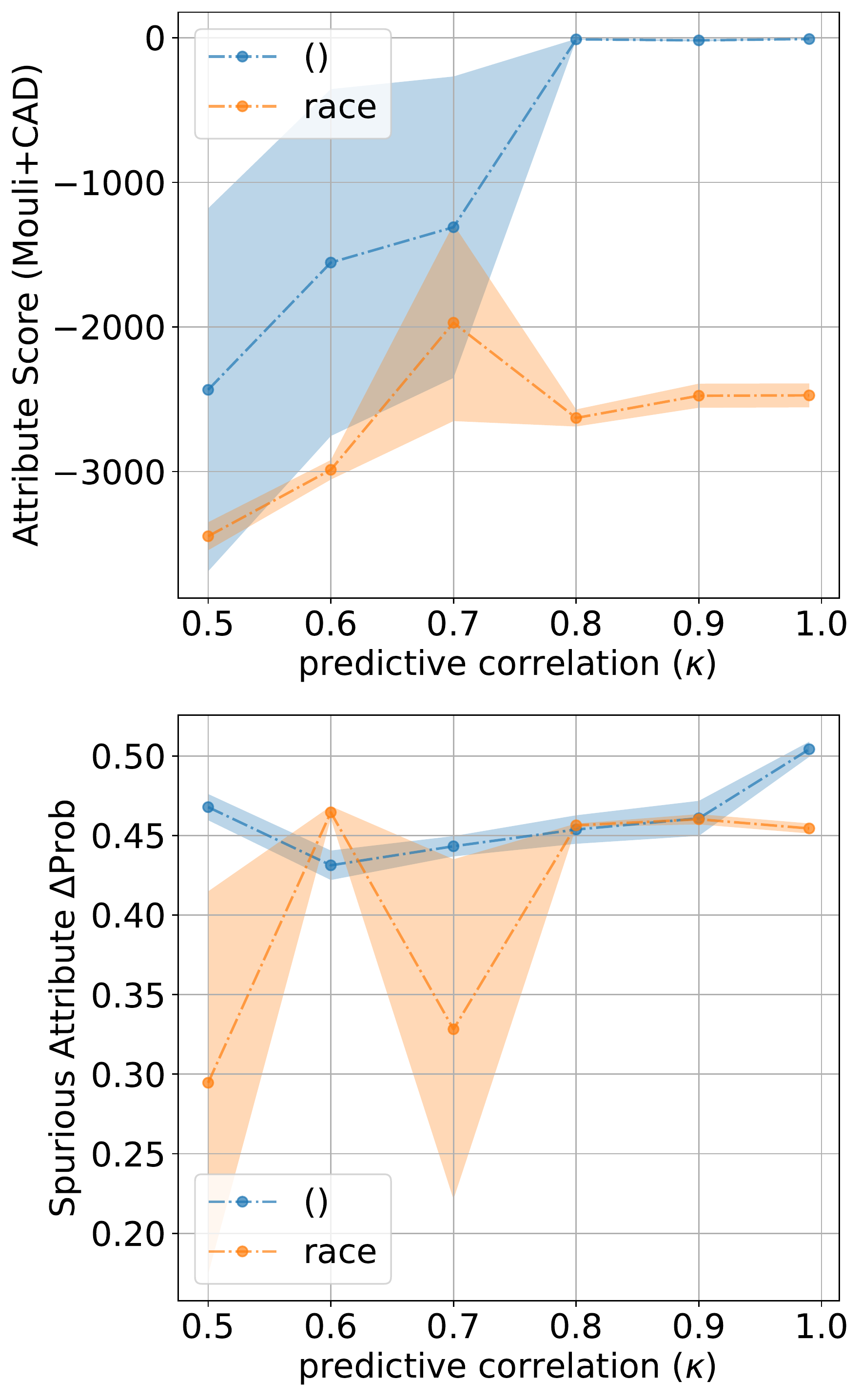}
    	\caption[]%
        {\centering{\small \aae, \mouli}}    
    	\label{fig:app_aae_cad} 
    \end{subfigure}
\medskip
\begin{subfigure}[h]{.28\textwidth}
    \centering
    \includegraphics[width=\linewidth,height=1.3\linewidth]{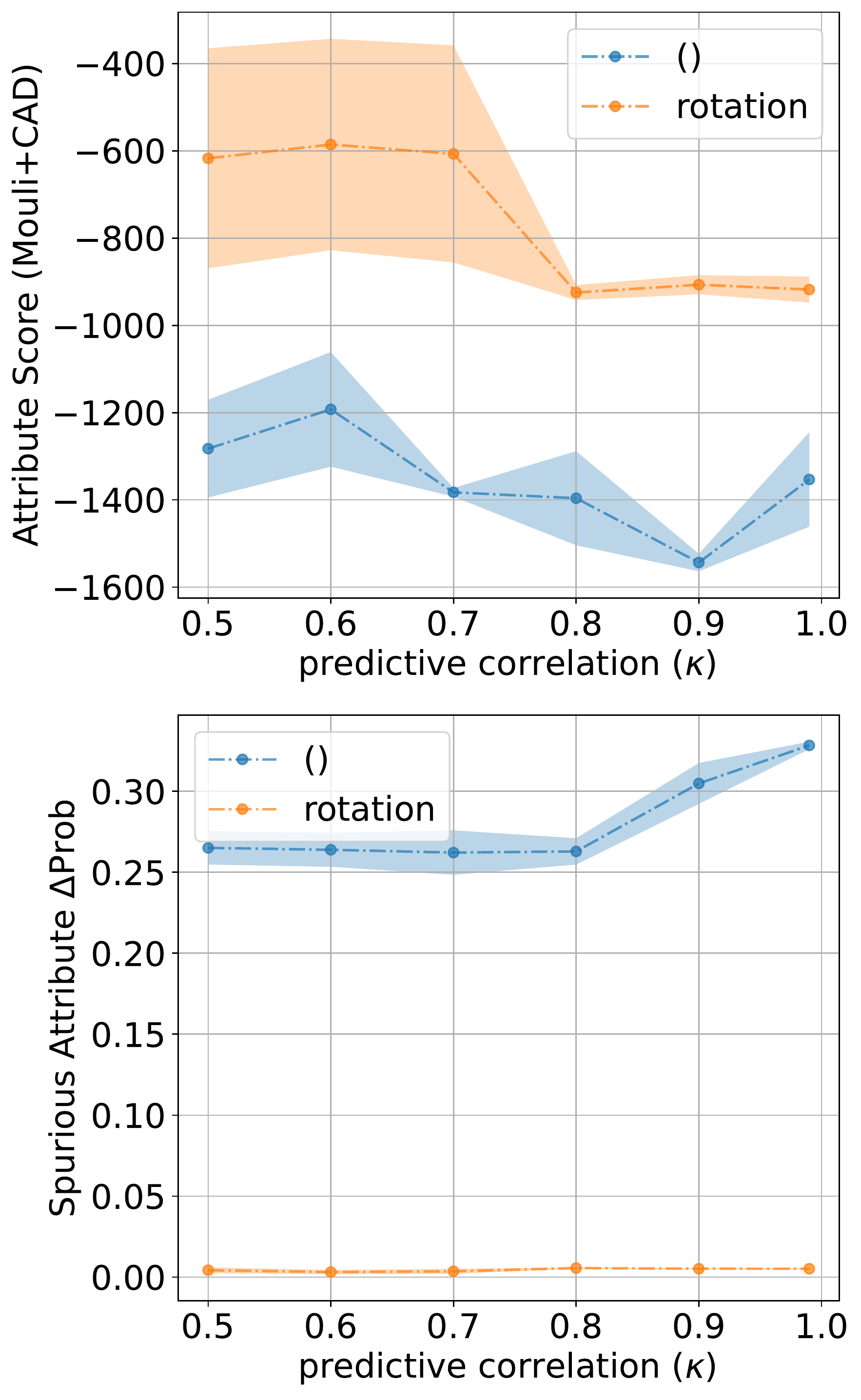}
    	\caption[]%
        {\centering{\small \mnist, \mouliours(R=100)}}    
    	\label{fig:app_mnist_atereg} 
    \end{subfigure}
\hfill
\begin{subfigure}[h]{.28\textwidth}
    \centering
    \includegraphics[width=\linewidth,height=1.3\linewidth]{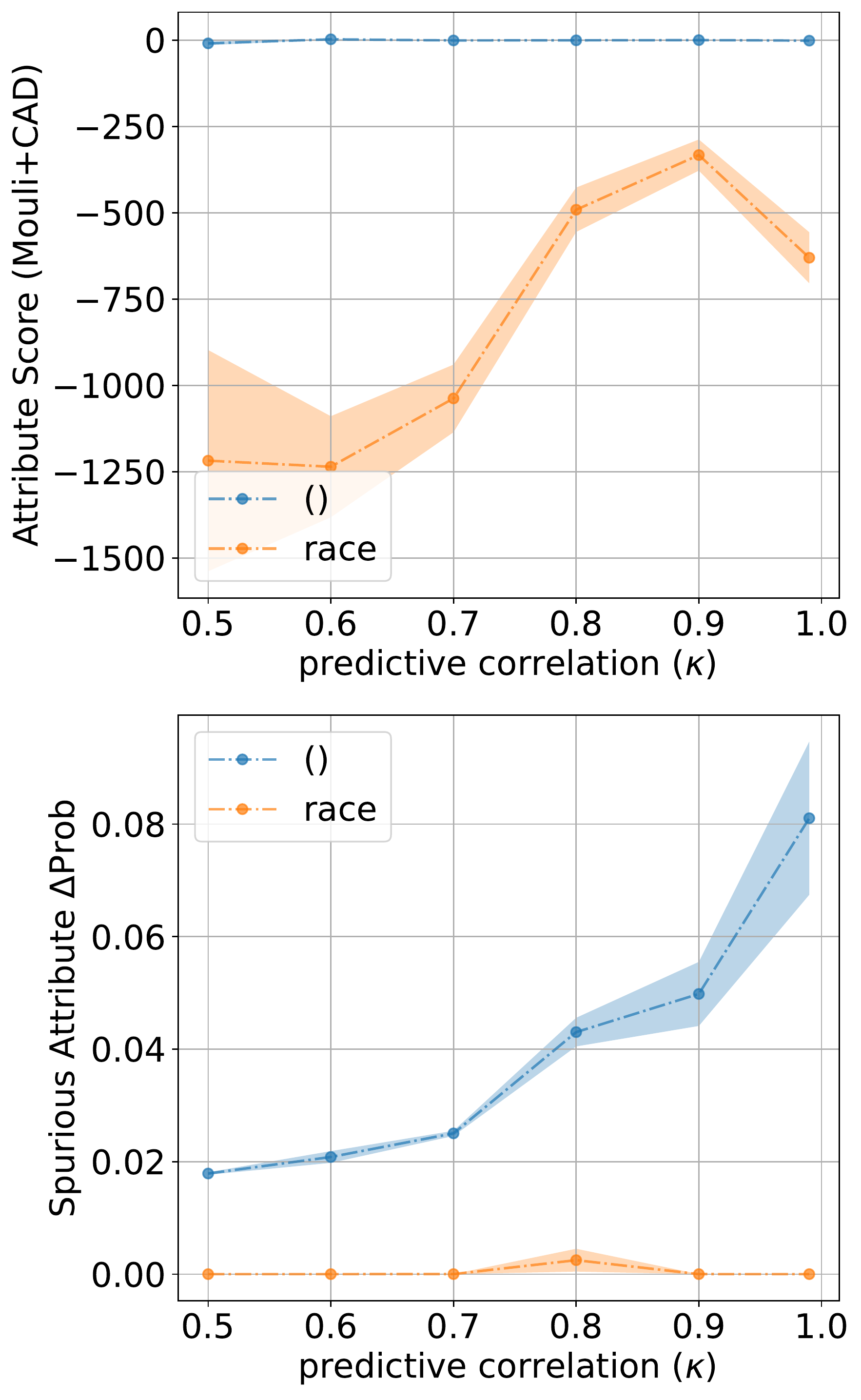}
    	\caption[]%
        {\centering{\small \civilrace, \mouliours(R=1000)}}    
    	\label{fig:app_civilrace_atereg} 
    \end{subfigure}
\hfill
\begin{subfigure}[h]{.28\textwidth}
    \centering
    \includegraphics[width=\linewidth,height=1.3\linewidth]{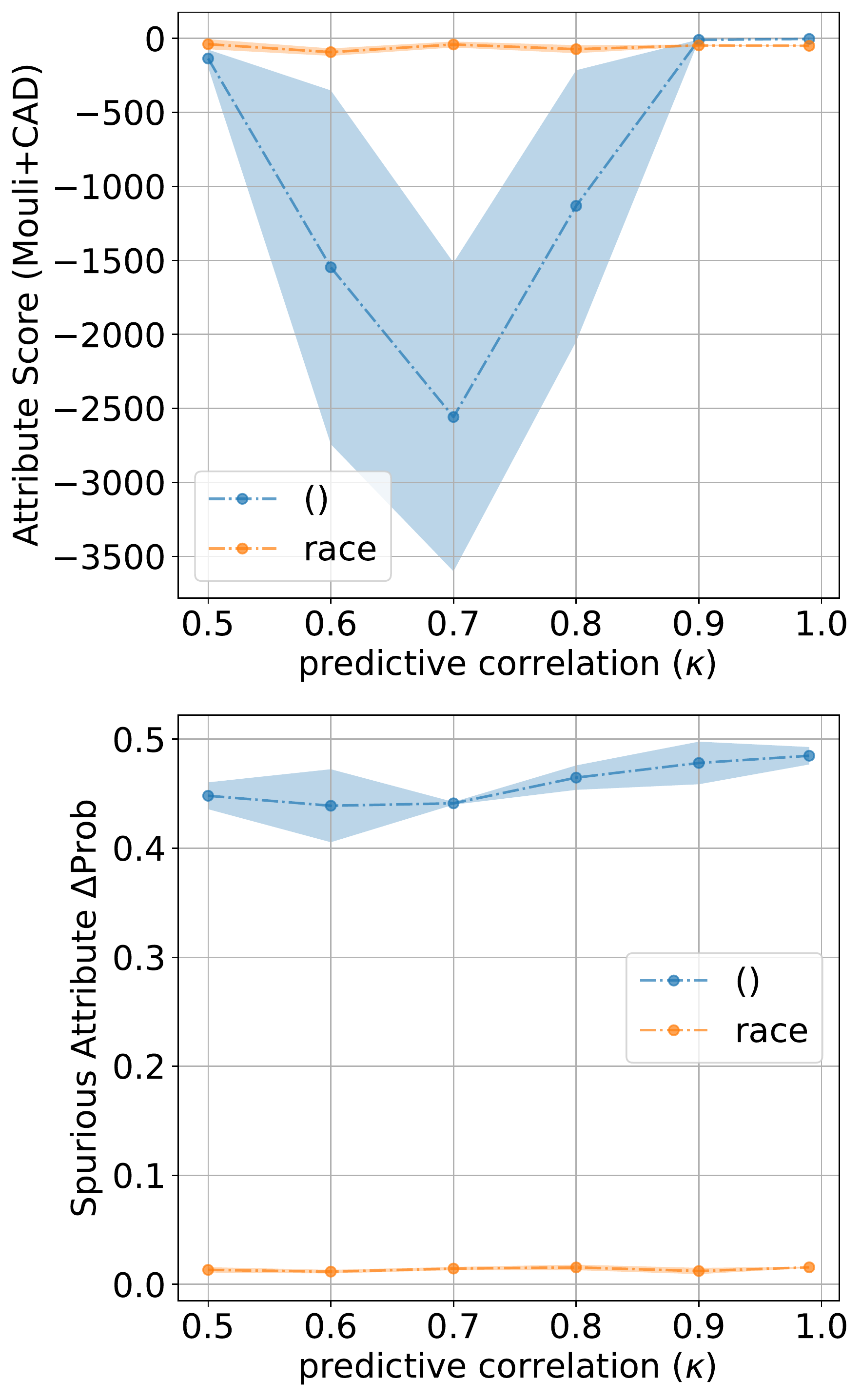}
    	\caption[]%
        {\centering{\small \aae, \mouliours(R=100)}}    
    	\label{fig:app_aae_atereg} 
    \end{subfigure}
\hfill

\caption{\textbf{Counterfactual Data Augmentation doesn't impose complete invariance:}
The x-axis in every plot denotes the predictive correlation in the dataset, and the y-axis in the first row of every plot shows the score computed by the previous Mouli's method for a different subset of \props.
Different methods could be used to impose invariance w.r.t to a certain subset of \prop in the classifier and  compute a score to quantify the \emph{spuriousness} of the classifier. In \textbf{(a) , (b)} and \textbf{(c)} counterfactual data augmentation (CAD) is used to impose invariance whereas in \textbf{(d) , (e)} and \textbf{(f)} \oursshort is used to impose invariance. 
The empty subset ``()'' denotes that no \prop is considered spurious. 
Then the \prop with the lowest score is detected as spurious \prop.
The y-axis in the second row of every plot measures the reliance of the classifier on spurious attributes on imposing invariance w.r.t to labeled attribute using $\Delta$Prob. The blue curve shows reliance on the spurious \prop when no invariance is imposed and the orange curve shows reliance on spurious \prop when we impose invariance w.r.t to spurious \prop.  
Sometimes CAD doesn't impose correct invariance in the classifier which may create mistakes in detecting the spurious \prop. For \civilrace and \aae dataset the second row of \textbf{(b)} and \textbf{(c)} respectively shows that CAD is not able to impose correct invariance with respect to the spurious \prop shown by high $\Delta$Prob of spurious \prop  in the classifier which is supposed to be invariant to spurious \prop (orange curve). \oursshort to impose the required invariance (second row in \textbf{(e)} and \textbf{(f)}) shown by $\Delta$Prob close to $0$ for the orange curve. After the correct invariance is imposed using \oursshort, Mouli's score fails to determine the \emph{race} attribute (which is actually spurious) as spurious for $\kappa \leq 0.9$ shown by a blue curve having a lower score in the first row of \textbf{(f)}.  The relative ordering of score (first row) remains the same for \mnist and \civilrace after imposing invariance using either CAD (in \textbf{(a),\textbf{(b)}}) or \oursshort (in \textbf{(d)}, \textbf{(e)}).
}
\label{fig:app_mouli_cad_atereg}
\end{figure*}

\subsection{Extended Evaluation of Stage 1: Detecting Spurious \Prop}
\label{subsec:app_empresult_stage1}

\paragraph{Causal Effect Estimator Recommendation. }
In this work we evaluate the performance of 2 different causal effect estimators --- \emph{Direct} and \emph{Riesz} --- each with two different ways of selecting the best mode either using validation loss or accuracy (see \refsec{subsec:app_causal_effect_estimators} for details) on 6 different datasets.
\reftbl{tbl:te_syn_dcf0.0} (\synoc), \reftbl{tbl:te_syn_dcf1.0}, \ref{tbl:te_syn_dcf1.0_causal} (\synuc), \reftbl{tbl:te_mnist}(\mnist) and \reftbl{tbl:te_aae} (\aae and \civil) summarizes the performance of different causal effect estimator on these datasets. 
Different datasets bring different data-generating processes which bring different challenges for these estimators since the causal effect may or may not be identifiable (see individual tables for each dataset for discussion). 
We observe that one particular setting of the Riesz estimator i.e. \emph{Riesz(loss)} performs consistently better or comparable to other estimators across the dataset. There are certain setting, especially at the high predictive correlation ($\kappa$) where even \emph{Riesz(loss)} doesn't perform well and have a large error in the estimation of treatment effect. But we will show in \refsec{subsec:app_empresult_stage2} that our method \oursshort is robust to error in the estimation of the causal effect.

\paragraph{Baseline \mouli can incorrectly detect causal \prop as spurious. }
When given a spurious \prop we have seen that \mouli can sometimes fail to detect the spurious \prop (see \refsec{subsec:empresult_stage1} for discussions). But sometimes this \mouli can make an even more severe mistake --- detect a causal \prop as spurious. \reffig{fig:app_synuc_all_topic_mouli} demonstrate on such failure in \synuc dataset. When both the \prop --- causal and spurious --- is given to \mouli, it incorrectly identifies both the \props as spurious for $\kappa \leq 0.7$ (shown by the red curve having the lowest score). For $\kappa>0.7$, again it incorrectly detects the true \emph{causal} \prop as spurious. If we only give the causal \prop as input to \mouli, again causal \prop (shown in orange) will be detected as spurious \prop since it has a lower score than the subset with no \prop (blue curve) for all values of $\kappa$. But if we look at the causal effect estimate of causal \prop in \synuc dataset it is close to the correct value 0.29 for all values of $\kappa$ even though the identifiability of causal effect for this dataset is not guaranteed (see \reftbl{tbl:te_syn_dcf1.0_causal}). Thus using a continuous measure of the spuriousness of a \prop using the causal effect estimates can help mitigate the limitation of discrete measures (used in \mouli) like false positive and false negative detection of spurious \props.


\paragraph{Spuriousness Score used by \mouli can be misleading. }
Given a set of \props, \mouli creates multiple classifiers where they impose invariance with respect to different subsets of \props. Using these invariant classifiers, they define a \emph{spuriousness} score for all the subsets of \props. Then the subset of \props with a minimum score is selected as spurious. 
We observe in our experiment that \mouli when training the invariant classifier is not able to completely enforce invariance. Thus the score defined using the invariant model might be misleading. \reffig{fig:app_mouli_cad_atereg} shows one such instance of failure. The second row of \reffig{fig:app_aae_cad} shows that \mouli doesn't impose correct invariance w.r.t the  spurious \emph{race} \prop (shown by high $\Delta$Prob of the orange curve). 
Next, we use \oursshort to enforce correct invariance w.r.t race \prop by regularizing the classifier with zero causal effect for race \prop. This enforces correct invariance shown by low $\Delta$Prob in the second row of \reffig{fig:app_aae_atereg}. 
As a result, we observe that the score for the spurious \prop (shown by the orange curve in the first row of \reffig{fig:app_aae_atereg}), is higher than blue. 
Thus enforcing the correct invariance \mouli will fail to detect the race \prop as spurious since the blue curve has a lower score than orange for all values of $\kappa<0.9$.



\begin{figure}[t]
\centering
    \begin{subfigure}[h]{.32\textwidth}
        \captionsetup{justification=centering}
        \centering
            \includegraphics[width=\linewidth,height=1.7\linewidth]{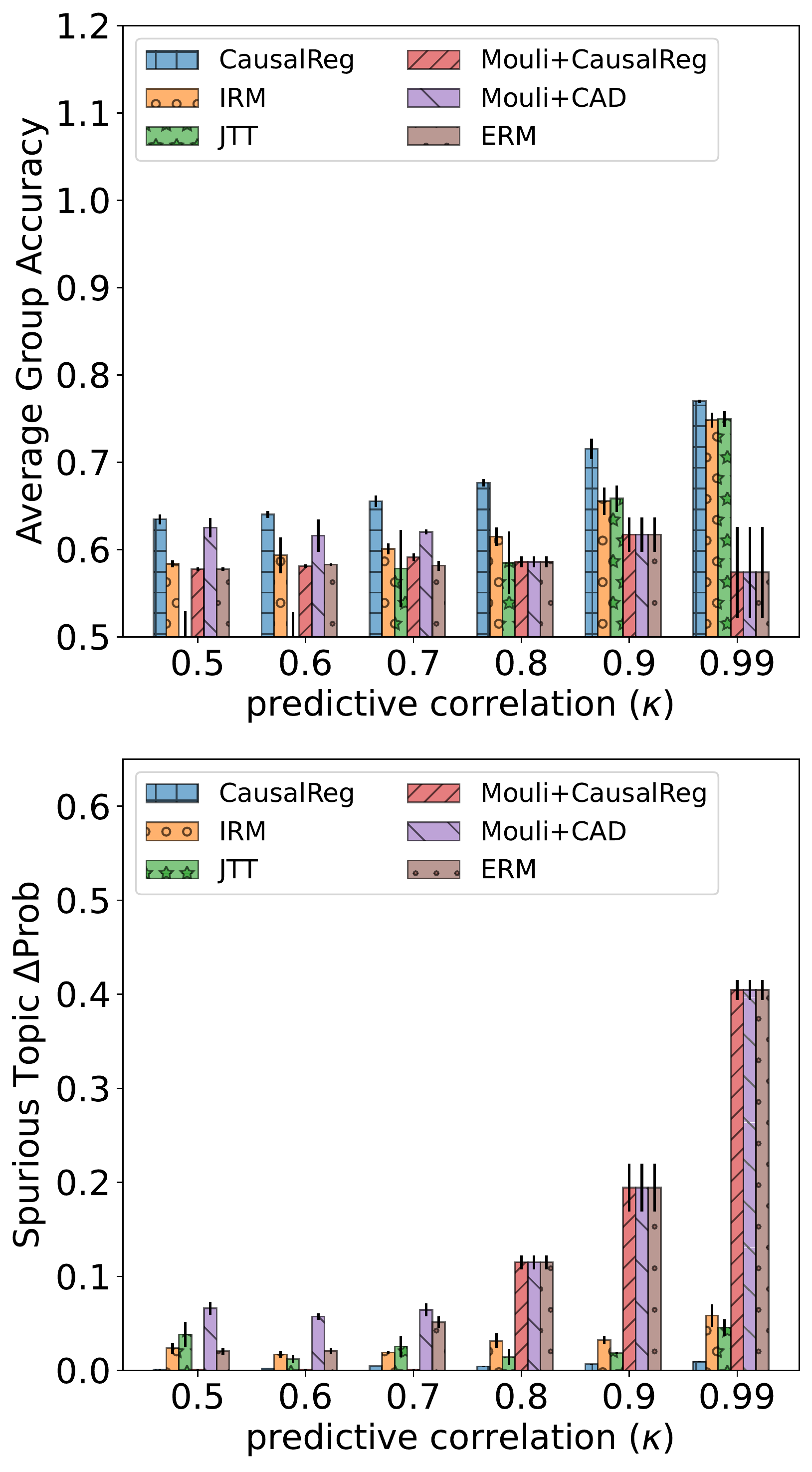}
    		\caption[]%
            {{\small \synuc }}    
    		\label{fig:main_syn_selection_spurious_known}
    \end{subfigure}
\hfill 
  \begin{subfigure}[h]{.32\textwidth}
  \captionsetup{justification=centering}
    \centering
    \includegraphics[width=\linewidth,height=1.7\linewidth]{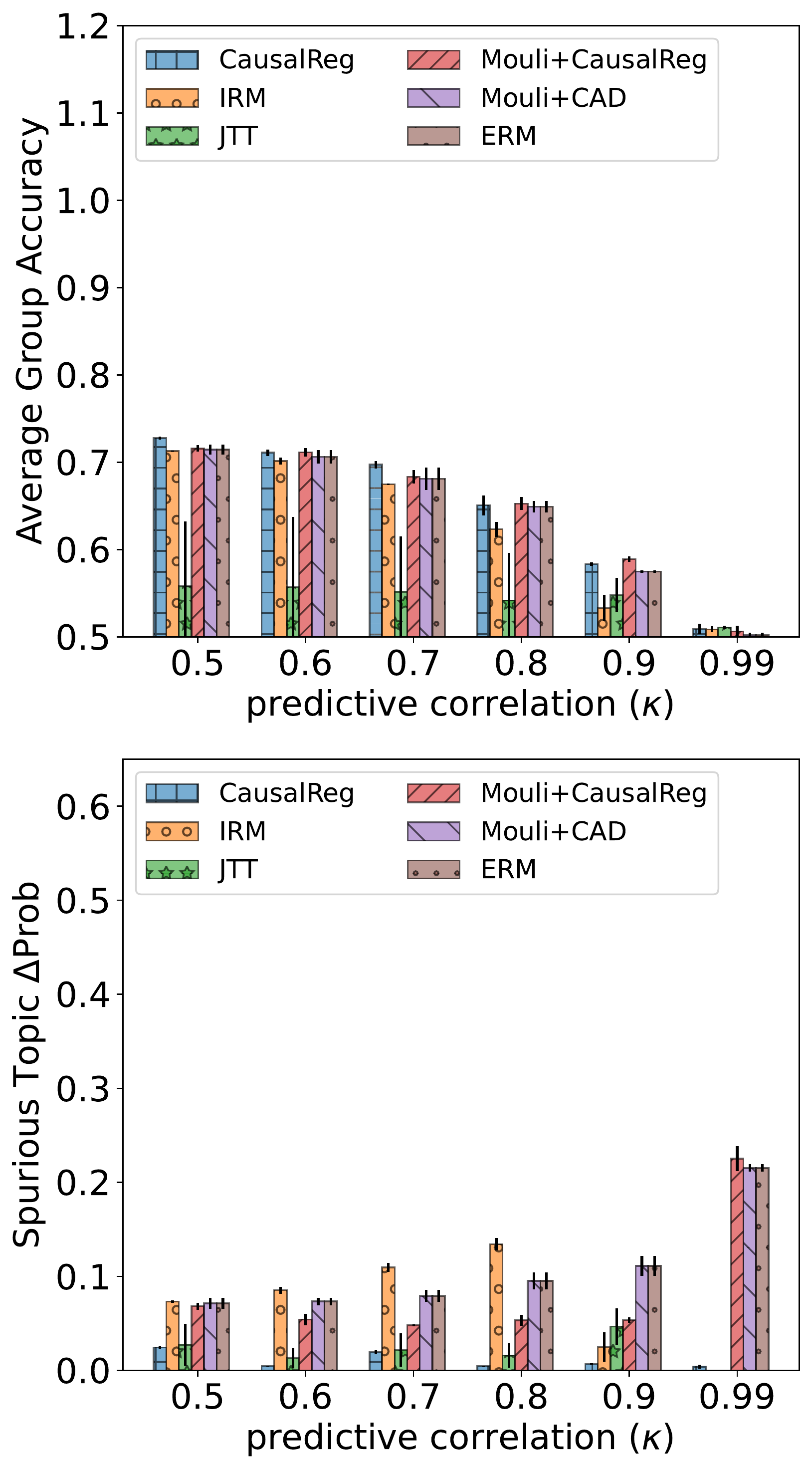}
		\caption[]%
        {{\small \mnist }}    
		\label{fig:main_mnist_selection_spurious_known} 
  \end{subfigure}
\hfill
  \begin{subfigure}[h]{.32\textwidth}
    \captionsetup{justification=centering}
    \centering
    \includegraphics[width=\linewidth,height=1.7\linewidth]{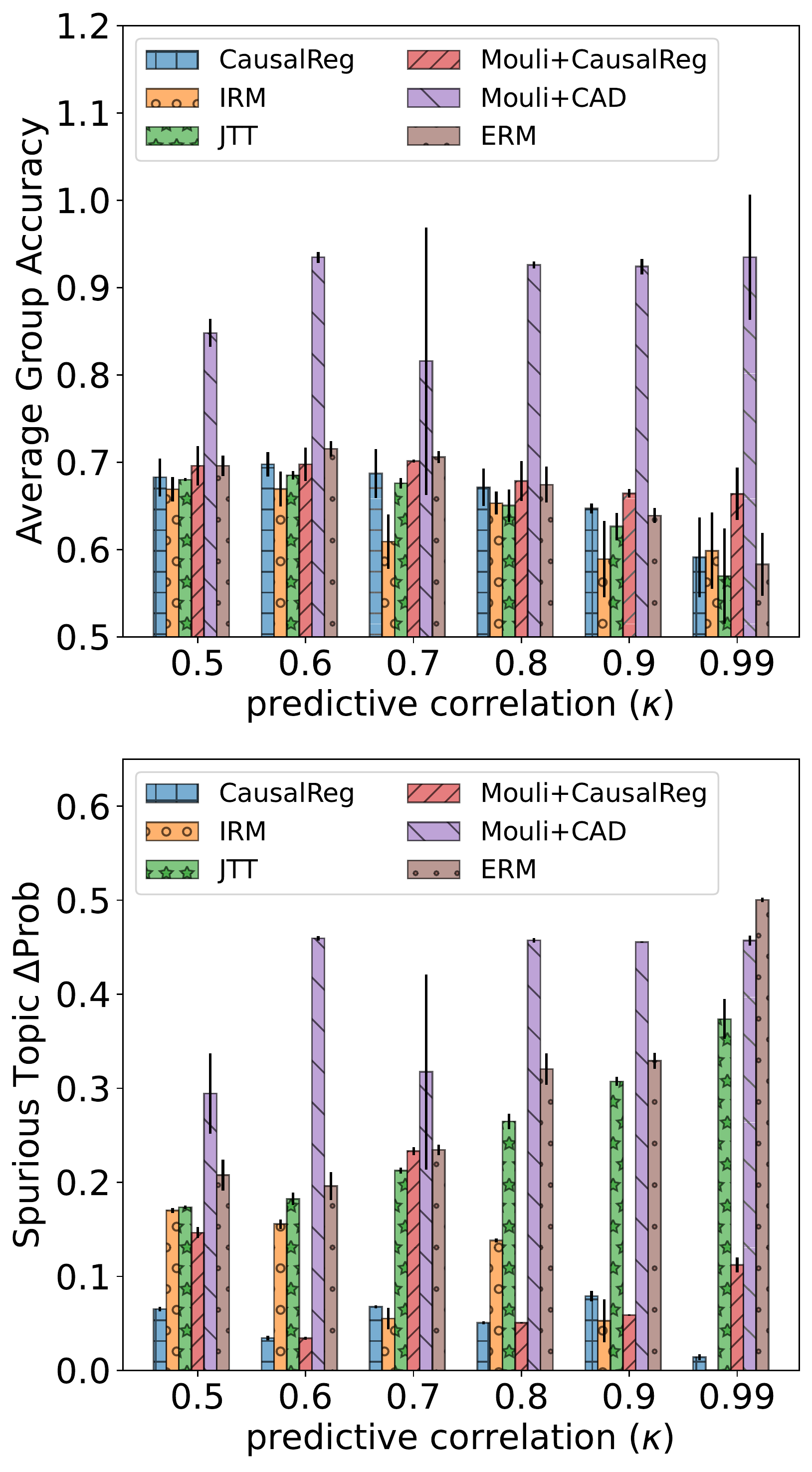}
		\caption[]%
        {{\small \aae }}    
		\label{fig:main_aae_selection_spurious_known} 
  \end{subfigure}

\caption{\textbf{Average group accuracy (top row) and $\Delta$Prob (bottom row) for different methods when the spurious \prop is known. } x-axis denotes  predictive correlation ($\kappa$) in the dataset. We observe a similar trend as in \reffig{fig:overall_syn_mnist_aae} where none of the methods assumed that the spurious \prop is known. Compared to other baselines, \textbf{(a)} and \textbf{(b)} show that \oursshort performs better or comparable in average group accuracy and trains a classifier significantly less reliant to spurious \prop as measured by $\Delta$Prob. \mouli performs better in average group accuracy in \textbf{(c)} but relies heavily on spurious prop shown by high $\Delta$Prob whereas \oursshort performs equivalent to ERM on average group accuracy with lowest $\Delta$Prob among all. Refer to \refeqn{eq:model_sp_known_selection_crit} for the model selection criteria used to select the best model and for further discussion see \refsec{subsec:app_empresult_stage2}.}
\label{fig:overall_syn_mnist_aae_selection_spurious_known}
\end{figure}

\begin{figure}[t]
\centering
    \begin{subfigure}[h]{.32\textwidth}
        \captionsetup{justification=centering}
        \centering
            \includegraphics[width=\linewidth,height=3.25\linewidth]{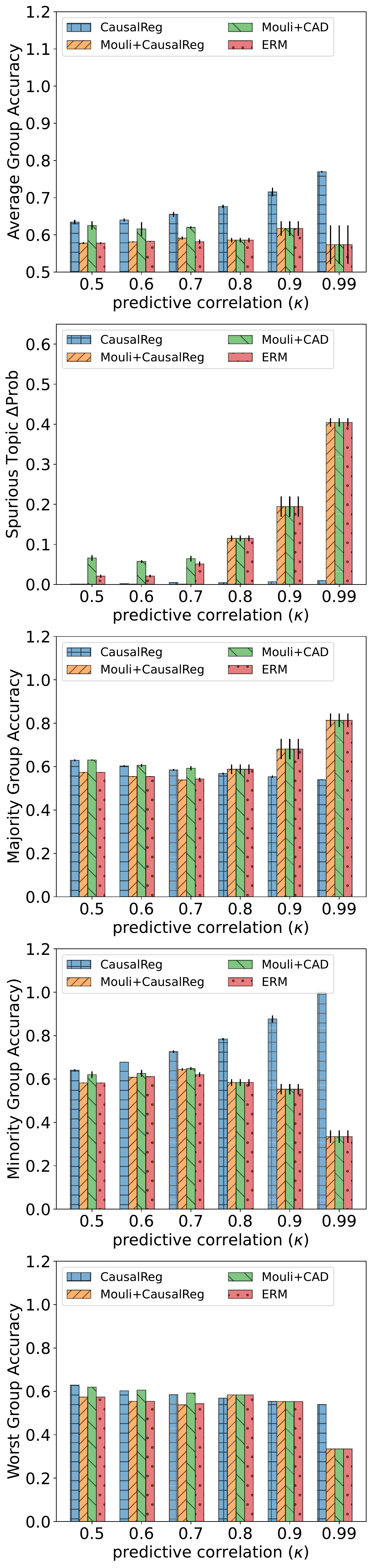}
    		\caption[]%
            {{\small \synuc }}    
    		\label{fig:main_syn_all}
    \end{subfigure}
\hfill 
  \begin{subfigure}[h]{.32\textwidth}
  \captionsetup{justification=centering}
    \centering
    \includegraphics[width=\linewidth,height=3.25\linewidth]{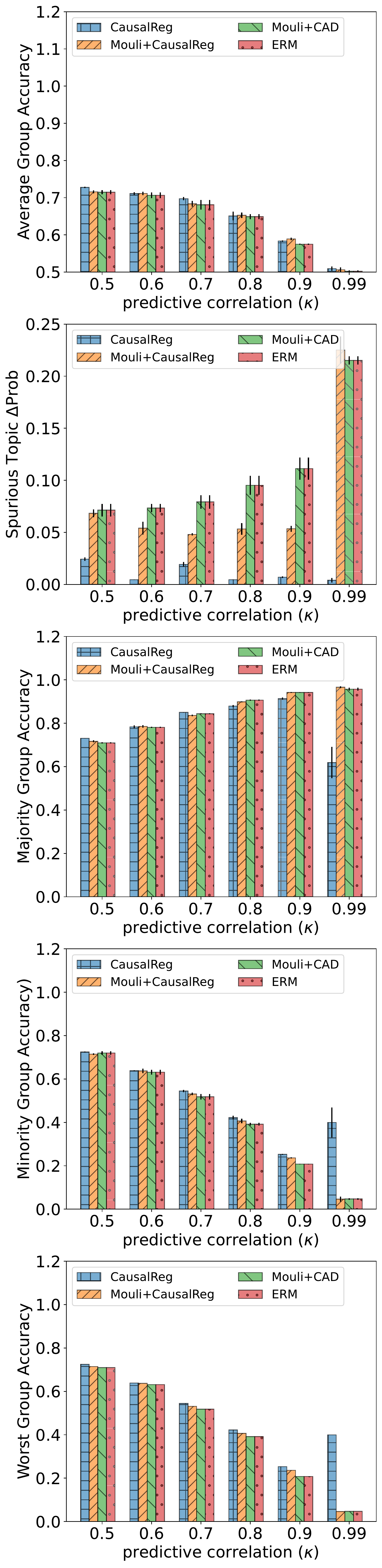}
		\caption[]%
        {{\small \mnist }}    
		\label{fig:main_mnist_all} 
  \end{subfigure}
\hfill
  \begin{subfigure}[h]{.32\textwidth}
    \captionsetup{justification=centering}
    \centering
    \includegraphics[width=\linewidth,height=3.25\linewidth]{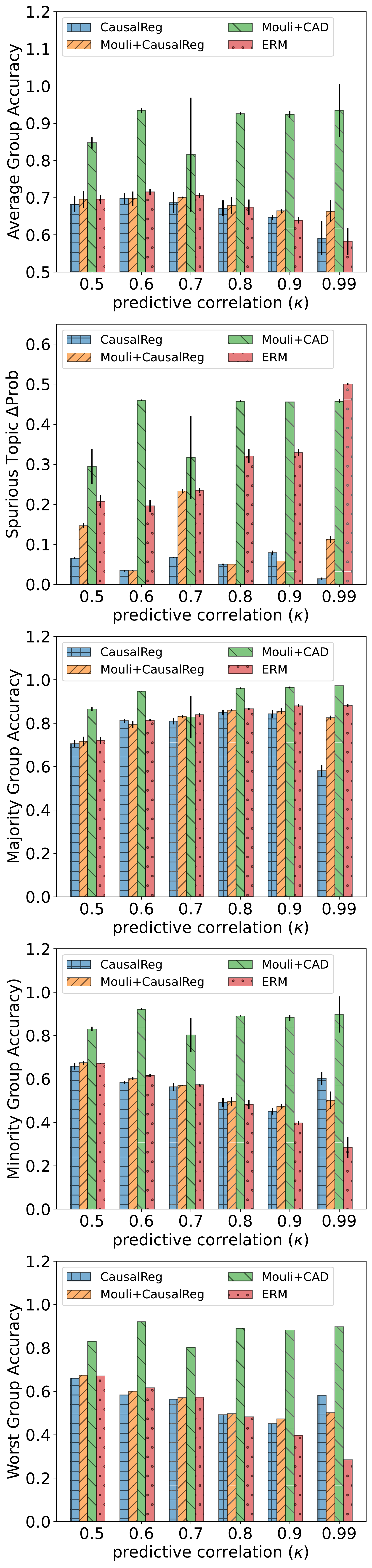}
		\caption[]%
        {{\small \aae }}    
		\label{fig:main_aae_all} 
  \end{subfigure}

\caption{\textbf{Additional evaluation metrics. } We include three additional metrics used in the past work \cite{GroupDRO} to evaluate the generalization of a classifier to spurious correlation in addition to Average group accuracy and $\Delta$Prob as used in \reffig{fig:overall_syn_mnist_aae_selection_spurious_known} --- Majority group accuracy (third row), Minority Group Accuracy (fourth row), and Worst group accuracy (bottom row). x-axis denotes  predictive correlation ($\kappa$) in the dataset. Compared to other baselines, \textbf{(a)} and \textbf{(b)} show that \oursshort performs better in minority group accuracy, comparable or better in the worst group accuracy, and trains a classifier significantly less reliant on spurious \prop as measured by $\Delta$Prob.
The majority group's accuracy is expected to increase as the predictive correlation ($\kappa$) increases since the classifier could start using the spurious \prop for making predictions. In \textbf{(a)}, the majority group accuracy remains almost similar even as the $\kappa$ increases whereas it increases for other baselines. This shows that \oursshort trains a classifier that doesn't rely on spurious \prop even at high values of $\kappa$. Similarly, in \textbf{(b)}, the majority group accuracy of a classifier trained by other baselines is higher than \oursshort at higher values of $\kappa$ implying a higher reliance on spurious \prop.   
\mouli performs better in all group-wise accuracy in \textbf{(c)} but relies heavily on spurious prop shown by high $\Delta$Prob whereas \oursshort performs equivalent to ERM on all group-wise accuracy with lowest $\Delta$Prob among all. See \refsec{subsec:app_empresult_stage2} for further discussions and refer to \refeqn{eq:model_sp_known_selection_crit} for the model selection criteria used to select the best model.}
\label{fig:app_overall_syn_mnist_aae_all}
\end{figure}

\begin{figure}[t]
\centering

\includegraphics[width=\linewidth,height=0.93\linewidth]{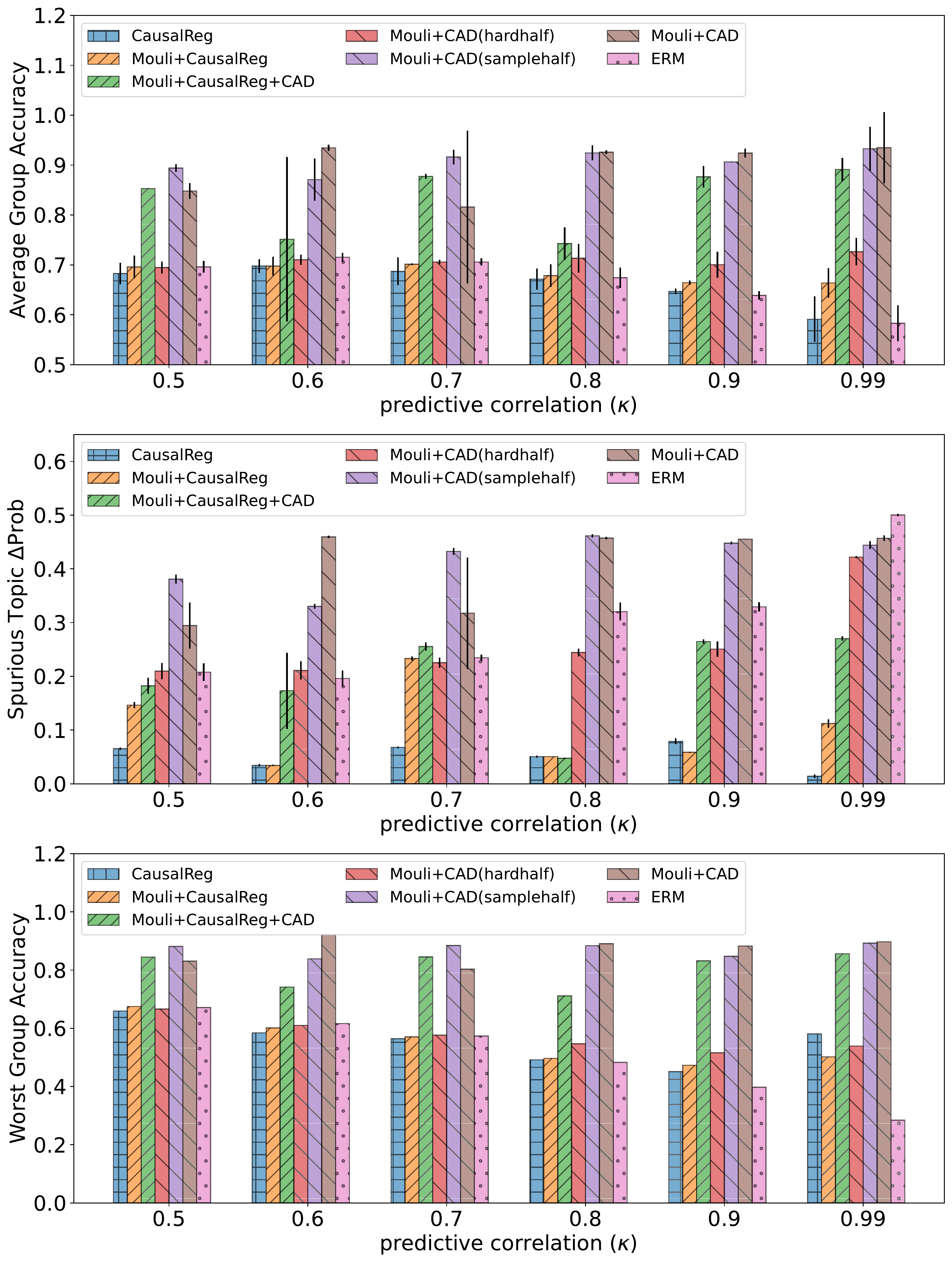}

\caption{\textbf{Combining \oursshort and \mouli gives good trade-off in Accuracy and $\Delta$Prob: }
The x-axis denotes  predictive correlation ($\kappa$) in the dataset and the y-axis shows different evaluation metrics. 
\mouli performs better in average group accuracy whereas \oursshort performs better in $\Delta$Prob (see \refsec{subsec:app_empresult_stage2} and \reffig{fig:overall_syn_mnist_aae_selection_spurious_known} for more discussion). 
Thus counterfactual data augmentation (CAD) might not be the best method to impose invariance with respect to a spurious \prop. 
Thus we augment \mouli with \oursshort (denoted as \emph{Mouli+CausalReg+CAD} and shown by the green bar) to get the best out of both worlds. 
\emph{Mouli+CausalReg+CAD} show similar increase in average group accuracy as \mouli and at the same time have lower $\Delta$Prob than \mouli and \erm. We experiment with two different settings where we halve the number of samples used to train the task classifier to test whether this gain in average group accuracy is due to increased sample size and thus the number of gradient updates when we perform CAD. \mouli(hardhalf) shows the setting when we reduce the initial dataset before augmentation to half and thus the effective dataset size and thus the number of gradient updates becomes the same after augmentation as used in other baselines. In this setting, we observe that the average group accuracy becomes equal to other baselines where we don't perform CAD. In the second setting, \mouli(samplehalf), we randomly subsample half of the counterfactually augmented dataset in every training iteration. Thus the effective number of gradient updates is the same as other baselines. In this setting, we observe similar behavior as \mouli in both average group accuracy and $\Delta$Prob metrics. These two experiments thus suggest that the number of gradient updates (better training) is not the main reason behind the gain of the average group accuracy when using counterfactual data augmentation. We leave the question for future work of understanding the mechanism behind how CAD is able to perform better on average group accuracy without completely imposing invariance with respect to spurious \props. See \refsubsec{subsec:app_empresult_stage2} for further discussions and refer to \refeqn{eq:model_sp_known_selection_crit} for the model selection criteria used to select the best model.
} 
\label{fig:app_aae_additional}
\end{figure}

\begin{figure}[t]
\centering
    \begin{subfigure}[h]{.32\textwidth}
        \captionsetup{justification=centering}
        \centering
            \includegraphics[width=\linewidth,height=3.6\linewidth]{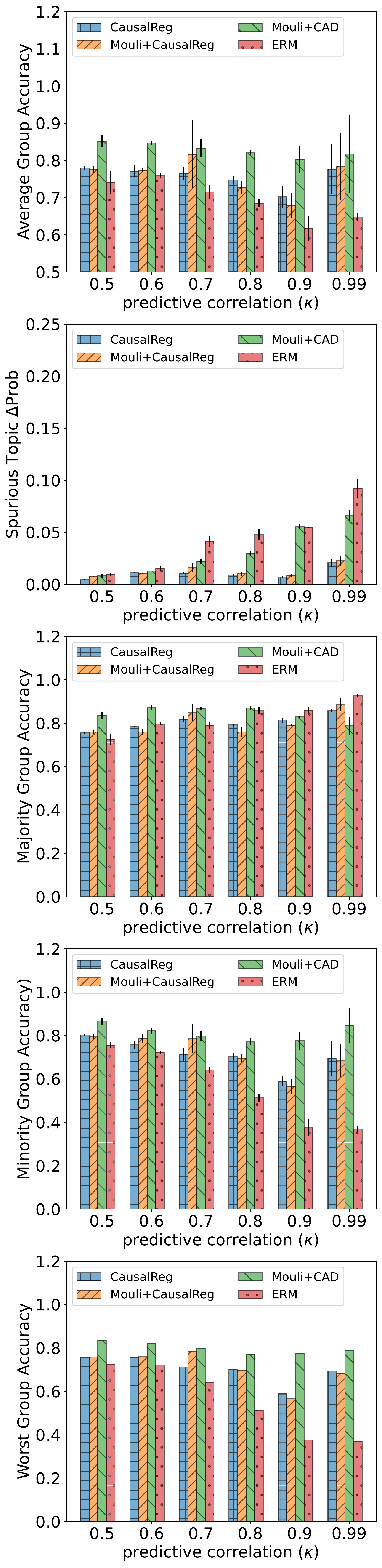}
    		\caption[]%
            {{\small \civilrace }}    
    		\label{fig:main_civilrace_all}
    \end{subfigure}
\hfill 
  \begin{subfigure}[h]{.32\textwidth}
  \captionsetup{justification=centering}
    \centering
    \includegraphics[width=\linewidth,height=3.6\linewidth]{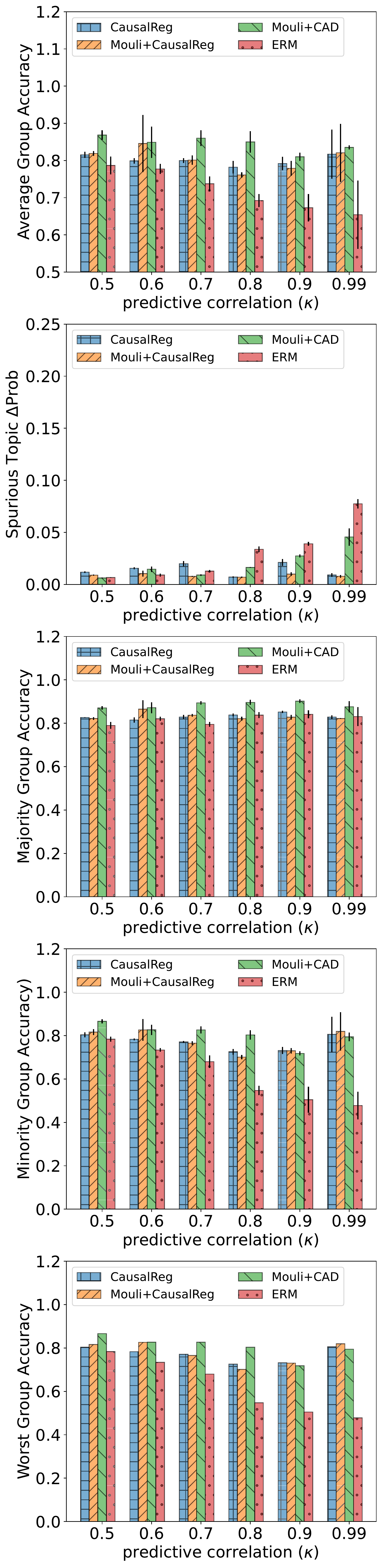}
		\caption[]%
        {{\small \civilgender }}    
		\label{fig:main_civilgender_all} 
  \end{subfigure}
\hfill
  \begin{subfigure}[h]{.32\textwidth}
    \captionsetup{justification=centering}
    \centering
    \includegraphics[width=\linewidth,height=3.6\linewidth]{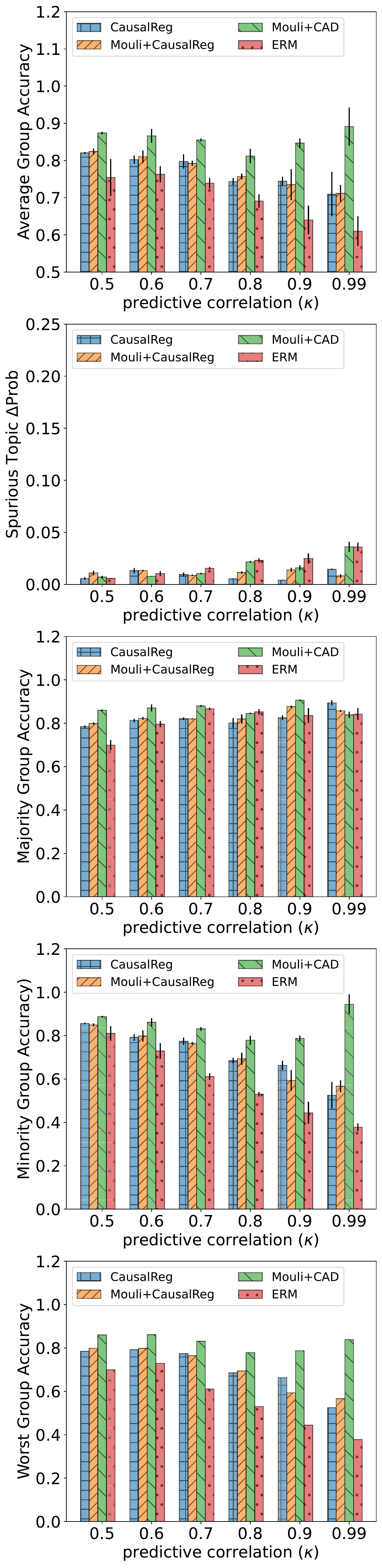}
		\caption[]%
        {{\small \civilreligion }}    
		\label{fig:main_civilreligion_all} 
  \end{subfigure}

\caption{\textbf{\oursshort and other baselines on \civil datasets:}
Different columns show the results on different subsets of \civil dataset i.e. \emph{race, gender} and \emph{religion} (see \ref{subsec:app_dataset} for details dataset). 
Different rows show different evaluation metrics we consider to evaluate the performance. The x-axis denotes  predictive correlation ($\kappa$) in the dataset.
Unlike \aae dataset (\reffig{fig:main_aae_all} and \ref{fig:overall_syn_mnist_aae_selection_spurious_known} where there was a significant gain in average group accuracy when using the \mouli, \oursshort performs comparably to \mouli on all the subsets of \civil datasets. Even though the $\Delta$Prob metric is already small for all the methods ($\leq 0.10$), \oursshort further reduces it close to $0$. On all the individual group accuracy metrics \oursshort performs better than ERM and is comparable to other baselines. See \refsec{subsec:app_empresult_stage2} for more discussions and refer to \refeqn{eq:model_sp_known_selection_crit} for the model selection criteria used to select the best model. 
}
\label{fig:app_overall_civil_all}
\end{figure}

\begin{figure}[t]
\centering

\includegraphics[width=\linewidth,height=0.95\linewidth]{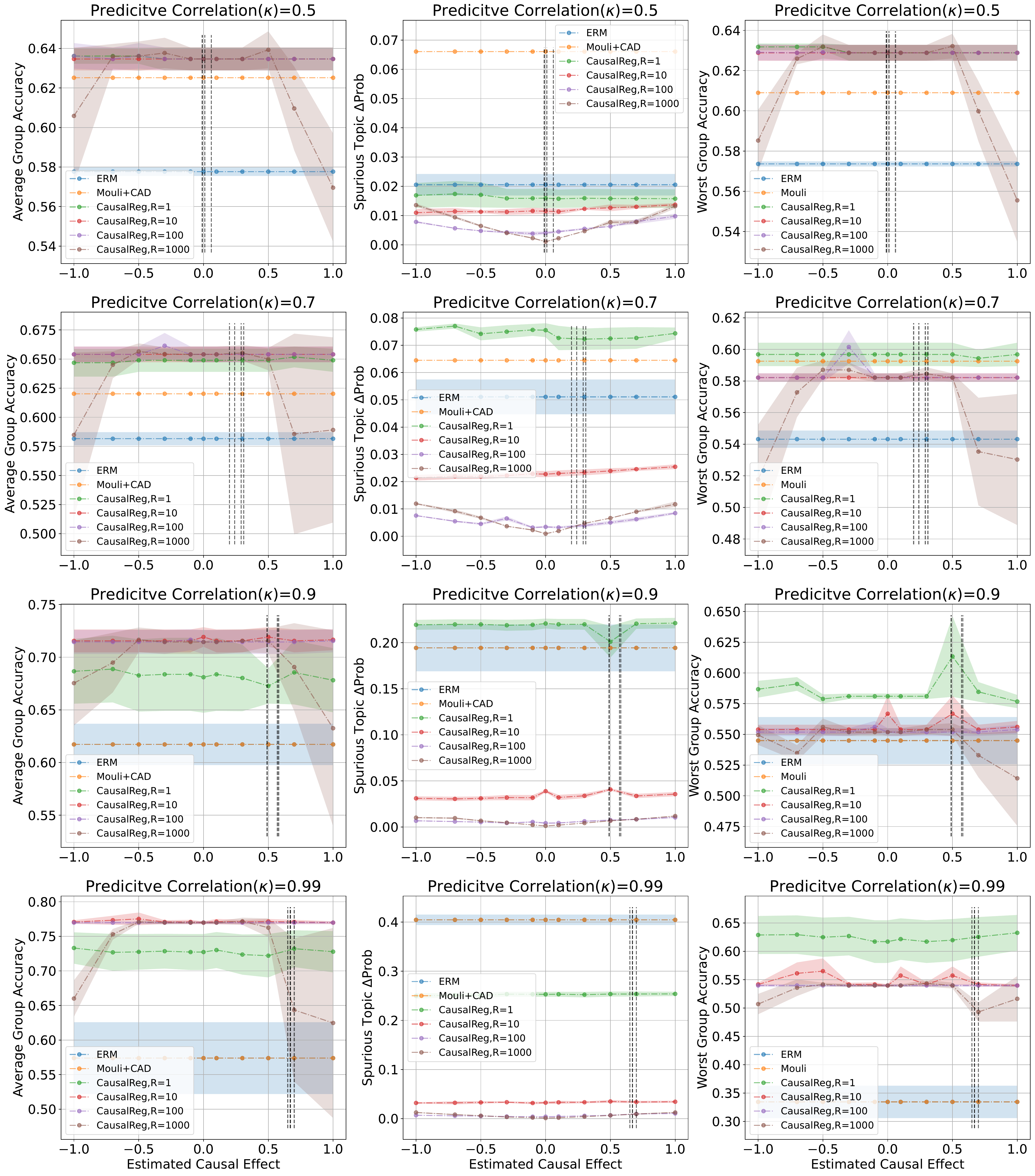}

\caption{\textbf{Robustness of \oursshort w.r.t estimated causal effect on \synuc: }
Given a \prop, in Stage 1 our method \oursshort estimates the causal effect of \prop on the task label as a continuous measure of \emph{spuriousness} of the \prop. Then in Stage 2, it regularizes the classifier proportional to the estimated causal effect. Since our model relies heavily on the estimate of the causal effect of the \prop it is important to analyze the robustness of \oursshort with noise in the estimation step. In this figure, we show the sensitivity of \oursshort with varying levels of noise in the causal effect estimate of the spurious \prop. The true causal effect of spurious \prop is 0 and the x-axis shows the value of the causal effect used in Stage 2. The different row shows the result for different levels of predictive correlation ($\kappa$). Different columns show different evaluation metrics --- average group accuracy, $\Delta$Prob, and worst group accuracy. Different colored lines show different baselines or different regularization strengths ($R$) used by \oursshort. 
The dotted vertical black line in every plot shows the estimated causal effect using the Direct or Riesz estimator in Stage 1.
We observe that \oursshort is less sensitive to the error in the estimated causal effect of spurious \prop. For example, for $\kappa=0.99$ (last row), the green curve (\oursshort with $R=10$) is almost constant for all values of causal effect used in Stage 2 for spurious \prop, has better average and worst group accuracy than \erm and \mouli and has a very low value of $\Delta$Prob. Even when we regularize the classifier to have a causal effect of $1$ in Stage 2, the $\Delta$Prob remain close to zero demonstrating some form of \emph{self-correction}.  \reftheorem{theorem:lambda_sufficient} hints a possible reason for this robustness --- our method will always select the correct classifier (less reliant on spurious \prop) as long as the ranking of the causal effect of spurious and causal \prop is consistent up to some constant factor. See \refsec{subsec:app_empresult_stage2} for detailed discussion, \reffig{fig:mnist_spectrum} and \ref{fig:aae_spectrum} for similar analysis on \mnist and \aae dataset respectively and refer to \refeqn{eq:model_sp_known_selection_crit} for the model selection criteria used to select the best model. 
} 
\label{fig:syntextuc_spectrum}
\end{figure}

\begin{figure}[t]
\centering

\includegraphics[width=\linewidth,height=\linewidth]{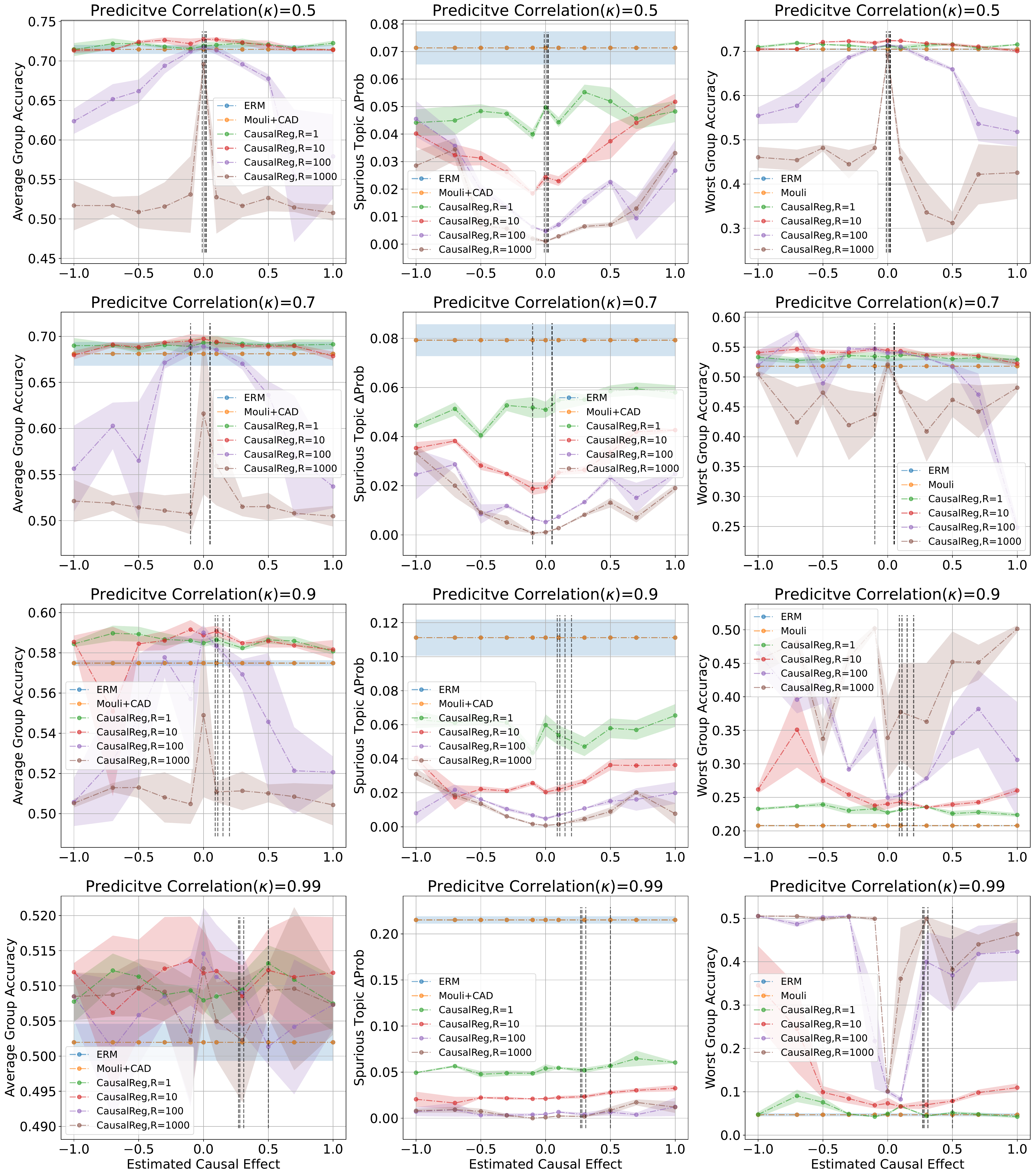}

\caption{\textbf{Robustness of \oursshort w.r.t estimated causal effect on \mnist: }
In this figure, we show the sensitivity of \oursshort with varying levels of noise in the causal effect estimate of the spurious \prop. The true causal effect of spurious \prop (\emph{rotation}) is 0 and the x-axis shows the value of the causal effect for this \prop used for regularization in Stage 2. The different row shows the result for different levels of predictive correlation ($\kappa$). Different columns show different evaluation metrics --- average group accuracy, $\Delta$Prob, and worst group accuracy. Different colored lines in every plot show different baselines or different regularization strengths ($R$) used by \oursshort. 
The dotted vertical black line shows the estimated causal effect using the Direct or Riesz estimator in Stage 1. 
Similar to \reffig{fig:syntextuc_spectrum} we observe that \oursshort is robust to error in the estimated causal effect of the spurious \prop. For example, when $\kappa=0.99$, for all the settings of \oursshort (with different regularization strength $R$), the average group accuracy is almost constant and better than the \erm and \mouli for all values of causal effect used for the spurious \prop. Similarly, the $\Delta$Prob is constant and lower than  \erm and \mouli for all values of the estimated causal effect. For discussion on possible explanation see \reffig{fig:syntextuc_spectrum} and \refsec{subsec:app_empresult_stage2}, and refer to \refeqn{eq:model_sp_known_selection_crit} for the model selection criteria used to select the best model.
} 
\label{fig:mnist_spectrum}
\end{figure}

\begin{figure}[t]
\centering

\includegraphics[width=\linewidth,height=\linewidth]{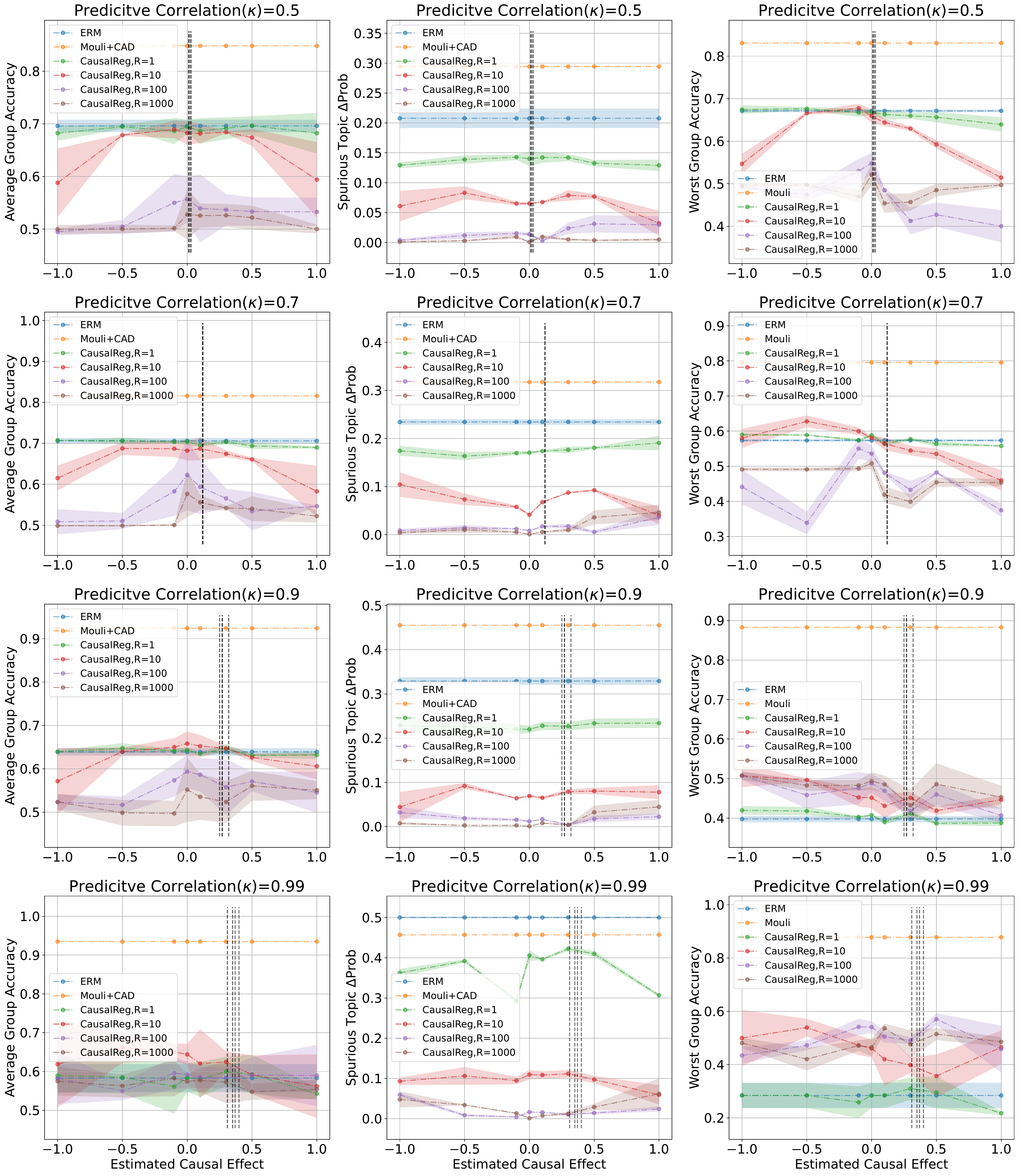}

\caption{\textbf{Robustness of \oursshort w.r.t estimated causal effect on \aae: }
In this figure, we show the sensitivity of \oursshort with varying levels of noise in the causal effect estimate of the spurious \prop. The expected true causal effect of spurious \prop (\emph{race}) is 0 and the x-axis shows the value of the causal effect for this \prop used for regularization in Stage 2. The different row shows the result for different levels of predictive correlation ($\kappa$). Different columns show different evaluation metrics --- average group accuracy, $\Delta$Prob, and worst group accuracy. Different colored lines in every plot show different baselines or different regularization strengths ($R$) used by \oursshort. 
The dotted vertical black line shows the estimated causal effect using the Direct or Riesz estimator in Stage 1. 
Similar to \reffig{fig:syntextuc_spectrum} and \ref{fig:mnist_spectrum}, we observe that \oursshort is not sensitive to noise in the estimated causal effect of the spurious \prop. The average group accuracy and worst group accuracy remain almost constant with increasing noise in the estimate, though they are lower as compared to \mouli. But \oursshort is less sensitive to noise in estimated effect and is significantly lower than \erm and \mouli for all values of $\kappa$ in $\Delta$Prob metric. 
For further discussions see \reffig{fig:syntextuc_spectrum}, \ref{fig:mnist_spectrum} and \refsec{subsec:app_empresult_stage2}, and refer to \refeqn{eq:model_sp_known_selection_crit} for the model selection criteria used to select the best model.
} 
\label{fig:aae_spectrum}
\end{figure}

\begin{figure}[t]
\centering

\includegraphics[width=\linewidth,height=0.66\linewidth]{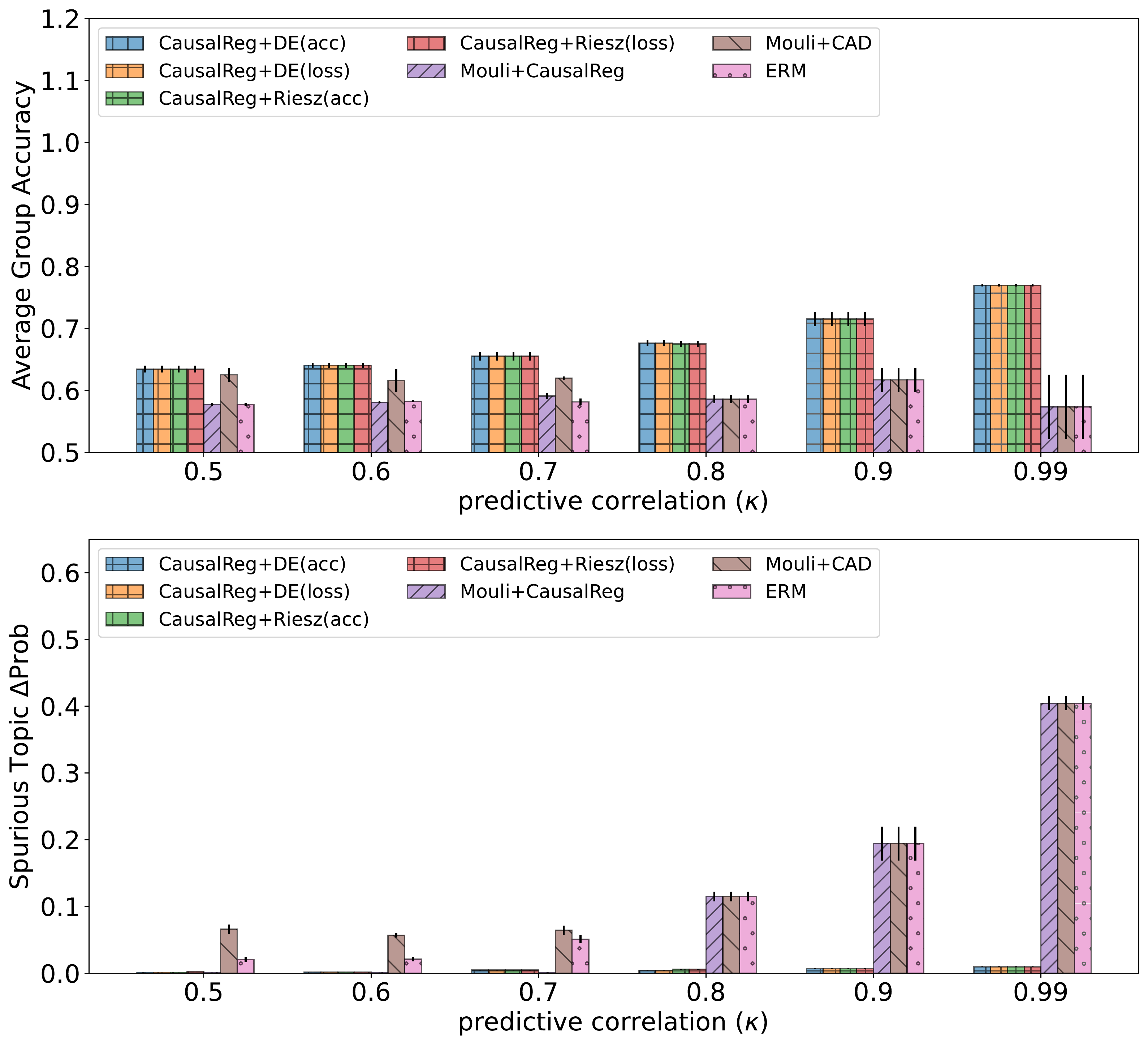}

\caption{\textbf{Performance of \oursshort using different causal effect estimators for \synuc:} 
\oursshort uses causal effect estimator to determine the \emph{spuriousness} of a given \prop in Stage 1 and then regularize the classifier proportional to the estimated effect in Stage 2. In this work, we consider two different estimators --- Direct and Riesz --- each with two ways of selecting the best estimate (using validation loss or accuracy, see \refsec{subsec:app_causal_effect_estimators} for details).
In \reffig{fig:main_syn_selection_spurious_known}, we select the causal effect estimator that performs best in Stage 2. Thus it is not clear from \reffig{fig:main_syn_selection_spurious_known} if any particular estimator is more suitable to estimate the causal effect in a particular dataset. In this figure, we show the performance of \oursshort using all the different estimators separately. 
DE($\cdot$) is the \emph{Direct} estimator and Riesz($\cdot$) is the \emph{Riesz} estimator with their two different settings. 
The x-axis shows the predictive correlation ($\kappa$) in the dataset and the y-axis shows average group accuracy and $\Delta$Prob in the top and bottom rows respectively.
We observe that \oursshort performs equivalently when using any of the available estimators on both metrics. Compared to other baselines using any of the estimator \oursshort performs better in both average group accuracy and $\Delta$Prob. This demonstrates the robustness of the method in choosing different causal effect estimators and noise in the estimated causal effect for training a classifier that generalizes to spurious \props. For more discussion see \refsubsec{subsec:app_empresult_stage2} and \reffig{fig:mnist_te_expand} and \ref{fig:aae_te_expand} for results on \mnist and \aae dataset respectively. Refer to \refeqn{eq:model_sp_known_selection_crit} for the model selection criteria used to select the best model.
}  
\label{fig:synuc_te_expand}
\end{figure}

\begin{figure}[t]
\centering

\includegraphics[width=\linewidth,height=0.66\linewidth]{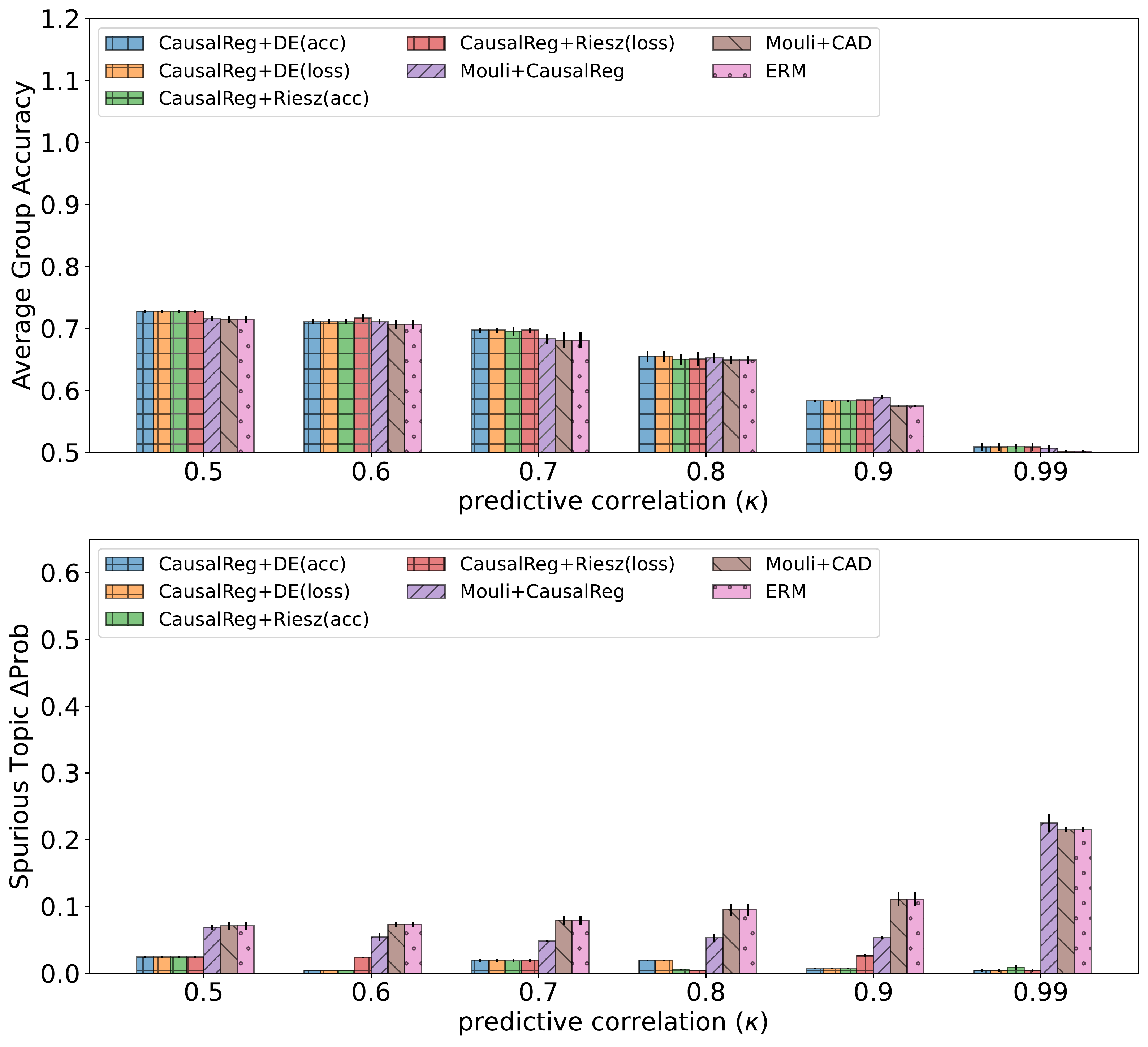}

\caption{\textbf{Performance of \oursshort using different causal effect estimators for \mnist:} 
In this work, we have considered four different causal effect estimators to determine the \emph{spuriousness} of a given \prop (see \refsec{subsec:app_causal_effect_estimators}).
\reffig{fig:main_mnist_selection_spurious_known} aggregates the performance of \oursshort by selecting the estimator that gives best performance.
Thus we expand the results in \reffig{fig:main_mnist_selection_spurious_known} to demonstrate the performance of \oursshort when using the causal effect estimate from different estimators individually.
DE($\cdot$) is the \emph{Direct} estimator and Riesz($\cdot$) is the \emph{Riesz} estimator with their two different settings. 
The x-axis shows the predictive correlation ($\kappa$) in the dataset and the y-axis shows average group accuracy and $\Delta$Prob in the top and bottom rows respectively.
We observe that \oursshort performs equivalently when using any of the available estimators on both metrics. Compared to other baselines using any of the estimator \oursshort performs better in both average group accuracy and $\Delta$Prob. This demonstrates the robustness of the method in choosing different causal effect estimators and  noise in the estimated causal effect for training a classifier that generalizes to spurious \props. For more discussion see \refsubsec{subsec:app_empresult_stage2} and \reffig{fig:synuc_te_expand} and \ref{fig:aae_te_expand} for results on \synuc and \aae dataset respectively. Refer to \refeqn{eq:model_sp_known_selection_crit} for the model selection criteria used to select the best model.
} 
\label{fig:mnist_te_expand}
\end{figure}

\begin{figure}[t]
\centering

\includegraphics[width=\linewidth,height=0.66\linewidth]{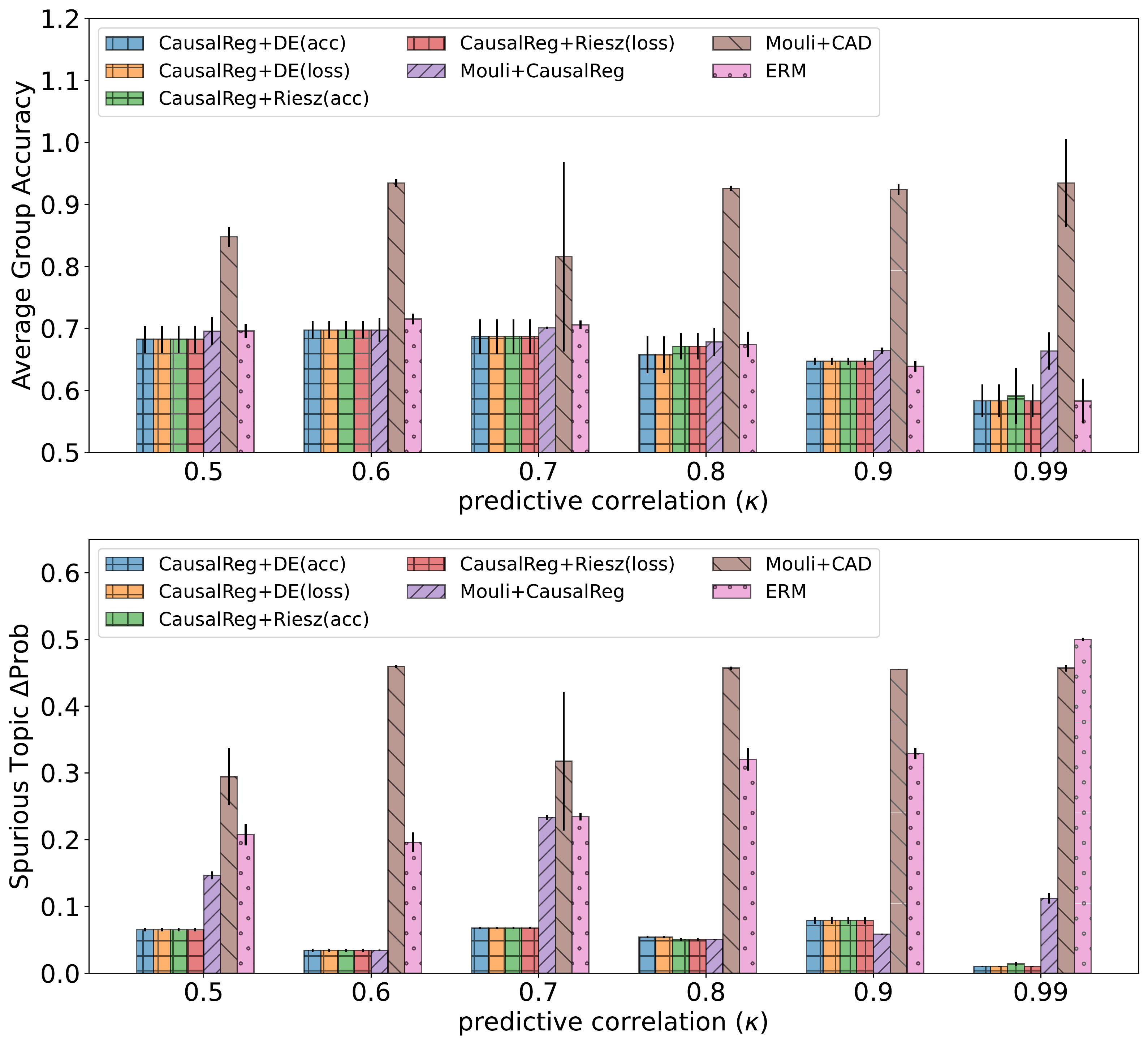}

\caption{\textbf{Performance of \oursshort using different causal effect estimators for \aae:} 
In this work, we have considered four different causal effect estimators to determine the \emph{spuriousness} of a given \prop (see \refsec{subsec:app_causal_effect_estimators}).
\reffig{fig:main_aae_selection_spurious_known} aggregates the performance of \oursshort by selecting the estimator that gives best performance.
Thus we expand the results in \reffig{fig:main_aae_selection_spurious_known} to demonstrate the performance of \oursshort when using the causal effect estimate from different estimators individually. DE($\cdot$) is the \emph{Direct} estimator and Riesz($\cdot$) is the \emph{Riesz} estimator with their two different settings. 
The x-axis shows the predictive correlation ($\kappa$) in the dataset and the y-axis shows average group accuracy and $\Delta$Prob in the top and bottom rows respectively.
Similar to the trend in \reffig{fig:synuc_te_expand} and \ref{fig:mnist_te_expand}, \oursshort performs equivalently when using different causal effect estimators. For more discussions see \refsubsec{subsec:app_empresult_stage2} and refer to \refeqn{eq:model_sp_known_selection_crit} for the model selection criteria used to select the best model.
\abhinav{correct the plots: From DE to Direct}
} 
\label{fig:aae_te_expand}
\end{figure}

\subsection{Extended Evaluation of Stage 2: \oursshort and other baselines assuming the Spurious \prop is known}
\label{subsec:app_empresult_stage2}
\new{
In \refsec{subsec:empresult_stage2} and \reffig{fig:overall_syn_mnist_aae} we compared \oursshort with other baselines where we used overall accuracy as the selection criteria to select the best model (see \refeqn{eq:model_accuracy_selection_crit}). However, some of the baselines (IRM and JTT) assume that spurious \props are known and use this information to train and select the best model that generalizes under spurious correlations. Thus, in this section for a fair comparison, we re-evaluate all the methods on all the datasets again by providing them with the knowledge of spurious attributes. Refer to \refeqn{eq:model_sp_known_selection_crit} for details on how we select the best model with this information. \reffig{fig:overall_syn_mnist_aae_selection_spurious_known} summarizes the result for \oursshort and other baseline with this selection criteria.
}

\paragraph{Other evaluation metrics for \synuc, \mnist and \aae dataset}
We extend the evaluation in \reffig{fig:overall_syn_mnist_aae_selection_spurious_known}, by adding the comparison of the performance of \oursshort and other baselines over these accuracy metrics for \synuc, \mnist, and \aae datasets in \reffig{fig:app_overall_syn_mnist_aae_all}. Overall, \oursshort performs comparable or better than other baselines for \synuc and \mnist datasets. In \aae dataset, \mouli performs better in worst group and minority group accuracy than \oursshort. But \oursshort has significantly lower $\Delta$Prob compared to other baselines signifying that the learned classifier is invariant to the spurious \prop.

\paragraph{Evaluation of \oursshort on \civil dataset. }
We evaluate our method on \civil dataset --- a real-world dataset where given a sentence the task is to predict the toxicity of the sentence. We have three different subsets of this dataset, \civilrace, \civilgender, and \civilreligion where the \prop \emph{race}, \emph{gender} and \emph{religion} are the spuriously correlated with task label (see \refsec{subsec:app_dataset} for details). \reffig{fig:app_overall_civil_all} compares \oursshort with other baselines over a number of different metrics. Unlike \aae dataset where \mouli gave significant gains in the average group accuracy, \oursshort performs comparably to \mouli and improves over \erm over this metric on all the subsets. Although, $\Delta$Prob is low for all the baseline methods, \oursshort again gives a classifier with much lower $\Delta$Prob, especially for high values of predictive correlation ($\kappa$). On all subsets of \civil dataset, \oursshort also performs comparably to other  baselines over other accuracy metrics --- minority group accuracy and worst group accuracy --- used to evaluate the efficacy of generalization over spurious \prop \cite{GroupDRO}.

\paragraph{Counterfactual Data Augmentation is not sufficient for imposing correct invariance. }
\reffig{fig:overall_syn_mnist_aae_selection_spurious_known} gives us a hint that \mouli might not be sufficient for learning a model that is invariant to spurious \prop. For \aae dataset (see \reffig{fig:main_aae_selection_spurious_known}) and \civil dataset (see \reffig{fig:app_overall_civil_all}), \oursshort has comparable or slightly lower average group accuracy than \mouli. But \oursshort has a significantly lower value of $\Delta$Prob than other baselines and thus is able to impose correct invariance w.r.t spurious \prop. Thus it seems that counterfactual data augmentation used in \mouli might be using some other mechanism to perform better on average group accuracy than imposing invariance w.r.t to spurious \prop. In \reffig{fig:app_aae_additional} we combine \mouli and \oursshort to get the best out of both worlds (denoted as \emph{Mouli+CausalReg+CAD}). As expected we observed that the final classifier has a significant gain in the average group accuracy and has $\Delta$Prob than \erm classifier. We leave the task of understanding how CAD gives better gains in average group accuracy to future work. See \reffig{fig:app_aae_additional} for details of other additional details of experiments with CAD that might shed some light on this question.

\paragraph{Robustness of \oursshort on error in detection of spurious \props. }
\oursshort follows a two-step procedure to train a classifier that generalizes to spurious correlation. First, in Stage 1, it estimates the causal effect of the given \prop on the task label to determine a continuous degree of \emph{spuriouness} for that \prop. Then in Stage 2,  it regularizes the classifier proportional to the estimated causal effect with the goal of imposing invariance with respect to spurious \prop. 
Since \oursshort is heavily reliant on the estimate of causal effect for correctly enforcing the invariance with respect to spurious \prop we need to analyze its robustness to noise in the estimation of causal effect. 
For simpler classifiers, our \ref{theorem:lambda_sufficient} shows that we don't need a correct causal effect estimate to learn a classifier invariant to spurious correlation. For the case when we have two high dimensional variables -- one causal and another spurious --- as long as the ranking of the causal effect of causal and spurious \prop is correct \oursshort will learn a classifier completely invariant to spurious \prop. 
This result on a simpler classification task hinted that we don't need a completely correct estimate of \prop to learn a more complicated invariant classifier trained on a real-world dataset. 
We conduct an experiment where we knowing use a range of non-zero causal effects for spurious \prop (ground truth causal effect is zero) to regularize the model using \oursshort. We observe that \oursshort is less sensitive to even large noise in the causal effect ($>0.5$) on the metrics like average group accuracy and $\Delta$Prob for a large range of regularization strength (R). Even with a large error in the causal effect \oursshort trains a classifier with high average group accuracy and low $\Delta$Prob. \reffig{fig:syntextuc_spectrum}, \ref{fig:mnist_spectrum} and \ref{fig:aae_spectrum} summarizes the result for \synuc, \mnist and \aae dataset. Also, we show the robustness of \oursshort when using the causal effect estimates from different estimators we consider in our work in \reffig{fig:synuc_te_expand}, \ref{fig:mnist_te_expand} and \ref{fig:aae_te_expand} for these datasets.

\end{document}